\newtheorem{thm}{Theorem}[section]
\newtheorem{proof}{Proof}
\newtheorem{prop}[thm]{Proposition}
\newtheorem{defn}[thm]{Definition}
\newtheorem{rem}[thm]{Remark}
\newcommand{\pair}[2]{#1\hspace{1.5mm}$\vert$\hspace{1.5mm}#2}
\title{Post-Training Augmentation Invariance}
\author{Keenan Eikenberry\thanks{Corresponding author}\textsuperscript{\hspace{1.5mm}}\thanks{Department of Mathematics, Dartmouth College. {\ttfamily \{Keenan.J.Eikenberry, Lizuo.Liu, Yoonsang.Lee\}@dartmouth.edu}} \and Lizuo Liu\footnotemark[2] \and Yoonsang Lee\footnotemark[2]}
\date{}
\begin{document}

\maketitle

\begin{abstract}
This work develops a framework for post-training augmentation invariance, in which our goal is to add invariance properties to a pretrained network without altering its behavior on the original, non-augmented input distribution. We define this notion precisely and additionally introduce augmented encoders, which are probabilistic encoders that formalize augmentation-based encoding processes and that serve as our fundamental object of study. We introduce two losses for augmented encoders, namely, Markov-Wasserstein minimization and Wasserstein correlation maximization, and we demonstrate empirically that both losses can be used to train lightweight, one-hidden-layer MLP adapter networks $E_{\theta}$ that, when appended to the latent space of a pretrained network $F$, do indeed lead to (approximate) post-training augmentation invariance. For example, on STL10 with $F=\text{DINO}$ features, the composite network $C\circ E_{\theta}\circ F$, where $C$ is a linear classifier and where $E_{\theta}$ is one of our proposed adapter networks, achieves $94\%$ classification accuracy on arbitrarily rotated images, whereas a network of the form $C\circ F$ without the adapter $E_{\theta}$ drops to $71\%$ accuracy. Similarly, we can boost noise-invariant classification results from $58\%$ up to $86\%$. Significantly, we obtain these results with no fine-tuning (the weights of $F$ remain frozen throughout), and our methods introduce little corruption to the original features, since $E_{\theta}$ acts nearly isometrically on the non-augmented latent distribution. In contrast, we show that adapter networks trained with alternative candidate losses, specifically SimCLR and HSIC maximization, produce uncompetitive classification results and fundamentally corrupt the original latent space. Code available at \url{https://github.com/keenan-eikenberry/augmentation_invariance}.
\end{abstract}

\section{Introduction}
\noindent Large-scale pretrained models have become the foundation of modern machine learning pipelines. For vision, models such as vision transformers \citep{dosovitskiy2020image}, CLIP \citep{radford2021learning}, DINO \citep{caron2021emerging}, and masked autoencoders \citep{he2022masked} are trained on massive datasets and provide general-purpose features that tend to transfer well to a variety of downstream tasks, including classification, object detection, feature matching, segmentation, and more. However, pretrained models may not be invariant to augmentations that could be critical for various applications. For example, motion estimation or feature matching for satellite or medical images may require features that are invariant to rotations, or perhaps to affine or perspective transformations, more generally. We may also need strong robustness to noise or adversarial perturbations, both of which may be regarded as non-invertible augmentations that introduce information loss. However, any given model is not necessarily guaranteed to satisfy these requirements.  

The motivating question of this work, then, is as follows: \textbf{Can we make a pretrained model invariant to specific augmentations post-training without corrupting its existing capabilities?} This problem, which we call \textbf{post-training augmentation invariance}, is distinct from general representation learning and poses unique challenges. Retraining foundation-scale models to be invariant to various augmentations is computationally prohibitive and risks degrading the learned representations, which may depend on particular augmentation pipelines for success, such as the central use of crops in many self-supervised learning frameworks. Additionally, \citet{kumar2022fine} show that standard fine-tuning methods can distort pretrained features, sometimes significantly. They develop an effective method for combining linear probe methods with full fine-tuning, but we will show in this work that, at least for the goal of augmentation invariance, even simpler methods can dramatically improve performance without any fine-tuning at all. That is, we leave the weights of the pretrained network completely frozen and unchanged and instead only append lightweight adapter networks to the latent space.

\subsection{Contributions} In this work, we provide a comprehensive framework and initial solution to the problem of post-training augmentation invariance. Though our formal definitions are math-heavy and require machinery from measure theory and optimal transport, we highlight the following experimental results to emphasize the practical value of our work: On STL10 with $F=\text{DINO}$ features, the composite network $C\circ E_{\theta}\circ F$, where $C$ is a linear classifier and where $E_{\theta}$ is one of our proposed, lightweight adapter networks, achieves $94\%$ classification accuracy on arbitrarily rotated images, whereas a network of the form $C\circ F$ without the adapter $E_{\theta}$ drops to $71\%$ accuracy. Similarly, we can boost noise-invariant classification results from $58\%$ up to $86\%$. We obtain these results with no fine-tuning (the weights of $F$ remain frozen throughout), and, perhaps most importantly, we achieve these significant increases in accuracy without corrupting the original feature space: $E_{\theta}$ acts nearly isometrically on the non-augmented latent distribution. 

Our exact contributions are as follows:

\begin{enumerate}
\item We formalize the problem of post-training augmentation invariance through the notion of $(t, \mu_X, F, V)$-invariance (Definition \ref{structure-invariance}), which requires that an adapter $E_\theta$ makes the composite network $E_\theta \circ F$ invariant to an augmentation $t$ while preserving the structure of the pretrained latent distribution $F_\sharp\mu_X$ up to a map $V$ in some admissible class of maps $\mathcal{V}$.

\item We introduce augmented encoders (Section \ref{AugEncode}), which are probabilistic encoders that formalize the process of randomly augmenting and then encoding an input, as a fundamental object of study and as a general framework for augmentation-based learning, independent of any specific training objective.

\item We propose two training objectives for augmented encoders that can be used for post-training augmentation invariance, namely, an anchored mean-squared error (MSE) loss, which can also be viewed as a minimization problem in a generalized $Lp$ metric for Markov-Wasserstein kernels (Section \ref{MaWaMin}), and Wasserstein correlation maximization (Section \ref{WaCoMax}), which maximizes the Wasserstein correlation of the joint distribution induced by the augmented encoder. 

\item We demonstrate empirically (Section \ref{Experiments}) for various pretrained networks $F$ that simple, one-hidden-layer multilayer perceptrons (MLPs) $E_{\theta}$ trained on our proposed losses make the composite network $E_{\theta}\circ F$ approximately $(t,\mu_X, F, V)$-invariant for an approximate, local isometry $V$. By contrast, we show that two other candidate losses, namely, SimCLR and a Hilbert-Schmidt Independence Criterion (HSIC) maximization loss, produce uncompetitive invariance results and significantly corrupt the pretrained latent space. 
\end{enumerate}

\subsection{Related Work}

\subsubsection{(Self-Supervised) Representation Learning} Many self-supervised learning objectives, such as SimCLR \citep{chen2020simple}, MoCo \citep{he2020momentum}, SwAV \citep{caron2020unsupervised}, BYOL \citep{grill2020bootstrap}, and DINO \citep{caron2021emerging}, learn representations by maximizing agreement between differently augmented views of the same input. These approaches naturally induce (approximate) invariance to their chosen augmentations and have achieved remarkable success in pretraining. However, as we demonstrate empirically, applying a SimCLR loss (which we take to be a suitable stand-in for contrastive losses, more generally) in a post-training setting fundamentally alters the geometry of the original latent space, making the pretrained features subsequently unusable. This is unsurprising: Contrastive methods cluster semantically similar examples, which is ideal for learning representations from scratch, but inappropriate when the goal is to preserve existing structure.

Other work in representation learning more directly employs variants of the InfoMax principle first introduced by \citet{Linsker1988} to maximize mutual information (MI) between input and latent distributions; see, for example, \citet{bachman2019learning,Hjelm2018,hu2017learning, oord2018representation,rezaabad2020learning,zhao2019infovae}, and, in particular, see \citet{Tschannen2019} for a unified perspective on many of these works. MI estimation, of course, is computationally difficult in high-dimensions, and, in practice, neural estimators, as defined in \citet{belghazi2018mutual} or \citet{poole2019variational}, for example, are used to estimate lower bounds on the quantity. However, as shown in \citet{mcallester2020formal}, high-confidence, distribution-free lower bounds of any type have exponential sample complexity in the size of the bound. More strikingly, \citet{Tschannen2019} show that maximizing MI does not necessarily lead to effective representations for various downstream tasks and that the success of MI-based objectives involves a subtle interplay between encoder architectures and MI estimators. 

One of our initial motivations for undertaking this work was to explore the potential suitability of Wasserstein dependence, or its normalized version, Wasserstein correlation, as a general drop-in replacement for mutual information in MI-based representation learning methods. While \citet{ozair2019wasserstein} showed that Wasserstein dependence maximization can be used for general representation learning when combined with additional regularization terms, we found in our own investigations that Wasserstein dependence alone, or, more specifically, Wasserstein correlation maximization, cannot, in fact, be used as generic replacement for MI-maximization, as explained further in Remark \ref{WaCoRem}. Briefly, Wasserstein correlation maximization inverts the dichotomy introduced before: It (approximately) preserves the metric structure of the input distribution, which is therefore fatal when the goal is to cluster data according to semantic similarity, but ideal when trying to leave a pretrained latent space intact.  

To test whether other measures of statistical dependence might perform similarly to our Wasserstein correlation maximization objective, we also consider maximizing the Hilbert-Schmidt Independence Criterion (HSIC) \citep{gretton2005measuring}, which has been shown to be effective for self-supervised learning \citep{li2021self}. As will be shown experimentally, though, the HSIC loss, like the SimCLR loss, is also uncompetitive and corrupts the pretrained latent space even more dramatically. This suggests that the geometric properties of the optimal transport-based objectives are playing a non-trivial role in achieving post-training augmentation invariance.  

\subsubsection{Fine-Tuning and Model Adaptation} 
The overall aim of our work is related to fine-tuning \citep{kumar2022fine, goyal2023finetune, wortsman2022robust, xin2024parameter}, but our methods leave the pretrained network frozen and unaltered. We seek instead to develop adapter networks, which have been explored for various use-cases, such as improved semantic segmentation or domain generalization \citep{dukler2023your,ji2025customizing, kim2021selfreg, rebuffi2017learning, xu2023side}. To our knowledge, no one has developed a general framework comparable to ours for adapter networks that can achieve (approximate) post-training augmentation invariance for arbitrary augmentation types.

\subsubsection{Optimal Transport-Based Methods} 
Our work is related, more broadly, to the many uses of optimal transport in machine learning. Transport-based methods have been influential in a number of areas, including generative modeling \citep{arjovsky2017wasserstein,bousquet2017optimal,kolouri2019sliced, Kunkel2024, liutkus2019sliced, nadjahi2019asymptotic, nguyen2022hierarchical, tolstikhin2017wasserstein, wu2019sliced}, domain adaptation \citep{courty2016optimal, courty2017joint, lee2019sliced, shen2018wasserstein}, and adversarial robustness \citep{bhagoji2019lower, bai2023wasserstein, levine2020wasserstein,nguyen2023optimal, wong2019wasserstein, wu2020stronger}. Our use of Wasserstein correlation, in particular, builds on a growing body of work on transport-based statistical dependency measures \citep{liu2022entropy, mori2020earth, mordant2022measuring, nies2021transport, Weisel2022}. These ideas have been applied to feature selection \citep{Li2023} and representation learning \citep{ozair2019wasserstein, xiao2019disentangled}, but to our knowledge, no one has used Wasserstein correlation, nor our other transport-based loss, for invariance learning, particularly in a post-training setting where additional structure-preservation constraints are needed. 

\subsection{Paper Organization}
The rest of the paper is organized as follows: In Section \ref{Background}, we present background material needed for the rest of the paper. The material on optimal transport is standard, but the material on Markov-Wasserstein kernels (Section \ref{Markov-WassersteinKernels}) and Wasserstein correlation (Section \ref{WassersteinCorrelation}) is likely less well-known. Our main contributions are presented in Section \ref{Methods}, and our experimental results are presented in Section \ref{Experiments}. Finally, limitations of the current framework and suggestions for future work are given in Section \ref{Conclusion}.  

\section{Background}\label{Background}

\noindent We recall all standard definitions from optimal transport (Wasserstein distances, couplings, product measures, sliced Wasserstein distances) in Appendix~\ref{BackgroundAppendix}. Here, we introduce only the two constructions that are essential for defining our loss functions: the generalized $L^p$ metric for Markov-Wasserstein kernels (Section~\ref{Markov-WassersteinKernels}) and Wasserstein correlation (Section~\ref{WassersteinCorrelation}). Readers interested primarily in the practical aspects of this work can skip directly to Section~\ref{AnchoredMSE} for a self-contained presentation of our main loss function without reference to optimal transport machinery.

\subsection{Markov-Wasserstein Kernels}\label{Markov-WassersteinKernels}

\noindent We model encoders abstractly as Markov-Wasserstein kernels, which are Markov kernels valued in Wasserstein spaces (see Appendix~\ref{BackgroundAppendix} for the full development, including Markov kernels, disintegrations, and Bayesian inverses). For our purposes, the key construction is the generalized $L^p$ metric between Markov-Wasserstein kernels, which gives rise to our Markov-Wasserstein minimization loss.

\begin{defn}\label{GeneralizedLpMetricMain}
Let $(X, d_X, \mu_X)$ be a metric measure space (i.e., a Polish metric space equipped with a probability measure $\mu_X$ on the Borel $\sigma$-algebra induced by $d_X$), and let $(Y, d_Y)$ be a Polish metric space. The $\mathbf{L^p}$ \textbf{space} $L^p_{\mu_X}(X, Y)$ is the set of $\mu_X$-a.e. equal equivalence classes of measurable functions $f:X\to Y$ satisfying 
\begin{equation}
\int_Xd_Y^p\big(y_0, f(x)\big)\mu_X(dx)<\infty
\end{equation}
for some (and hence any) $y_0\in Y$. We say that $f$ is $\mathbf{L^p}$ \textbf{integrable} if this condition is met. Further, $L^p_{\mu_X}(X, Y)$ is a metric space when equipped with the $\mathbf{L^p}$ \textbf{metric} 
\begin{equation} 
d_{L^p}:L^p_{\mu_X}(X, Y)\times L^p_{\mu_X}(X, Y) \to\mathbb{R}_{\geq 0}
\end{equation}
defined by 
\begin{equation}\label{LpMetric}
d_{L^p}(f, g)= \left(\int_X d_Y^p\big(f(x), g(x)\big) \mu_X(dx)\right)^{1/p}
\end{equation}
for $1\leq p<\infty$. 
\end{defn}

\noindent When $Y$ is itself a Wasserstein space $\mathcal{P}_p(Y)$ and $d_Y$ is the Wasserstein metric $W_{p,d_Y}$, we call elements of $L^p_{\mu_X}(X, \mathcal{P}_p(Y))$ \textbf{Markov-Wasserstein kernels}. Also, we denote the corresponding $L^p$ metric by $\text{MW}_p$ and call it the \textbf{Markov-Wasserstein metric}. Once again, see Appendix~\ref{BackgroundAppendix} for further details.

\subsection{Wasserstein Correlation}\label{WassersteinCorrelation}

\noindent Here we introduce Wasserstein correlation, which we use to define the Wasserstein correlation maximization loss in Section \ref{WaCoMax}. First, recall that the mutual information of random variables $X$ and $Y$ is defined as $I(X;Y) = D_{KL}(P_{XY}\Vert P_X\otimes P_Y)$, where $D_{KL}$ is KL-divergence, and where $P_{XY}$ and $P_X\otimes P_Y$ are, respectively, the joint and product distributions of the pair $(X, Y)$. Wasserstein dependence is the natural optimal transport-based analog of this quantity. 

\begin{defn}\label{WassersteinDependence}
Let $(X, d_X)$ and $(Y, d_Y)$ be Polish metric spaces (or simply Euclidean spaces in practice), and let $(X \times Y, d_{XY})$ be the standard product space equipped with the product metric of order $p$ for some $1\leq p <\infty$. Then, the \textbf{Wasserstein dependence of order} $\mathbf{p}$ on $\mathcal{P}_p(X\times Y)$
\begin{equation}
\text{WD}_p:\mathcal{P}_p(X\times Y)\to\mathbb{R}_{\geq 0}
\end{equation}
is defined by
\begin{equation}
\text{WD}_p(\pi)=W_{p, d_{XY}}(\pi, \pi^1\otimes \pi^2), 
\end{equation}
where $\pi^1\in\mathcal{P}_p(X)$ and $\pi^2\in\mathcal{P}_p(Y)$ are the first and second marginals of $\pi$. 
\end{defn}

\noindent We cannot train models directly on Wasserstein dependence maximization, since the encoder can trivially hack the objective by spreading the latent codes arbitrarily far apart. One possibility is to add a prior-matching term in the latent space. This is the approach taken by information-maximizing variational autoencoders \citep{rezaabad2020learning,zhao2019infovae}. In these and similar models, the support of the latent distribution remains bounded by being constrained to remain close, in some divergence or metric, to a prior distribution, typically a Gaussian. In our case, we work instead with a normalized version of dependence, namely, Wasserstein correlation, which constrains the variance of the latent codes. 

\begin{defn}\label{WassersteinCorrelationDefn}
Take the same setup as in Definition \ref{WassersteinDependence}, and let $1\leq p <\infty$ be an integer. Then, the \textbf{Wasserstein correlation of order} $\mathbf{p}$ on $\mathcal{P}_p(X\times Y)$
\begin{equation}
\text{WC}_{p}:\mathcal{P}_p(X\times Y)\to\mathbb{R}_{\geq 0}
\end{equation}
is defined by
\begin{equation}
\text{WC}_{p}(\pi) = \frac{\text{WD}_{p}(\pi)}{\left(\text{WD}_{p}(\pi^1_D)\cdot \text{WD}_{p}(\pi^2_D)\right)^{1/p}}, 
\end{equation} 
where $\pi^1_D$ is the diagonal distribution on $\mathcal{P}_p(X\times X)$ given by $\pi^1_D(A\times B)=\pi^1(A\cap B)$ for $A \times B\in \Sigma_{X\times X}$, and similarly for $\pi^2_D\in\mathcal{P}_p(Y\times Y)$. 
\end{defn}

\noindent In the above, the terms in the denominator are self-variance terms that measure how spread out the marginal distributions are, and the dependence scores are computed in the Wasserstein spaces $(\mathcal{P}_p(X\times X), W_{p, d_{XX}})$ and $(\mathcal{P}_p(Y\times Y), W_{p, d_{YY}})$, respectively. We leave this implicit to reduce notational clutter. These normalization terms guarantee that Wasserstein correlation is bounded between zero and one. 

For computational implementations, we simply substitute the sliced Wasserstein distance for the ordinary Wasserstein distance in the definitions above. Naturally, we call the resulting quantities \textbf{sliced Wasserstein dependence} and \textbf{sliced Wasserstein correlation}, and we use $\text{SD}_p$ and $\text{SC}_{p}$ for the corresponding notation. In practice, we work (in the non-augmented case) with empirical joint distributions corresponding to pairs $(x_i, E_{\theta}(x_i))$, and we approximate product distributions by shuffling coordinates, as in \citet{Li2023}. Altogether then, we have
\begin{align}\label{EmpiricalSlicedCorrelation}
\text{SC}_p&\left(\frac{1}{N}\sum_{i=1}^N\delta_{\big(x_i, E_{\theta}(x_i)\big)}\right)\\&=\frac{SW_p\Big(P^N(X, Z), P^N_{\sigma_{XZ}}(X, Z)\Big)}{\Big(SW_p\big(P^N(X, X), P^N_{\sigma_{XX}}(X, X)\big)\cdot SW_p\big(P^N(Z, Z), P^N_{\sigma_{ZZ}}(Z, Z)\big)\Big)^{1/p}},\nonumber
\end{align}
where 
\begin{equation}
P^N(X, Z)=\frac{1}{N}\sum_{i=1}^N\delta_{\big(x_i, E_{\theta}(x_i)\big)}\hspace{2mm} \text{and}\hspace{2mm}
P^N_{\sigma_{XZ}}(X, Z)=\frac{1}{N}\sum_{i=1}^N\delta_{\big(x_{\sigma_1(i)}, E_{\theta}(x_{\sigma_2(i)})\big)}
\end{equation}
for random permutations $\sigma_k$ on $\{1, \dots, N\}$ and for $Z=E_{\theta}(X)$. The other quantities are defined similarly. Also, note that it is sufficient to only shuffle one coordinate to approximate the product distribution, but the estimate is generally better when shuffling both.

\section{Methods}\label{Methods}
Anyone interested primarily in the practical aspects of this work can skip to Section \ref{AnchoredMSE} for a straightforward presentation of our main loss function, which we have found to be effective in a broad range of scenarios. In Sections \ref{AugEncode} and \ref{AugmentationInvariance}, we formalize augmentation-based learning processes with Markov-Wasserstein kernels and give precise definitions for post-training augmentation invariance. It is not necessary to follow this material, though, to understand our experimental results or to apply the ideas in practice.   

\subsection{Augmented Encoders}\label{AugEncode}

Here we introduce our fundamental object of study, namely augmented encoders, which are probabilistic encoders that formalize the process of randomly augmenting and then encoding an input. Then, in Section \ref{LossFunctions}, we define two loss functions for invariance learning with augmented encoders. 

Define an \textbf{augmentation} to be a (measurable) map of the form $t:A \times X\to X$, where $A$ is the parameter space, typically taken to be (a subset of) some (potentially factored) Euclidean space $\mathbb{R}^{p_1}\times \cdots\times \mathbb{R}^{p_{\ell}}$. Note that this includes composite augmentations since we can consider, for example, $t\big((a_1, a_2), x\big)=t_2\big(a_2, t_1(a_1, x)\big)$, and likewise for more general compositions $t_k\circ t_{k-1}\circ \cdots \circ t_1$. Now, let $T=\{t_1, \dots, t_{m}\}$ be a collection of $m$ augmentations, let $w=(w_1, \dots, w_m)$ be a collection of positive weights with $\sum_{j=1}^m w_j=1$, let $\nu=(\nu_1, \dots, \nu_m)$ be a collection of distributions $\nu_j\in\mathcal{P}_p(A_j)$ on the parameter spaces $A_j$, and let $E:X \to Z$ be a deterministic encoder, which we simply take to be a measurable map, though, in practice, it will have more or less smoothness properties. Then, we define the \textbf{augmented encoder with respect to} $(T, w, \nu)$, denoted simply by $E^T:X\to\mathcal{P}_p(Z)$, by 
\begin{equation}\label{AugmentedEncoder}
E^T(x) = \sum_{j=1}^mw_j\int_{A_j}\delta_{E(t_j(a, x))}\ \nu_j(da), 
\end{equation} 
where, for $x_0\in X$, the distribution $E^T(x_0)\in\mathcal{P}_p(Z)$ is defined on test functions $\phi\in C_b(Z)$ by 
\begin{align}
\int_Z \phi(z) E^T(x_0)(dz)&=\int_Z \phi(z)\left(\sum_{j=1}^mw_j\int_{A_j} \delta_{E(t_j(a,x_0))}\nu_j(da)\right)(dz) \\
&=\sum_{j=1}^mw_j\int_{A_j} (\phi\circ E) \big(t_j(a, x_0)\big)\nu_j(da)\nonumber. 
\end{align}
Empirically, if $\nu_j$ is approximated with $n_j$ samples as $\frac{1}{n_j}\sum_{k=1}^{n_j} \delta_{a_j^k}$, then $E^T(x_0)$ is approximated as  
\begin{equation}\label{EmpiricalAugmentedEncoder}
E^{T}(x_0)\approx \sum_{j=1}^m\sum_{k=1}^{n_j} \frac{w_j}{n_j}\delta_{E(t_j(a_j^k, x_0))}. 
\end{equation}
Additionally, if $\mu_X$ is approximated empirically as $\frac{1}{n}\sum_{i=1}^n\delta_{x_i}$, then the generalized pushforward distribution (see definition \ref{GeneralizedPushforward}), or latent distribution, $E^T\odot \mu_X$ is approximated by 
\begin{equation}
E^T\odot \mu_X \approx \sum_{i=1}^n\sum_{j=1}^m\sum_{k=1}^{n_{j}}\frac{w_j}{nn_{j}} \delta_{E(t_j(a_{j}^k, x_i))}.
\end{equation}
In practice, given a batch of data $\{x_i\}_{i=1}^N$, we only sample $s$ times from (\ref{EmpiricalAugmentedEncoder}) for each $x_i$ (typically with $s=3$), and we also allow the augmentation parameters to vary across the batch, giving us 
\begin{equation}\label{EmpiricalPushforward}
E^T\odot \left(\frac{1}{N}\sum_{i=1}^N\delta_{x_i}\right) \approx \frac{1}{sN}\sum_{i=1}^N\sum_{k=1}^s\delta_{z^k_i}, 
\end{equation}
where $\{z_i^1,\dots, z_i^s\}$ is a collection of $s$ independent samples from $E^T(x_i)$. 

Strictly speaking, we have not yet guaranteed that $E^{T}$ is measurable under definition (\ref{AugmentedEncoder}). We will bypass a full consideration of this issue here, since, in practice, we only work with empirical distributions. Formally, though, this can be done by applying Fubini-Tonelli analogs in the Markov-Wasserstein setting.  

\subsection{Augmentation Invariance}\label{AugmentationInvariance}
Let $\mu_X\in\mathcal{P}_p(X)$ be a distribution, and let $t:A\times X \to X$ be an augmentation. We want encoders that are invariant to $t$, but we must first define invariance carefully, since literal mathematical invariance may immediately imply collapse of (parts of) the representation space. In particular, if there are \textbf{collisions}, namely distinct points $x_1$ and $x_2$ in $\text{supp}(\mu_X)$ with either $t(a_1, x_1) = x_2$ for some augmentation parameter $a_1$, or else $t(a_1, x_1)=t(a_2, x_2)$ for augmentation parameters $a_1$ and $a_2$, then exact invariance would imply 
\begin{equation}
E(x_1)=E\big(t(a_1, x_1)\big)=E(x_2)
\end{equation}
in the first case, or, similarly, 
\begin{equation}
E(x_1) = E\big(t(a_1, x_1)\big) = E\big(t(a_2,x_2)\big)=E(x_2)
\end{equation}
in the second. As such, we refine our desired notion of invariance as follows. 

\begin{defn}
Let $\mu_X\in\mathcal{P}_p(X)$ and let $t:A\times X \to X$ be an augmentation with parameter distribution $\nu\in\mathcal{P}_p(A)$. Define the $\mathbf{t}$-\textbf{augmentation set} $\text{Aug}(x, t)$ of $x$ by 
\begin{equation}
\text{Aug}(x, t) =\{t(a, x): a\in\text{supp}(\nu)\}. 
\end{equation}
Further, define the set of \textbf{unique} \textbf{augmentations} $\text{UAug}_{\mu_X}(x, t)$ of $x$ with respect to $t$ and $\mu_X$ by
\begin{equation}
\text{UAug}_{\mu_X}(x, t) \coloneqq \text{Aug}(x, t)\cap \left(\bigcup_{w\in\text{supp}(\mu_X)\setminus\{x\}}\text{Aug}(w, t)\right)^c \cap \text{supp}(\mu_X)^c. 
\end{equation}
That is, $\text{UAug}_{\mu_X}(x, t)$ is the set of all collision-free augmentations of $x$. 
\end{defn}

In practice, approximate, rather than exact, collisions are the main obstacle to adding invariance properties to a pretrained model. In our experimental results, we show that the following notion of \textbf{collision rate} is a useful proxy for the tractability of the optimization problem.   

\begin{defn}\label{collision_rate}
Let $F: X \to Z$ be an encoder, let $t: A \times X \to X$ be an augmentation with parameter distribution $\nu \in \mathcal{P}_p(A)$, and let $\mu_X \in \mathcal{P}_p(X)$ be a distribution equipped with a class function $c: \mathrm{supp}(\mu_X) \to \{1, \ldots, m\}$. The \textbf{(augmented-to-clean) collision rate} of $F$ with respect to $t$ and $\mu_X$ is
\begin{align}
\mathrm{CR}&(F, t, \mu_X) \coloneqq\\
&\mathbb{E}_{\substack{x \sim \mu_X \\ a \sim \nu}} \left[ \mathbf{1}\!\left( \min_{\substack{w \in \mathrm{supp}(\mu_X) \\ c(w) \neq c(x)}} \|F(t(a, x)) - F(w)\| < \min_{\substack{w \in \mathrm{supp}(\mu_X) \\ c(w) = c(x)}} \|F(t(a, x)) - F(w)\| \right) \right], \nonumber
\end{align}
where $\mathbf{1}$ is the indicator function for the event in parentheses. That is, $\mathrm{CR}$ measures the fraction of augmented samples whose nearest non-augmented, or clean, neighbor in the latent space belongs to a different class than their own.
\end{defn}

\begin{defn}\label{invariance}
Again, let $\mu_X\in\mathcal{P}_p(X)$ and let $t:A\times X \to X$ be an augmentation with parameter distribution $\nu\in\mathcal{P}_p(A)$. A (deterministic) encoder $E:X\to Z$ is $\mathbf{(t, \mu_X)}$-\textbf{invariant} (or invariant with respect to $t$ and $\mu_X$) if for all $x\in\text{supp}(\mu_X)$ and all $y=t(a, x)\in \text{UAug}_{\mu_X}(x, t)$, we have    
\begin{equation}
E(y) = E\big(t(a, x)\big) = E(x). 
\end{equation}
(We can alternatively say that this condition should hold almost everywhere, but we will ignore subtleties like this.) 
\end{defn}

\noindent This definition specifies that, for the purposes of representation learning, an encoder should only be invariant with respect to collision-free augmentations, as otherwise (partial) collapse of the latent space immediately follows. Since our primary aim is to add additional invariance properties to pretrained models while preserving the structure of the original latent space up to isometry, or at least up to some class of admissible maps, we introduce the following definition. 

\begin{defn}\label{structure-invariance}
Let $\mu_X\in\mathcal{P}_p(X)$ be a distribution, let $t:A\times X\to X$ be an augmentation with parameter distribution $\nu\in\mathcal{P}_p(A)$, let $F:X\to Z$ be a (pretrained) encoder, and let $\mathcal{V}(Z, W)$ be an admissible class of maps (e.g., isometries, bi-Lipschitz maps, etc.). An encoder $E:Z\to W$ is $\mathbf{(t, \mu_X, F, V)}$-\textbf{invariant} if $E\circ F$ is $(t, \mu_X)$-invariant and if $(E\circ F)(x) = (V\circ F)(x)$ for all $x\in\text{supp}(\mu_X)$ where $V:Z\to W$ is in $\mathcal{V}(Z, W)$.  
\end{defn}

\noindent That is, Definition \ref{structure-invariance} specifies that the composite encoder $E\circ F$ should be augmentation invariant with respect to $t$ and the data distribution $\mu_X$, but, additionally, we require that $E$ preserve the (metric) structure of the latent distribution $F_{\sharp}\mu_X$ up to $V$. This is the precise definition of what we mean by \textbf{post-training augmentation invariance}: adding additional invariants to the latent space of a pretrained network $F$ without corrupting the structure of that space. 

We stress that this notion of structure preservation has nothing to do with intrinsic, low-dimensional manifold structure in a dataset, nor does it have anything to do with class structure. We want to preserve the literal, point-cloud metric structure of the latent distribution as exactly as possible, since this implies that downstream representation quality on the original distribution is maintained. Also, we note that our first loss, Markov-Wasserstein minimization (see Section \ref{MaWaMin}), is only defined for encoders $E:Z\to Z$ where the codomain is the same as the domain, and we take $V=\text{id}_Z$. Wasserstein correlation maximization (Section \ref{WaCoMax}) additionally allows for dimensionality reduction ($W$ different from $Z$), and we then take $V:Z\to W$ to be an approximate local isometry. In fact, the Wasserstein correlation loss can also be defined without a pretrained network $F$ in the mix at all. In this case, we will say that $E:X\to Z$ is $\mathbf{(t, \mu_X, V)}$-invariant if $E$ is $(t,\mu_X)$-invariant and if there is some $V\in\mathcal{V}(X,Z)$ with $E(x)=V(x)$ on $\text{supp}(\mu_X)$.  

Finally, we extend Definitions \ref{invariance} and \ref{structure-invariance} to collections of augmentations and to augmented encoders as follows. Let $T=\{t_1, \dots, t_m\}$ be a collection of augmentations $t_j:A_j\times X\to X$ on $X$ with parameter distributions $\nu_j\in\mathcal{P}_p(A_j)$. An encoder $E:X\to Z$ is $(T, \mu_X)$-invariant if it is $(t_j, \mu_X)$-invariant for each $t_j\in T$. We extend the definitions for $(T, \mu_X, F, V)$-invariance and $(T, \mu_X, V)$-invariance in a similar fashion. 

For the case of augmented encoders, we say that $E^T:X\to\mathcal{P}_p(Z)$ is $(T, \mu_X)$-invariant if the underlying deterministic encoder $E:X\to Z$ is $(T, \mu_X)$-invariant, which, in turn, implies that $E^T(x) = \delta_{E(x)}$. Similarly, we say that the augmented encoder $(E\circ F)^T:X\to\mathcal{P}_p(W)$ for $E:Z\to W$ deterministic is $(T, \mu_X, F, V)$-invariant if $E\circ F$ is, which implies that $(E\circ F)^T(x) = \delta_{(V\circ F)(x)}$.

\subsection{Loss Functions}\label{LossFunctions}
We now introduce two loss functions for the notions of augmentation invariance defined above, namely, the anchored mean-squared-error (MSE), or Markov-Wasserstein minimization, loss and the Wasserstein correlation maximization loss. 

\subsubsection{Markov-Wasserstein Minimization (MaWa)}\label{AnchoredMSE}\label{MaWaMin}

The practical take-away of our work is that, when augmentations have a low collision rate in the pretrained latent space (Definition~\ref{collision_rate}), a simple anchored mean-squared error loss is a highly effective objective for post-training augmentation invariance. Let $F:X\to Z$ be a frozen, pretrained network, let $E_{\theta}:Z\to Z$ be a parameterized encoder, and let $t$ be a (possibly composite) augmentation. Given a batch $\{x_i\}_{i=1}^N$ and $s$ augmentation samples per input, the \textbf{Markov-Wasserstein minimization (MaWa)} loss is defined by
\begin{align}\label{MaWaLoss}
\theta_{\ast}=\arg\min_{\theta}\mathcal{L}_{\text{MaWa}}(\theta)&\coloneqq \frac{1}{N}\sum_{i=1}^N\left(\vphantom{\sum_{k=1}^s}\frac{1}{s+1}\Vert (E_{\theta}\circ F)(x_i) - F(x_i)\Vert^2_2\right.\\
&\qquad\qquad\qquad\left.+\frac{1}{s+1}\sum_{k=1}^s\Vert (E_{\theta}\circ F)\big(t(a_i^k, x_i)\big)- F(x_i)\Vert^2_2\right).\nonumber 
\end{align}
The first term enforces preservation of $F$ (anchoring the encoder to the identity on non-augmented inputs) and the second term enforces augmentation invariance (pulling the encoded augmented features toward the corresponding clean features). If all augmentations are collision free, then (\ref{MaWaLoss}) reaches a minimum of zero when $E_{\theta}=E_{\ast}$ for a $(T, \mu_X, F, \text{id}_Z)$-invariant encoder $E_{\ast}:Z\to Z$, since both terms zero out by definition.

This loss avoids the collapse problem that plagues a naive mean squared invariance loss involving terms of the form $\Vert E_{\theta}\big(t(a, x)\big) - E_{\theta}(x)\Vert^2_2$, which can immediately be minimized by mapping everything in the latent space to a single point. The frozen network $F$ acts similarly to dual network designs, serving as an analog to, say, a slowly updating teacher network or to a stop-gradient protected branch of a twin network.  

Below, we show that this loss arises naturally as a minimization problem in the Markov-Wasserstein metric from Definition~\ref{GeneralizedLpMetricMain}. This derivation is not needed to apply the method in practice but provides the formal connection to optimal transport.

\paragraph{Derivation as Markov-Wasserstein minimization.} Interpreting the equality $(E\circ F)^T=\delta_F$ as an equality of Markov-Wasserstein kernels, we seek 
\begin{equation}\label{MaWa}
\theta_{\ast}=\arg\min_{\theta}\text{MW}^2_2\big((E_{\theta}\circ F)^T, \delta_F\big).
\end{equation}
To preserve the metric structure of the pretrained latent space exactly, we include the identity $\text{id}_X$ as a trivial augmentation and set $T=\{\text{id}_X, t\}$ with weights $w=\left(\frac{1}{s+1}, \frac{s}{s+1}\right)$, using the notation from Section \ref{AugmentationInvariance}. Then, we can approximate $(E_{\theta}\circ F)^T$ empirically at $x$ using $s$ samples:   
\begin{align}
(E_{\theta}\circ F)^T_{s}(x)=\frac{1}{s+1}\delta_{(E_{\theta}\circ F)(x)}+\frac{1}{s+1}\sum_{k=1}^s\delta_{(E_{\theta}\circ F)\big(t(a_x^k, x)\big)}. 
\end{align} 
Previously, we used the notation $a^k_j$ to indicate dependence of the augmentation parameters on the $j$th augmentation. Here, with only one augmentation, we use the notation $a^k_x$ to denote the fact that the augmentation parameters are not necessarily uniform across $\text{supp}(\mu_X)$, but instead come from taking independent samples of $E^T(x)$. When we have a batch $\{x_i\}_{i=1}^N$ with $\mu_X\approx \frac{1}{N}\sum_{i=1}^N\delta_{x_i}$, we will instead write this dependence as $a_i^k$. Note also that 
\begin{align}\label{SimpleDiracEquality}
(E_{\theta}\circ F)^T_s(x)\otimes \delta_{F(x)}=\frac{1}{s+1}\delta_{(E_{\theta}\circ F)(x)}\otimes \delta_{F(x)} + \frac{1}{s+1}\sum_{k=1}^s\delta_{(E_{\theta}\circ F)(x)}\otimes \delta_{F(x)}. 
\end{align} 
Now, take $p=2$ and assume that all spaces are standard Euclidean spaces. We use equality (\ref{SimpleDiracEquality}) together with the fact that couplings with Dirac masses are unique to reduce the Wasserstein computation to a collection of squared $L^2$ terms:
\begin{align}\label{MW-minimization}
MW^2_2&\big((E_{\theta}\circ F)^T_s, \delta_F\big)\\
&=\int_X W^2_2\big((E_{\theta}\circ F)^T_s(x), \delta_{F(x)}\big)\mu(dx)\nonumber\\
&=\int_X\left(\int_{Z \times Z}\Vert z_1 - z_2\Vert^2_2\big((E_{\theta}\circ F)^T_s(x)\otimes \delta_{F(x)}\big)(dz_1, dz_2)\right)\mu(dx)\nonumber\\
&=\int_X\left(\vphantom{\sum_{k=1}^s}\frac{1}{s+1}\Vert (E_{\theta}\circ F)(x) - F(x)\Vert^2_2\right.\nonumber\\
&\qquad\qquad\qquad\left.+\frac{1}{s+1}\sum_{k=1}^s\Vert (E_{\theta}\circ F)\big(t(a_x^k, x)\big)- F(x)\Vert^2_2\right)\mu(dx)\nonumber\\
&\approx \frac{1}{N}\sum_{i=1}^N\left(\vphantom{\sum_{k=1}^s}\frac{1}{s+1}\Vert (E_{\theta}\circ F)(x_i) - F(x_i)\Vert^2_2\right.\nonumber\\
&\qquad\qquad\qquad\left.+\frac{1}{s+1}\sum_{k=1}^s\Vert (E_{\theta}\circ F)\big(t(a_i^k, x_i)\big)- F(x_i)\Vert^2_2\right)\nonumber 
\end{align}
recovering the batch loss (\ref{MaWaLoss}).

\subsubsection{Wasserstein Correlation Maximization (WaCo)}\label{WaCoMax} We now present the \textbf{Wasserstein correlation maximization (WaCo)} loss, which achieves invariance results similar to the Markov-Wasserstein minimization framework but that has additional, unique properties. Rather than operate as a loss between networks viewed as elements of a Markov-Wasserstein space, WaCo maximization instead acts on the joint distribution induced by the augmented encoder. Further, besides invariance learning, Wasserstein correlation maximization can alternatively, or simultaneously, be used for dimensionality reduction, with or without the presence of a frozen, pretrained network $F$. 

\begin{rem}\label{WaCoRem}
Crucially, we note that Wasserstein correlation maximization, by itself, is not suitable as a wholly general representation learning method comparable to, say, contrastive methods. The issue is that, by being an approximate (local) isometry on the input distribution, Wasserstein correlation maximization fundamentally cannot cluster points according to semantic similarity in the way that a contrastive method does. On the other hand, this is precisely what makes it suitable for post-training augmentation invariance, whereas other methods fail outright by altering the (metric) structure of the latent space, as demonstrated empirically in Section \ref{Experiments}. 
\end{rem}

\noindent Our setup is as follows. Let $\mu_X\in\mathcal{P}_p(X)$ be a distribution. As before, we focus on the case of a single, possibly composite augmentation $t$, and we again include the identity as a trivial augmentation. That is, we take $T=\{\text{id}_X, t\}$ with weights $w=\left(\frac{1}{s+1}, \frac{s}{s+1}\right)$. We note in passing that, whereas including the identity in the Markov-Wasserstein minimization loss is necessary for strong experimental results, we found that it can be left out of the Wasserstein correlation maximization loss without much degradation. Still, we include it by default. 

Markov-Wasserstein minimization requires a frozen, pretrained network $F:X\to Z$ to match against, but for Wasserstein correlation maximization, we have two options. Let $E_{\theta}:X\to Z$ be a parameterized collection of deterministic encoders. Then, we define the ordinary Wasserstein correlation maximization loss by 
\begin{equation}\label{WaCoLoss}
\theta_{\ast}=\arg\max_{\theta}\mathcal{L}_{\text{WaCo}}(\theta)\coloneqq\text{WC}_{p}\left(\int_{X}\big(\delta_{x}\otimes {E^T_{\theta}(x)}\big)\mu_X(dx)\right).
\end{equation}
That is, we maximize the Wasserstein correlation of the joint distribution induced by the augmented encoder, and there is no requirement that the left and right marginals of this joint distribution live in the same space. (Indeed, recall from Definition \ref{WassersteinCorrelationDefn} that all Wasserstein computations here take place in $\mathcal{P}_p(X \times Z)$.) This can be viewed as an objective for $(t,\mu_X, V)$-invariance. Additionally, if $T$ is only the identity, then this loss serves purely to train an encoder that reduces dimensionality while acting as an approximate local isometry on the support of $\mu_X$. 

\begin{rem}
Although the intuition for the MaWa loss is clear, the WaCo loss is more subtle. We leave a theoretical investigation of the WaCo loss for future work, but the general mechanism is related to the following. First, Theorem 2.2 in \citet{Weisel2022} shows that the transport-based correlation measure defined there is equal to one if and only if the second marginal is the pushforward of the first marginal under a measurable function. Section 5 of \citet{nies2021transport} gives similar results for a variety of transport-based correlation measures. While these results do not apply verbatim to our setting, the idea of using Wasserstein correlation maximization for augmentation invariance is this: In order for the joint distribution induced by the augmented encoder to be far away, in the Wasserstein distance, from the product distribution, subject to normalization constraints, the encoder must bring any augmented views of the same input closer together in the latent space so that the resulting joint distribution is as close as possible to the graph of a measurable function. That is, the encoder must become invariant to the augmentations, and, in practice, we observe that it can accomplish this without representational collapse.  
\end{rem}

\noindent For the second case of post-training augmentation invariance for a frozen, pretrained network $F:X\to Z$, the loss can be written as 
\begin{equation}\label{WaCoLoss2}
\theta_{\ast}=\arg\max_{\theta}\mathcal{L}^F_{\text{WaCo}}(\theta) \coloneqq \text{WC}_{p}\left(\int_{X}\big(\delta_{F(x)}\otimes {(E_{\theta}\circ F)^T(x)}\big)\mu_X(dx)\right), 
\end{equation}
where $E_{\theta}:Z\to W$ with $W$ not necessarily the same as $Z$. That is, computations now take place in $\mathcal{P}_p(Z\times W)$. 

Our computational implementations of these losses use sliced Wasserstein distances on empirical distributions.\footnote{While it is well known that sliced distances are an effective approximation of ordinary Wasserstein distances (see, e.g., \citet{bonnotte2013unidimensional} or \citet{nadjahi2019asymptotic}), we note that to have a fully rigorous connection between sliced Wasserstein correlation (SWC) and ordinary Wasserstein correlation (WC), we would need to justify that SWC maximization is indeed a suitable proxy for WC maximization, similar to how InfoNCE losses can be shown to be a lower bound on mutual information maximization losses. This is left for future work. In any case, we demonstrate empirically that, if nothing else, SWC maximization does indeed serve as a suitable objective for (post-training) augmentation invariance.} We can approximate the induced joint distribution of a batch $\{x_i\}_{i=1}^N$ with $s$ samples as
\begin{align}\label{EmpiricalJoint} 
P^s_N\big(X,& E^T_{\theta}(X)\big)=\frac{1}{(s+1)N}\sum_{i=1}^N\delta_{\big(x_i, E_{\theta}(x_i)\big)}+\frac{1}{(s+1)N}\sum_{i=1}^N\sum_{k=1}^{s}\delta_{\big(x_i, E_{\theta}(t(a^k_i, x_i))\big)}. 
\end{align}
Then, in place of the ordinary WaCo loss (\ref{WaCoLoss}), we have 
\begin{equation}
\theta_{\ast}=\arg\max_{\theta}\mathcal{L}_{\text{SWaCo}}(\theta)\coloneqq \text{SC}_2\Big(P^s_N\big(X, E^T_{\theta}(X)\big)\Big).
\end{equation}
Likewise, we define the corresponding sliced version of (\ref{WaCoLoss2}) in the obvious way. Again, our experiments for Wasserstein correlation maximization specifically use these sliced analogs, but, for simplicity, we retain the WaCo naming, instead of switching to SWaCo.   

Before proceeding to our experimental results, we make two last remarks. First, when used for dimensionality reduction, the WaCo loss can be expanded to include a reconstruction term for a decoder. The natural loss, at the level of distributions, is given by
\begin{equation}\label{GeneralLoss}
\mathcal{L}_{\text{dist}}(\phi,\theta)=\alpha W_p\big(\mu_X,(D_{\phi}\circ E_{\theta})_{\sharp}\mu_X\big)-\beta \text{WC}_{p}\left(\int_{X}\big(\delta_{x}\otimes {E^T_{\theta}(x)}\big)\mu_X(dx)\right),
\end{equation}
and we can formulate a similar loss when reducing and reconstructing from a pretrained latent distribution $F_{\sharp}\mu_X$. When training with a decoder, we follow \citet{bousquet2017optimal} and \citet{tolstikhin2017wasserstein} and observe that 
\begin{equation}\label{WassersteinUpperBound}
W_c\big(\mu_X, (D_{\phi}\circ E_{\theta})_{\sharp} \mu_X\big)\leq \int_Xc\big(x,(D_{\phi}\circ E_{\theta})(x)\big)\mu_X(dx)
\end{equation}
for any lower semicontinuous cost function $c$. We substitute this upper bound for the reconstruction error, as this allows us to compute a simple $L^2$ reconstruction term in the Euclidean case when $p=2$. The batch loss is then written as 
\begin{equation}\label{ComputationalObjective}
\mathcal{L}(\phi, \theta) = \hspace{1mm} \alpha\frac{1}{N}\sum_{j=1}^N\big\Vert x_j - (D_{\phi}\circ E_{\theta})(x_j)\big\Vert_2^2 -\beta \text{SC}_2\Big(P^s_N\big(X, E^T_{\theta}(X)\big)\Big). 
\end{equation}

Finally, we note that in order for the trained model to be simultaneously invariant to multiple augmentations, one must distinguish between composite versus non-composite augmentations. In general, training for invariance to augmentations $T=\{t_1,\dots, t_m\}$ will only make the encoder separately invariant to each $t_j$. For example, if $t_1$ is rotation and $t_2$ is translation, then the encoder, at least when restricted to simple architectures, will not yet be invariant to inputs that have been both rotated and translated, as given by either $t=t_2 \circ t_1$ or $t=t_1\circ t_2$. However, if invariance to non-composite augmentations is sufficient, then instead of working with (\ref{GeneralLoss}) for $T=\{t_1, \dots, t_m\}$, one can also consider taking the logarithm of the product of Wasserstein correlation scores---one for each augmentation $t_j$ and the corresponding augmented encoder---since this allows for parallel computation and may be more efficient, depending on the use case.

\section{Experiments}\label{Experiments}
\noindent To isolate the effect of different training objectives, we employ one-hidden-layer MLPs with ReLU activations, deliberately avoiding more complex architectures that involve, for example, convolution, attention, or normalization layers. (Exact details can be found in Appendix \ref{ExperimentDetails}.) This approach better ensures that the structure preservation and invariance properties in the trained models are not the result of inductive biases in the networks themselves. It also shows that the method is extremely lightweight and can be used to modify pretrained networks with little additional training burden.

\subsection{Evaluation Framework}
Our tests are designed to evaluate $(t,\mu_X, F, V)$-invariance for various choices of $F$ and $V$. We use five pretrained networks spanning different architectures and training objectives:
\begin{itemize}[nosep]
\item \textbf{DINO}: a vision transformer\footnote{The dino\_vits8 model available at \url{https://github.com/facebookresearch/dino}} with $d=384$ trained on the self-distillation with no labels (DINO) objective \citep{caron2021emerging}.
\item \textbf{SwAV}: a ResNet50\footnote{Available at \url{https://github.com/facebookresearch/swav}} with $d=2048$ trained on the Swapping Assignments between Views (SwAV) objective \citep{caron2020unsupervised}.
\item \textbf{R-DINO}: a ResNet50\footnote{The dino\_resnet50 model again available at \url{https://github.com/facebookresearch/dino}} with $d=2048$ trained on the DINO objective, providing a direct comparison of backbone architectures under the same training objective.
\item \textbf{CLIP}: a vision transformer\footnote{The CLIP ViT-B/16 image encoder available at \url{https://github.com/openai/CLIP.git}} with $d=512$ trained on the contrastive language--image pretraining objective \citep{radford2021learning}.
\item \textbf{ResNet50}: a supervised ResNet50\footnote{Available through Torchvision} with $d=2048$ trained on ImageNet classification, included to test whether our methods extend beyond self-supervised feature spaces.
\end{itemize}
For our primary experiments, we use the STL10 dataset\footnote{STL10, we submit, provides a reasonable compromise between realistic feature processing through large pretrained networks and computational efficiency.} with ImageNet normalization statistics. To evaluate scaling behavior with respect to the number of classes, we provide additional tests for DINO on TinyImageNet (200 classes) and on a 50-class subset of TinyImageNet selected via a greedy procedure to maximize pairwise class separation in the DINO latent space. Partial results for the primary DINO and SwAV experiments are reported in the main text, with all remaining results reported in Appendix~\ref{AdditionalExperiments}.
In addition to our results for the MaWa minimization loss and the WaCo maximization loss, we compare against two alternative ways one might try to achieve $(t,\mu_X, F, V)$-invariance. Specifically, we use SimCLR \citep{chen2020simple} (denoted by SCLR in our tables for more uniform presentation) as a representative contrastive method, and we use maximization of the Hilbert-Schmidt Independence Criterion (HSIC) as a possible alternative to the WaCo loss. 

The SimCLR loss pushes augmented views closer together, which is desirable for invariance, but, as will be seen, when training an encoder $E_{\theta}$ in this way on augmented views of the form $F\big(t(a, x)\big)$ for a pretrained network $F$, the original latent space of $F$ can become badly corrupted, breaking one of our main requirements for post-training augmentation invariance.  

For HSIC comparisons, we use the $O(1/N)$ biased estimator from \citet{gretton2005measuring}, which was shown to be an effective loss for representation learning in \citet{li2021self}. That is, given i.i.d. samples $\{(x_i, y_i)\}_{i=1}^N$ from a joint distribution $(X, Y)$, 
\begin{equation}
\text{HSIC}(X, Y)=\frac{1}{(N-1)^2}\text{Tr}(KHLH)
\end{equation}
for kernel matrices $K_{ij}=k(x_i, x_j)$, $L_{ij}=l(y_i, y_j)$ and centering matrix $H = I - \frac{1}{N}\mathbf{1}\mathbf{1}^T$. Specifically, we train $E_{\theta}$ using HSIC maximization on the joint distribution $P^s_N\big(X, E^T_{\theta}(X)\big)$ induced by the augmented encoder. However, as is the case with SimCLR, we'll see that this potential alternative is also ill-suited for post-training augmentation invariance, since the latent space of $F$ can again become badly corrupted.    

\subsubsection{Invariance Evaluation}
To evaluate invariance, we compare classification accuracy of $C\circ E_{\theta}\circ F$ versus $C\circ F$ on augmented data for a classifier $C$ trained on non-augmented data. In the first case, for $C\circ E_{\theta}\circ F$, $C$ is either a linear classifier (LC), or else a nonlinear classifier (NC), specifically a simple, one-hidden-layer MLP with ReLU activation. We call $C\circ F$ the end-to-end classifier (EC) case, and $C$ in this case is a nonlinear MLP with the same total depth as $C\circ E_{\theta}$ for $C$ linear. In all cases, we train on a standard cross-entropy loss for $50$ epochs. It is certainly possible to obtain better classification results with longer training times and more fine-tuning, but we are primarily concerned with comparing relative accuracy between networks of the form $C\circ E_{\theta}\circ F$ and $C\circ F$. 

Classification is a more practical test of invariance than direct measurement of whether we have (approximate) equalities $E_{\theta}\big(t(a, x)\big)=E_{\theta}(x)$, since in pretrained latent spaces, raw $L2$ distances between points in the same class can actually be quite large, making these calculations uninformative in practice. Classification accuracy instead shows in a task-relevant way that the augmented data lives within the same decision boundaries as the non-augmented data after passing through $E_{\theta}$. 

Finally, all augmentations are taken from the Kornia package \citep{riba2020kornia}. We use (arbitrary, 360 degree) rotations, affine transformations, noise, and crops as our augmentation types. Full experimental details and Kornia parameters can be found in Appendix \ref{ExperimentDetails}.  

\begin{rem}
What exactly is different from our approach to simply training a classifier on augmented data directly? A classifier $C$ trained on data of the form $\{(F\big(t(a, x_i)\big), \text{label}(x_i)\}_{i=1}^N$ would indeed be an invariant classifier, but the latent space of $F$ itself need not be invariant. Indeed, under natural assumptions, we can have $(C\circ F)\big(t(a, x)\big)=(C\circ F)(x)$, but $F(t(a, x))$ need not equal $F(x)$, even approximately. If one only wants an invariant classifier going into logit space $C:Z\to\mathbb{R}^m$ for $m$ classes, then this is acceptable. However, the advantage of having an adapter $E_{\theta}$ that turns $E_{\theta}\circ F$ into a $(t, \mu_X, F, V)$-invariant encoder is that one then obtains invariance for any downstream task, not just classification. Again, our aim is to alter the latent space of $F$ itself without degrading its existing performance. This is why we train our classifiers on non-augmented data and then evaluate on augmented data: to show that the latent space itself has been reshaped. In particular, evaluating linear probes on the latent space is a standard evaluation protocol in (self-supervised) representation learning. 
\end{rem}

\subsubsection{Structure Evaluation}
To evaluate whether or not the original latent space is preserved, we test to what degree $E_{\theta}$ acts isometrically on $F_{\sharp}\text{supp}(\mu_X)$. Specifically, we consider scatter plots of the form
\begin{equation}\label{ScatterPlot}
\big\{(\Vert F(x) - F(y)\Vert_2, \Vert (E_{\theta}\circ F)(x) - (E_{\theta}\circ F)(y)\Vert_2)\big\}_{x, y\in\text{supp}(\mu_X)}. 
\end{equation}
That is, we plot $\Vert (E_{\theta}\circ F)(x) - (E_{\theta}\circ F)(y)\Vert_2$ against $\Vert F(x) - F(y)\Vert_2$. If $E_{\theta}$ is exactly isometric on the original, non-augmented features, then all points will lie on the line $y = x$. Naturally, we don't expect a perfect isometry in practice, so we record several complementary statistics that together characterize the quality of the approximation.

First, we record the lower and upper Lipschitz constants $L_1$ and $L_2$, defined by 
\begin{align}
L_1 &\coloneqq \min_{\substack{x, y\in\text{supp}(\mu_X) \\ x\neq y}}\frac{\Vert (E_{\theta}\circ F)(x) - (E_{\theta}\circ F)(y)\Vert_2}{\Vert F(x) - F(y)\Vert_2}, \text{ and}\\
L_2 &\coloneqq \max_{\substack{x, y\in\text{supp}(\mu_X) \\x\neq y}}\frac{\Vert (E_{\theta}\circ F)(x) - (E_{\theta}\circ F)(y)\Vert_2}{\Vert F(x) - F(y)\Vert_2}. \nonumber
\end{align}
For a perfect isometry, $L_1 = L_2 = 1$.

Second, we record the slope $m$ and intercept $b$ of the ordinary least-squares fit to (\ref{ScatterPlot}), together with the coefficient of determination
\begin{equation}
R^2=1-\frac{\text{SS}_{\text{res}}}{\text{SS}_{\text{tot}}},
\end{equation}
where $\text{SS}_{\text{res}}$ is the sum of squared residuals of the fitted line and $\text{SS}_{\text{tot}}$ is the total sum of squares. These quantities characterize the best affine approximation to the relationship between input and encoded distances and how tightly the data concentrates around it.

Third, to directly quantify deviation from isometric behavior, we report the root-mean-square deviation (RMSD) of the encoded distances from the input distances. Writing $d_{xy} = \Vert F(x) - F(y)\Vert_2$ and $e_{xy} = \Vert (E_{\theta}\circ F)(x) - (E_{\theta}\circ F)(y)\Vert_2$ for distinct pairs $x, y \in \text{supp}(\mu_X)$, RMSD is defined by
\begin{equation}
\text{RMSD} = \sqrt{\frac{1}{N}\sum_{x \neq y}(e_{xy} - d_{xy})^2},
\end{equation}
where $N$ is the number of distinct pairs. This measures the typical absolute deviation from the isometry line $y = x$ in the same units as the pairwise distances. We additionally report two scale-free normalizations that facilitate comparison across different pretrained networks and latent space dimensions: the normalized RMSD (NRMSD), defined as
\begin{equation}
\text{NRMSD} = \frac{\text{RMSD}}{d_{\max} - d_{\min}},
\end{equation}
where $d_{\max}$ and $d_{\min}$ are the maximum and minimum values of $\{d_{xy}\}$, and the coefficient of variation of the RMSD (CVRMSD), defined as
\begin{equation}
\text{CVRMSD} = \frac{\text{RMSD}}{\bar{d}},
\end{equation}
where $\bar{d}$ is the mean input distance. Both normalized quantities equal zero for a perfect isometry. NRMSD expresses the typical deviation as a fraction of the total range of input distances, while CVRMSD expresses it as a fraction of the mean input distance.

These quantities provide the most direct evaluation for the structure-preservation constraint demanded by $(t,\mu_X, F, V)$-invariance where $V$ is a (local, approximate) isometry. That said, our evaluation code also contains optional computations of (normalized) spectral statistics and heat kernel statistics of $k$-nn feature graphs for a range of possible values of $k$. These tests are meant to further evaluate how well the encoder $E_{\theta}$ preserves local geometric properties of the latent space, and we found that our trained encoders consistently performed well. Overall, though, these tests were not as informative or illuminating as the direct isometry testing outlined above, so we leave them out to streamline our presentation.    

Finally, we stress once more that we are interested in pretrained features taken as a point cloud in a Euclidean space. We are not concerned with structure in the sense of low-dimensional intrinsic manifolds or class structure. If the metric structure of the original pretrained latent space is perfectly preserved, then downstream tasks on the original distribution will continue to behave exactly as expected. More generally, for any downstream task involving, say, linear probes on the latent space, more or less distortion, which is to say, more or less rigidity for the choice of $V$, may be tolerated.       

\subsection{Results}\label{Results}

\subsubsection{DINO (Transformer Backbone)}

Table~\ref{tab:classification-dino} shows classification accuracy on both original and augmented data for the pretrained network $F=\text{DINO}$ and for one-hidden-layer MLP encoders $E_{\theta}$ trained with the four losses enumerated previously. The first number of each entry reports classification accuracy on non-augmented data, and the second number is accuracy on augmented data. For example, the entry for row (NC, MaWa) and column Rotation, namely, the pair \pair{98.62}{94.74}, should be read as follows: The composite network $C\circ E_{\theta}\circ F$ (for a nonlinear classifier $C$ trained on non-augmented data and for $E_{\theta}$ trained for rotation-invariance with MaWa minimization) achieves $98.62\%$ classification accuracy on non-rotated STL10 DINO features (the first number). Further, on arbitrarily rotated images processed through $F$, $C\circ E_{\theta}\circ F$ achieves $94.74\%$ accuracy (the second number). This is in contrast to the EC, or end-to-end classifier, row, which shows that the network $C\circ F$ (for $C$ nonlinear with total depth equal to $C\circ E_{\theta}$) drops to $71.22\%$ accuracy for arbitrarily rotated images.

\begin{table}[htbp]
\caption{Classification comparison across models for linear (LC), nonlinear (NC), and end-to-end (EC) classifiers on STL10 with DINO features. First number is accuracy on original data; second is accuracy on augmented data.}
\label{tab:classification-dino}
\centering
\renewcommand{\arraystretch}{1.3}
\setlength{\tabcolsep}{4pt}
\begin{tabular}{llcccc}
\toprule
\multicolumn{2}{l}{Augmentation} & Rotation & Affine & Noise & Crop \\
\midrule
 \textbf{LC} & MaWa & \pair{98.22}{94.66} & \pair{98.19}{97.20} & \pair{98.11}{86.09} & \pair{98.20}{98.11} \\
 & WaCo & \pair{97.66}{91.09} & \pair{98.19}{96.33} & \pair{97.26}{78.78} & \pair{97.97}{97.66} \\
 & SCLR & \pair{91.21}{86.30} & \pair{97.40}{95.90} & \pair{94.24}{44.47} & \pair{97.85}{97.60} \\
 & HSIC & \pair{77.38}{72.98} & \pair{96.73}{90.64} & \pair{95.55}{45.47} & \pair{95.06}{93.93} \\
\midrule
 \textbf{NC} & MaWa & \pair{98.62}{$\mathbf{94.74}$} & \pair{98.65}{$\mathbf{97.43}$} & \pair{98.55}{$\mathbf{86.23}$} & \pair{98.66}{$\mathbf{98.36}$} \\
 & WaCo & \pair{98.02}{92.30} & \pair{98.42}{96.83} & \pair{97.71}{80.75} & \pair{98.32}{98.00} \\
 & SCLR & \pair{91.38}{86.72} & \pair{97.41}{95.71} & \pair{95.40}{57.75} & \pair{97.92}{97.64} \\
 & HSIC & \pair{87.42}{81.28} & \pair{97.24}{92.10} & \pair{97.02}{45.78} & \pair{96.56}{95.76} \\
\midrule
\textbf{EC} &  & \pair{98.58}{71.22} & \pair{98.58}{94.65} & \pair{98.58}{58.22} & \pair{98.58}{98.10} \\
\bottomrule
\end{tabular}
\end{table}

Across augmentation types, the MaWa minimization loss and the WaCo maximization loss perform strongly, with MaWa consistently achieving the best results. The SimCLR and HSIC losses are not competitive on rotations and noise augmentations. The success of the EC case for affine and crop augmentations, as well as the stronger results for SimCLR and HSIC on these augmentation types, is expected: The DINO model is already strongly (approximately) invariant to these augmentations, since crops and moderate affine transformations are central to how the model was trained. Given this, we focus going forward on the results for rotation and noise augmentations where the gap between the adapter-equipped networks and the baseline is most pronounced.

Importantly, the MaWa and WaCo losses consistently preserve the structure of the original latent space, whereas SimCLR and HSIC introduce major corruptions. Tables~\ref{tab:structure-dino-mawa-waco-rotation} and \ref{tab:structure-dino-mawa-waco-noise} show that MaWa minimization achieves near-isometric behavior on the latent distribution: For rotations, $R^2=0.96$, $\text{NRMSD} = \text{CVRMSD}=0.02$, and the Lipschitz constants are tightly bounded ($L_1=0.77$, $L_2=1.11$). The WaCo loss preserves the latent space less tightly ($R^2=0.55$ at full dimension), though it still yields competitive classification results and additionally supports dimensionality reduction. Reducing from $d=384$ to $d=96$ maintains comparable classification accuracy while slightly improving the $R^2$ of the linear fit.

\begin{table}[htbp]
  \caption{Structure preservation for MaWa and WaCo rotation-invariant encoders on DINO features.}
  \label{tab:structure-dino-mawa-waco-rotation}
  \centering
  \renewcommand{\arraystretch}{1.25}
  \setlength{\tabcolsep}{4pt}
  \begin{tabular}{ll ccc}
    \toprule
    \multicolumn{2}{l}{Structure} & \multicolumn{3}{c}{STL10 + DINO + Rotation} \\
    \cmidrule(lr){3-5}
    Preservation & & MaWa ($d=384$) & WaCo ($d=384$) & WaCo ($d=96$) \\
    \midrule
    Similarity & \pair{$L_1$}{$L_2$} & \pair{0.77}{1.11} & \pair{0.48}{1.62} & \pair{0.48}{1.78} \\
    Tests & \pair{$m$}{$b$} & \pair{0.99}{0.43} & \pair{1.11}{-15.80} & \pair{1.18}{-25.00} \\
    & $R^2$ & 0.96 & 0.55 & 0.66 \\
    & RMSD & 1.78 & 9.16 & 9.00 \\
    & NRMSD & 0.02 & 0.09 & 0.09 \\
    & CVRMSD & 0.02 & 0.08 & 0.08 \\
    \midrule
    Classification & LC & \pair{98.22}{94.66} & \pair{97.66}{91.09} & \pair{97.28}{91.19} \\
      & NC & \pair{98.62}{94.74} & \pair{98.02}{92.30} & \pair{98.02}{92.82} \\
     & EC & \pair{98.58}{71.22} & \pair{98.58}{71.22} & \pair{98.58}{71.22} \\
    \bottomrule
  \end{tabular}
\end{table}

The results for noise augmentations reported in Table~\ref{tab:structure-dino-mawa-waco-noise} are similar. The drop in accuracy is more pronounced, since noise is a highly destructive, non-invertible augmentation, but the MaWa loss still yields a nearly $28$ percentage point increase in accuracy over the EC baseline ($86.23\%$ versus $58.22\%$ in the NC case). Further, structure preservation remains strong, with $R^2=0.98$ and $\text{NRMSD} = \text{CVRMSD}=0.01$ for the MaWa loss.

\begin{table}[htbp]
  \caption{Structure preservation for MaWa and WaCo noise-invariant encoders on DINO features.}
  \label{tab:structure-dino-mawa-waco-noise}
  \centering
  \renewcommand{\arraystretch}{1.25}
  \setlength{\tabcolsep}{4pt}
  \begin{tabular}{ll ccc}
    \toprule
    \multicolumn{2}{l}{Structure} & \multicolumn{3}{c}{STL10 + DINO + Noise} \\
    \cmidrule(lr){3-5}
    Preservation & & MaWa ($d=384$) & WaCo ($d=384$) & WaCo ($d=96$) \\
    \midrule
    Similarity & \pair{$L_1$}{$L_2$} & \pair{0.87}{1.13} & \pair{0.47}{1.65} & \pair{0.47}{1.59} \\
    Tests & \pair{$m$}{$b$} & \pair{1.00}{0.24} & \pair{1.11}{-14.05} & \pair{1.18}{-25.55} \\
    & $R^2$ & 0.98 & 0.52 & 0.64 \\
    & RMSD & 1.25 & 9.10 & 9.52 \\
    & NRMSD & 0.01 & 0.09 & 0.09 \\
    & CVRMSD & 0.01 & 0.08 & 0.09 \\
    \midrule
    Classification & LC & \pair{98.11}{86.09} & \pair{97.26}{78.78} & \pair{97.20}{79.49} \\
      & NC & \pair{98.55}{86.23} & \pair{97.71}{80.75} & \pair{97.58}{81.85} \\
     & EC & \pair{98.58}{58.22} & \pair{98.58}{58.22} & \pair{98.58}{58.22} \\
    \bottomrule
  \end{tabular}
\end{table}

We also visualize these structure-preservation results directly by plotting the best linear fit for the scatter plot of input and encoded distance pairs. Figures~\ref{fig:mawa-rotation} and \ref{fig:waco-rotation} are for the MaWa and WaCo losses, respectively, in the case of rotation invariance, and we see clearly the additional variance introduced by the WaCo loss. The plots for noise invariance look nearly identical.  

\begin{figure}[ht]
  \centering
\includegraphics[width=0.65\textwidth]{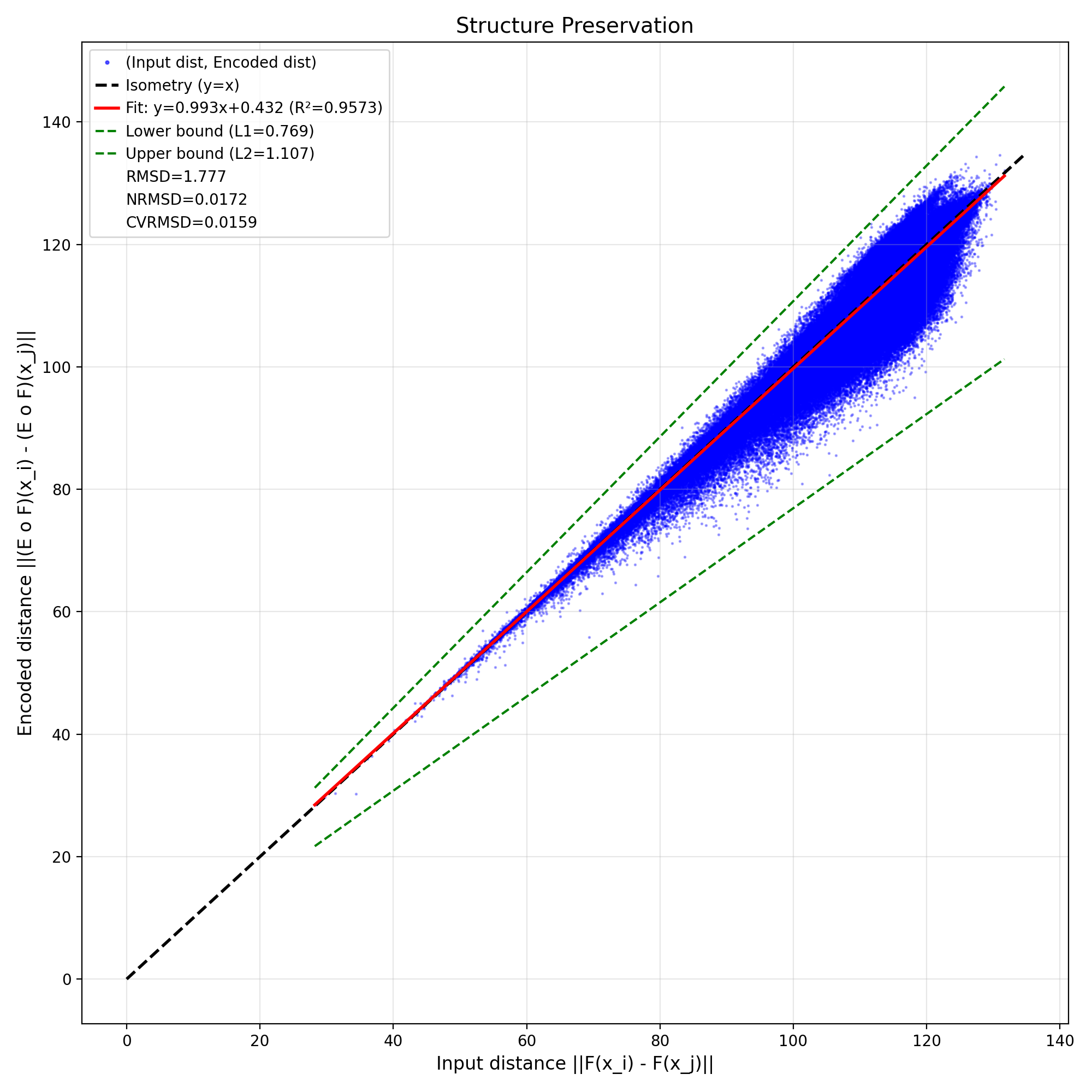}
\caption{Structure preservation for MaWa rotation-invariant encoder on DINO features.}
\label{fig:mawa-rotation}
\end{figure}

\begin{figure}[ht]
\centering
\includegraphics[width=0.65\textwidth]{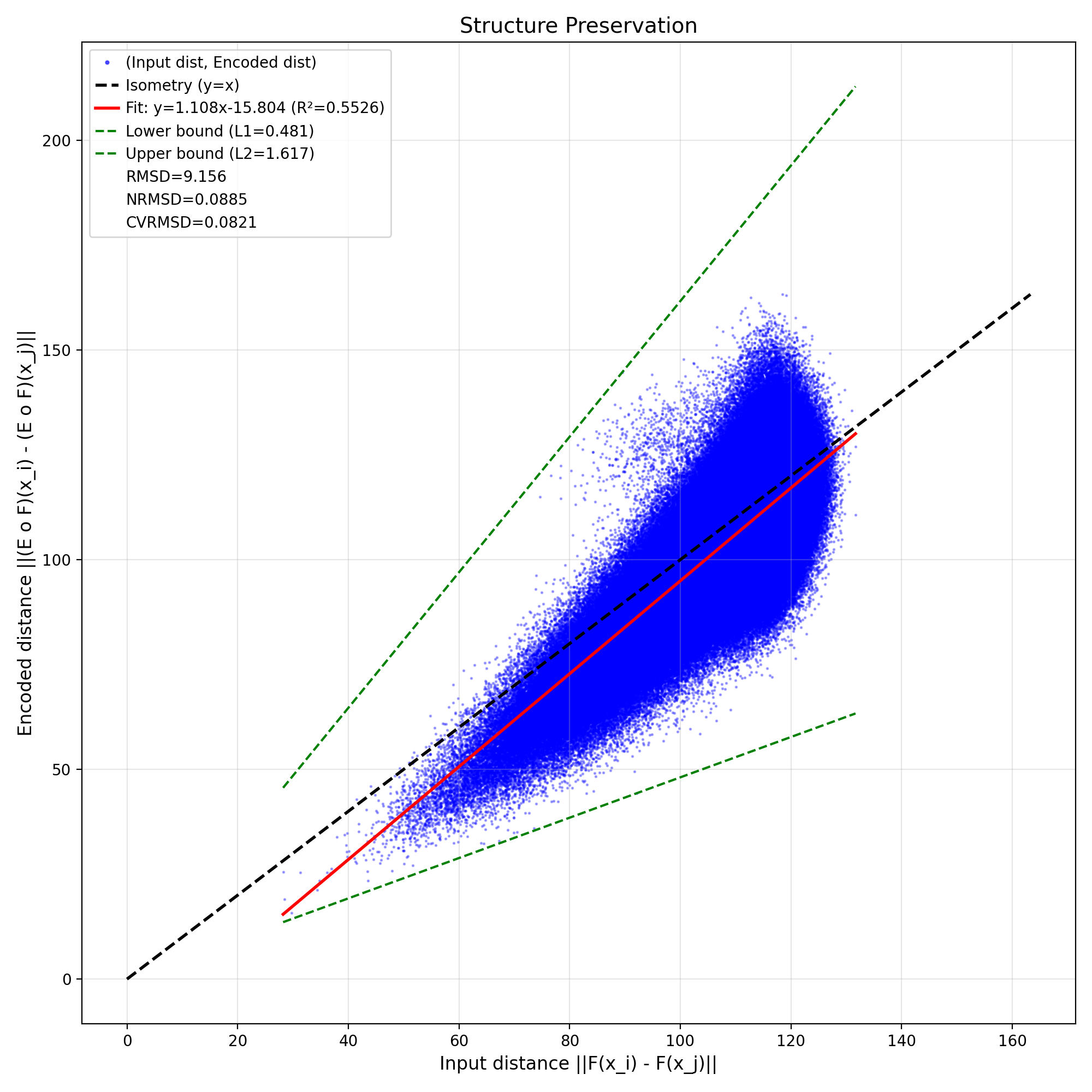}
\caption{Structure preservation for WaCo rotation-invariant encoder on DINO features.}
\label{fig:waco-rotation}
\end{figure}

In contrast, the structural results for SimCLR and HSIC on rotation and noise augmentations in Tables~\ref{tab:structure-dino-sclr-hsic-rotation} and \ref{tab:structure-dino-sclr-hsic-noise} show that $R^2$ plummets (to $0.04$ and $0.06$ for rotations, respectively), while NRMSD and CVRMSD exceed $1.0$, indicating that the encoded distances bear little resemblance to the original pairwise distances. Figures~\ref{fig:sclr-rotation} and \ref{fig:hsic-rotation} illustrate the contrast directly. The HSIC loss, in particular, leads to near-total collapse of the distance structure.

\begin{table}[htbp]
  \caption{Structure preservation for SimCLR and HSIC rotation-invariant encoders on DINO features.}
  \label{tab:structure-dino-sclr-hsic-rotation}
  \centering
  \renewcommand{\arraystretch}{1.25}
  \setlength{\tabcolsep}{4pt}
  \begin{tabular}{ll cc}
    \toprule
    \multicolumn{2}{l}{Structure} & \multicolumn{2}{c}{STL10 + DINO + Rotation} \\
    \cmidrule(lr){3-4}
    Preservation & & SCLR ($d=384$) & HSIC ($d=384$) \\
    \midrule
    Similarity & \pair{$L_1$}{$L_2$} & \pair{0.21}{13.91} & \pair{0.01}{0.33} \\
    Tests & \pair{$m$}{$b$} & \pair{3.68}{-56.11} & \pair{0.04}{-0.89} \\
    & $R^2$ & 0.04 & 0.06 \\
    & RMSD & 286.57 & 108.63 \\
    & NRMSD & 2.77 & 1.05 \\
    & CVRMSD & 2.57 & 0.97 \\
    \midrule
    Classification & LC & \pair{91.21}{86.30} & \pair{77.38}{72.98} \\
      & NC & \pair{91.38}{86.72} & \pair{87.42}{81.28} \\
     & EC & \pair{98.58}{71.22} & \pair{98.58}{71.22} \\
    \bottomrule
  \end{tabular}
\end{table}

\begin{table}[htbp]
  \caption{Structure preservation for SimCLR and HSIC noise-invariant encoders on DINO features.}
  \label{tab:structure-dino-sclr-hsic-noise}
  \centering
  \renewcommand{\arraystretch}{1.25}
  \setlength{\tabcolsep}{4pt}
  \begin{tabular}{ll cc}
    \toprule
    \multicolumn{2}{l}{Structure} & \multicolumn{2}{c}{STL10 + DINO + Noise} \\
    \cmidrule(lr){3-4}
    Preservation & & SCLR ($d=384$) & HSIC ($d=384$) \\
    \midrule
    Similarity & \pair{$L_1$}{$L_2$} & \pair{2.09}{19.34} & \pair{0.22}{12.47} \\
    Tests & \pair{$m$}{$b$} & \pair{11.01}{-223.70} & \pair{3.12}{-90.81} \\
    & $R^2$ & 0.16 & 0.03 \\
    & RMSD & 920.14 & 218.36 \\
    & NRMSD & 8.90 & 2.11 \\
    & CVRMSD & 8.25 & 1.96 \\
    \midrule
    Classification & LC & \pair{94.24}{44.47} & \pair{95.55}{45.47} \\
      & NC & \pair{95.40}{57.75} & \pair{97.02}{45.78} \\
     & EC & \pair{98.58}{58.22} & \pair{98.58}{58.22} \\
    \bottomrule
  \end{tabular}
\end{table}

\begin{figure}[ht]
  \centering
\includegraphics[width=0.65\textwidth]{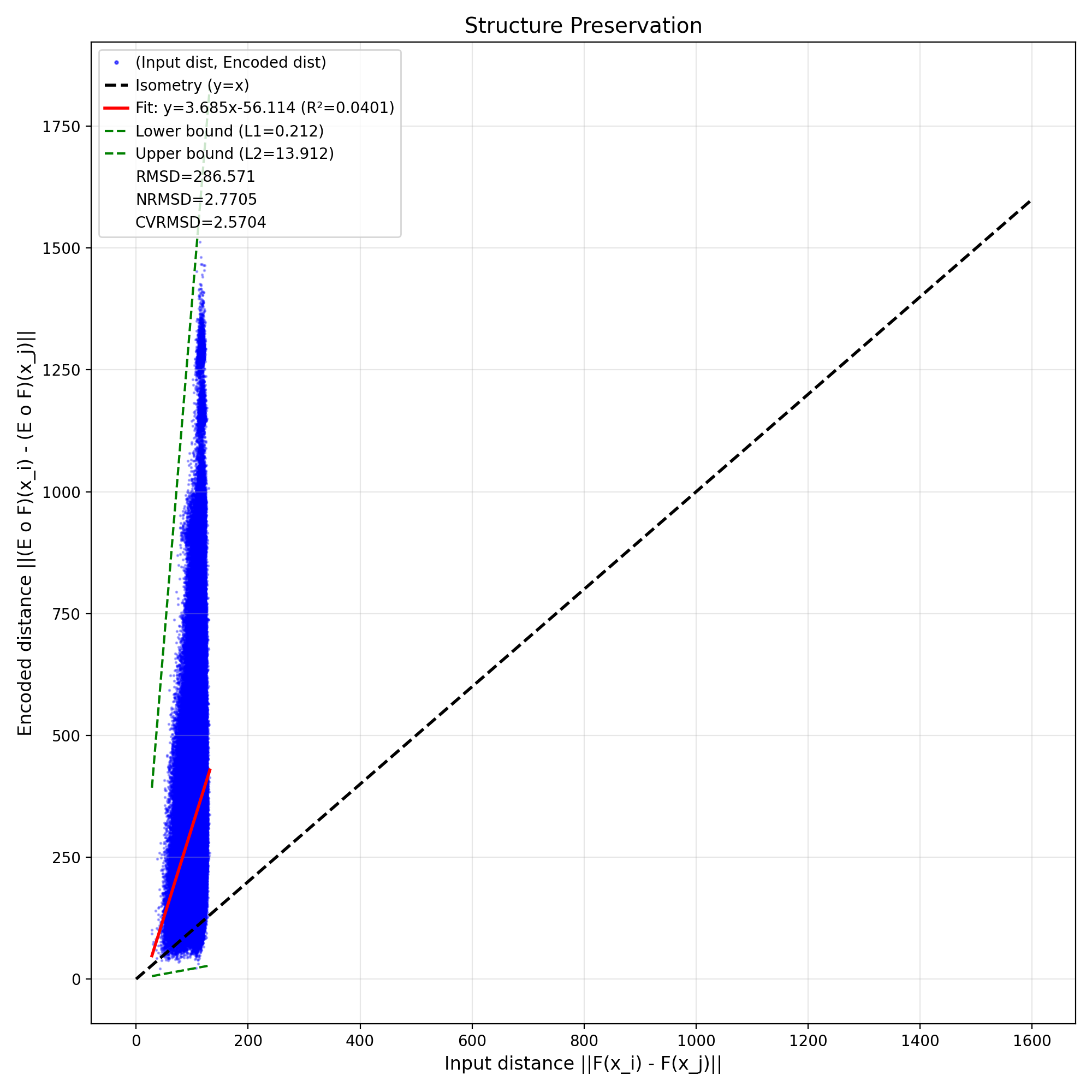}
\caption{Structure preservation for SimCLR rotation-invariant encoder on DINO features.}
\label{fig:sclr-rotation}
\end{figure}

\begin{figure}[ht]
\centering
\includegraphics[width=0.65\textwidth]{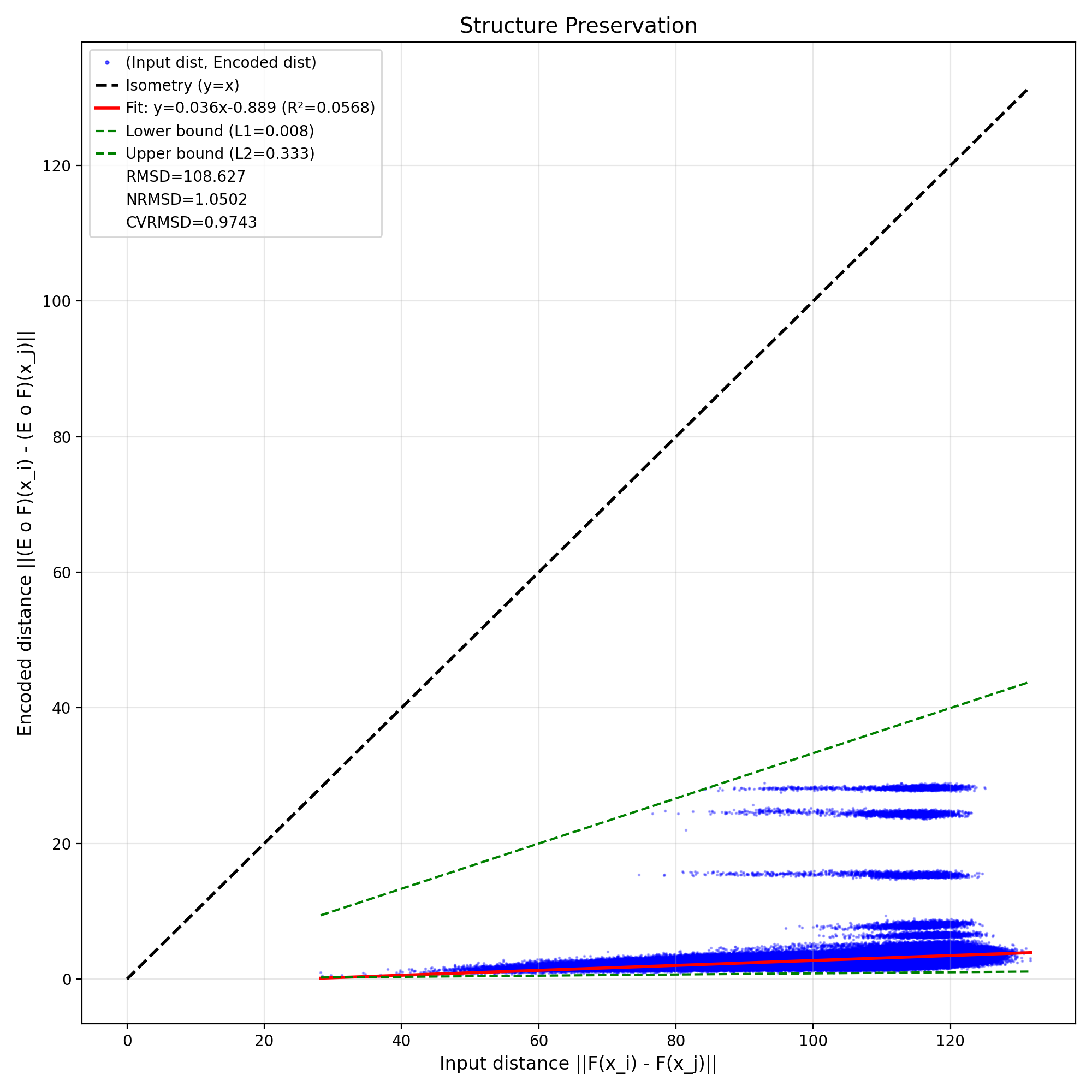}
\caption{Structure preservation for HSIC rotation-invariant encoder on DINO features.}
\label{fig:hsic-rotation}
\end{figure}

We provide one final comparison of the different methods for the case of $F=\text{DINO}$. Figure~\ref{fig:tsne-comparisons} shows $t$-SNE visualizations of the original STL10 DINO latent distribution $F_{\sharp}\mu_X$ compared to $(E_{\theta}\circ F)_{\sharp}\mu_X$ for encoders trained for noise invariance using one of the four losses. Both optimal transport-based losses strongly preserve the class structure (in virtue of approximately preserving metric structure), whereas SimCLR and HSIC produce noticeable corruption. The $t$-SNE plots for rotation invariance show exactly similar patterns. 

\begin{figure}[ht]
  \centering
  \begin{minipage}{0.5\textwidth}
      \centering
      \includegraphics[width=\textwidth]{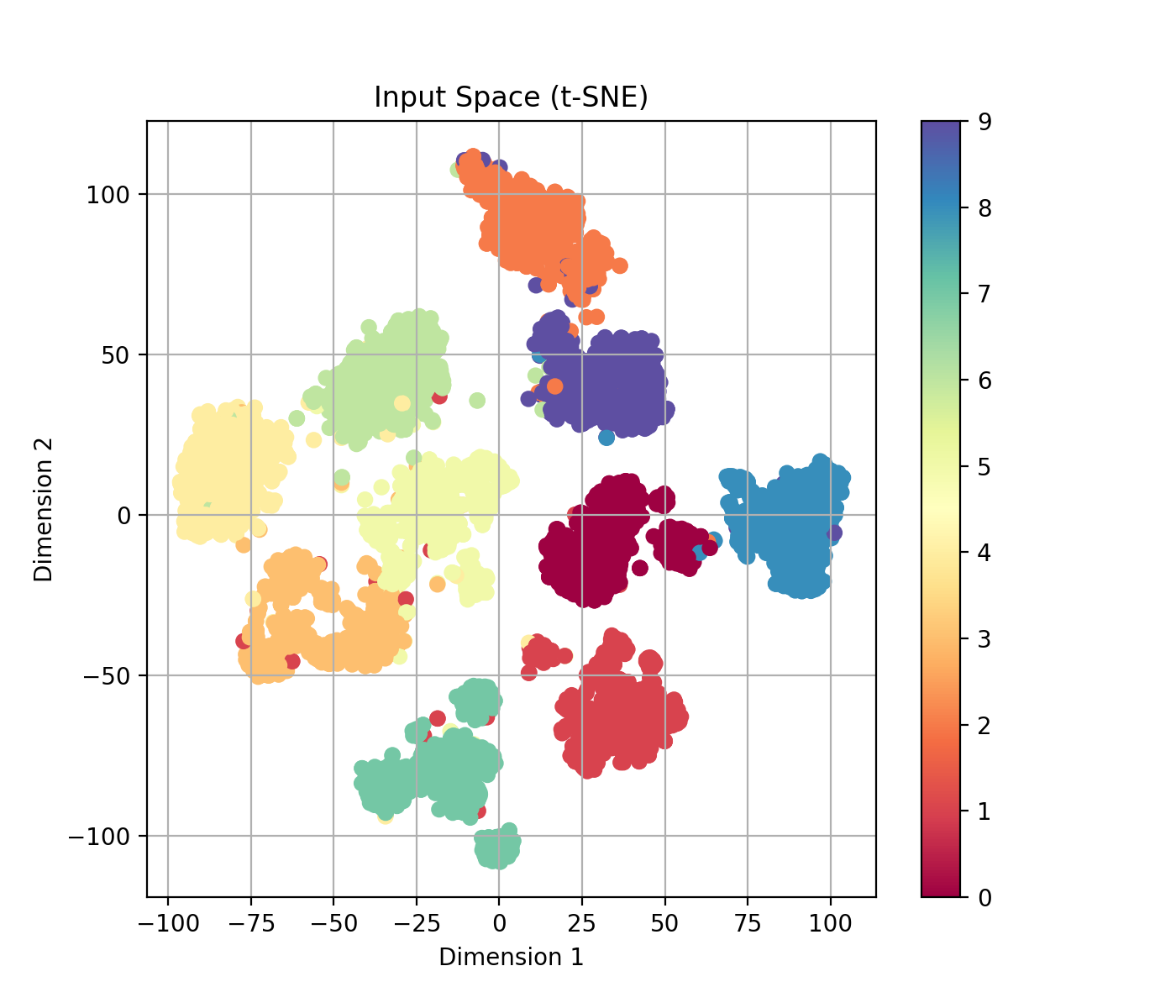}
      \caption*{Original DINO Latent Space}
  \end{minipage}
  
  \begin{minipage}{0.45\textwidth}
      \centering
      \includegraphics[width=\textwidth]{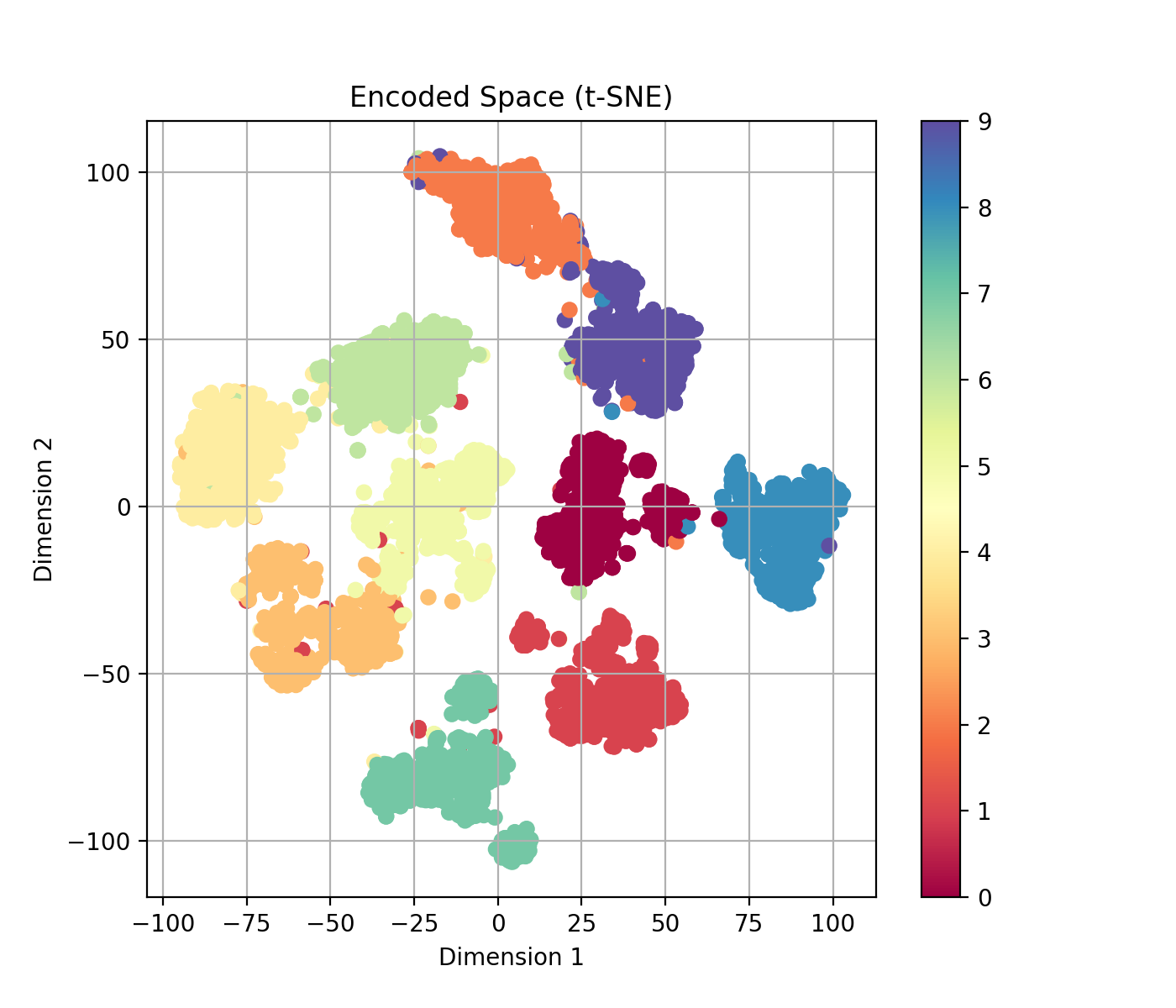}
      \caption*{MaWa}
  \end{minipage}
  \hfill
  \begin{minipage}{0.45\textwidth}
      \centering
      \includegraphics[width=\textwidth]{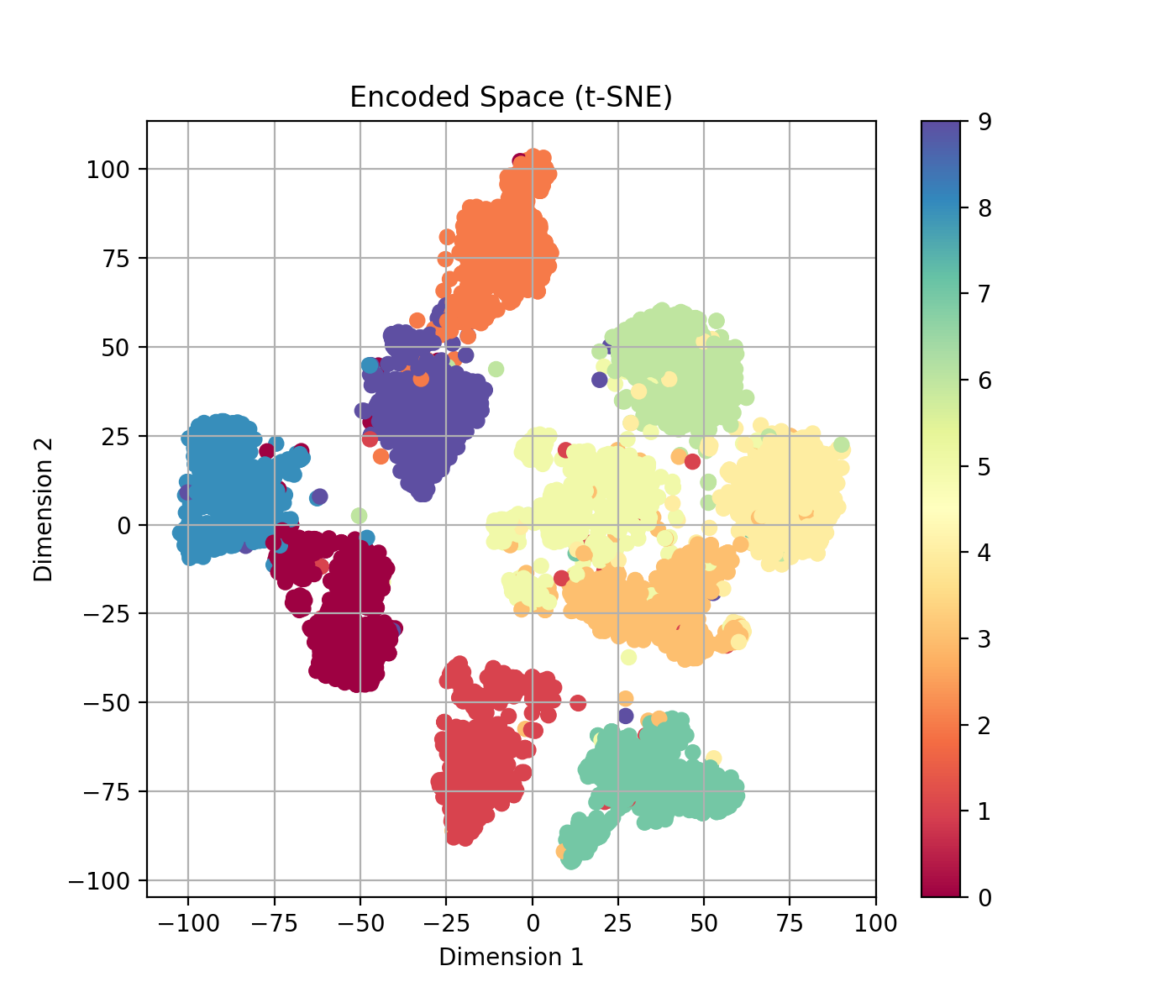}
      \caption*{WaCo}
  \end{minipage}

    \begin{minipage}{0.45\textwidth}
      \centering
      \includegraphics[width=\textwidth]{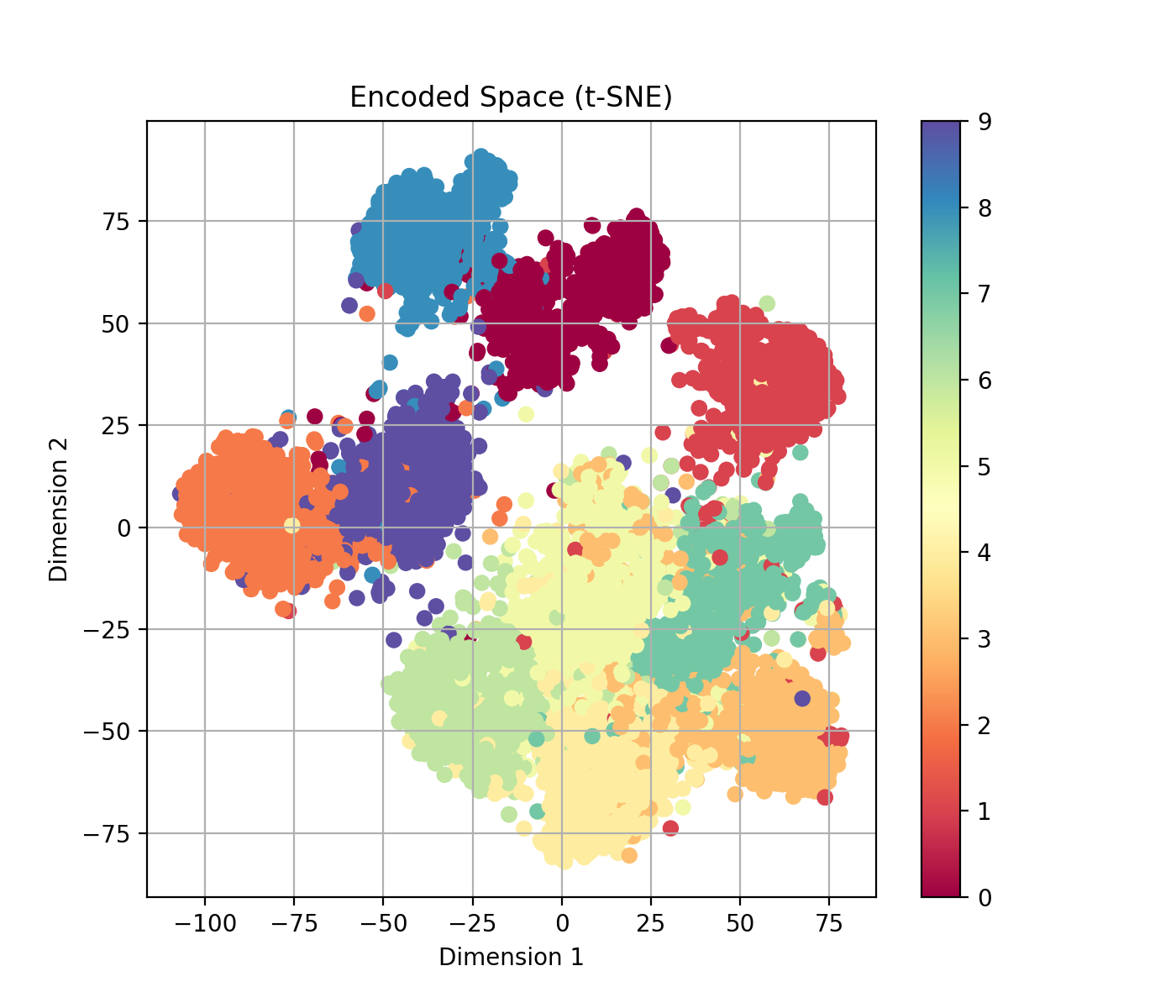}
      \caption*{SCLR}
  \end{minipage}
  \hfill
  \begin{minipage}{0.45\textwidth}
      \centering
      \includegraphics[width=\textwidth]{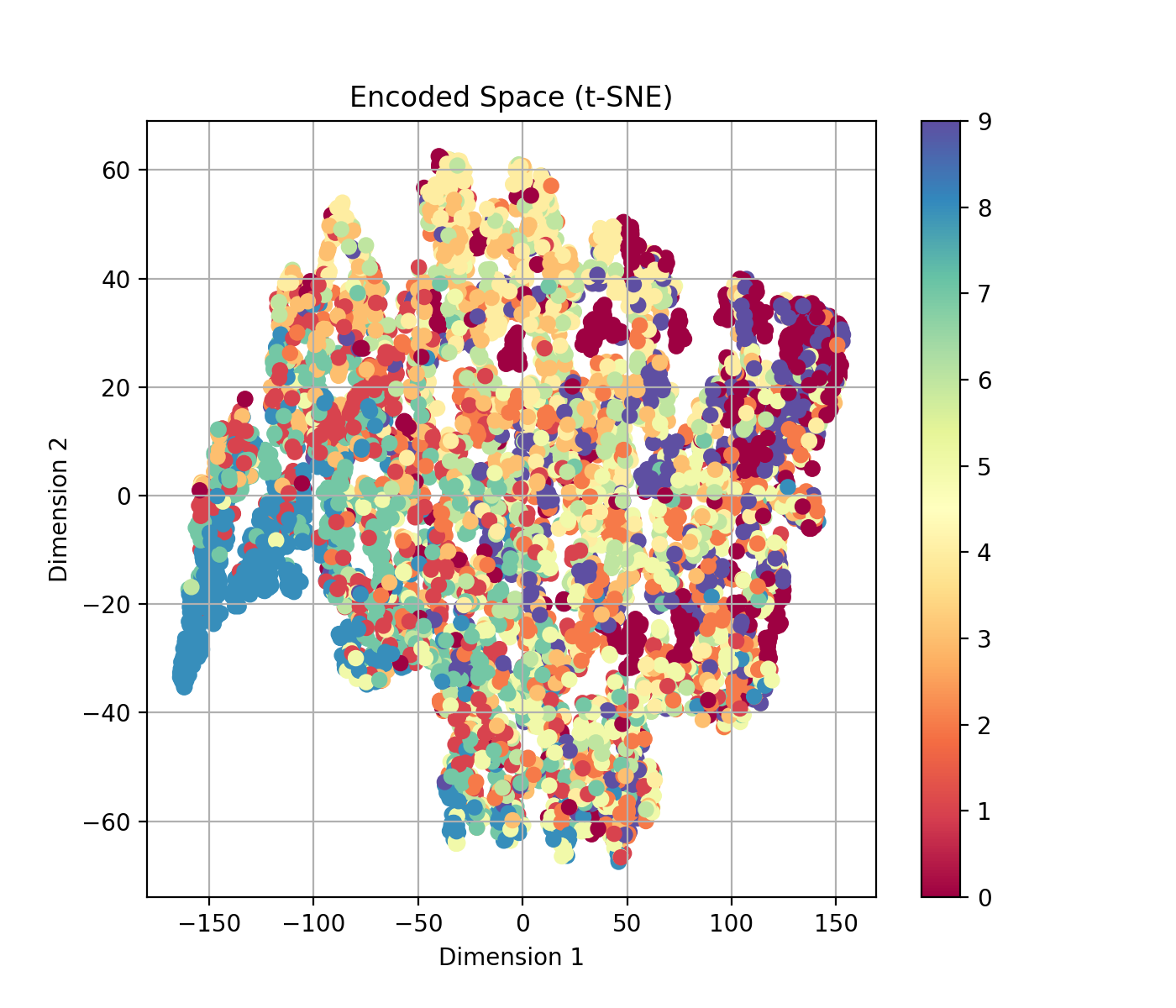}
      \caption*{HSIC}
  \end{minipage}
  \caption{$t$-SNE visualizations of the original STL10 DINO latent distribution $F_{\sharp}\mu_X$ compared to $(E_{\theta}\circ F)_{\sharp}\mu_X$ for encoders trained for noise invariance with one of the four losses.}
  \label{fig:tsne-comparisons}
\end{figure}

\subsubsection{SwAV (ResNet Backbone)}

We now illustrate how the choice of pretrained network $F$ affects results. For $F=\text{SwAV}$, Table~\ref{tab:classification-swav} shows classification results across all four augmentation types, and Tables~\ref{tab:structure-swav-mawa-waco-rotation} and \ref{tab:structure-swav-mawa-waco-noise} report the corresponding structure-preservation metrics for rotation and noise. (We again focus here on rotations and noise, since, as with DINO, SwAV already has stronger baseline invariance to crops and affine transformations.) Invariant classification is not as strong as for the DINO transformer, but the relative increase in accuracy over the EC baseline is actually greater. For noise, for example, the EC classifier is essentially unusable at $40.82\%$ accuracy, whereas the MaWa encoder achieves $73.18\%$ accuracy in the linear case. Structure preservation remains strong for MaWa and WaCo models. Full results for SwAV are reported in Appendix~\ref{AdditionalExperiments}.

\begin{table}[htbp]
\caption{Classification comparison on STL10 with SwAV features.}
\label{tab:classification-swav}
\centering
\renewcommand{\arraystretch}{1.3}
\setlength{\tabcolsep}{4pt}
\begin{tabular}{llcccc}
\toprule
\multicolumn{2}{l}{Augmentation} & Rotation & Affine & Noise & Crop \\
\midrule
 \textbf{LC} & MaWa & \pair{93.44}{81.58} & \pair{93.70}{88.42} & \pair{93.97}{73.18} & \pair{93.73}{94.32} \\
 & WaCo & \pair{88.21}{80.86} & \pair{90.09}{86.00} & \pair{87.17}{68.23} & \pair{90.74}{91.39} \\
 & SCLR & \pair{79.50}{76.31} & \pair{87.56}{85.62} & \pair{73.94}{46.88} & \pair{92.92}{93.34} \\
 & HSIC & \pair{71.90}{69.49} & \pair{81.24}{78.98} & \pair{65.83}{59.97} & \pair{87.95}{87.94} \\
\midrule
 \textbf{NC} & MaWa & \pair{93.21}{80.79} & \pair{93.46}{$\mathbf{87.74}$} & \pair{93.59}{$\mathbf{72.84}$} & \pair{93.58}{93.98} \\
 & WaCo & \pair{88.59}{$\mathbf{80.98}$} & \pair{90.26}{86.05} & \pair{86.60}{67.70} & \pair{91.14}{91.63} \\
 & SCLR & \pair{79.69}{76.78} & \pair{87.78}{86.06} & \pair{76.39}{47.06} & \pair{92.33}{92.84} \\
 & HSIC & \pair{81.67}{77.83} & \pair{87.96}{85.62} & \pair{74.38}{65.05} & \pair{92.67}{92.69} \\
\midrule
\textbf{EC} &  & \pair{94.23}{50.35} & \pair{94.23}{76.06} & \pair{94.23}{40.82} & \pair{94.23}{$\mathbf{94.33}$} \\
\bottomrule
\end{tabular}
\end{table}

\begin{table}[htbp]
  \caption{Structure preservation for MaWa and WaCo rotation-invariant encoders on SwAV features.}
  \label{tab:structure-swav-mawa-waco-rotation}
  \centering
  \renewcommand{\arraystretch}{1.25}
  \setlength{\tabcolsep}{4pt}
  \begin{tabular}{ll ccc}
    \toprule
    \multicolumn{2}{l}{Structure} & \multicolumn{3}{c}{STL10 + SwAV + Rotation} \\
    \cmidrule(lr){3-5}
    Preservation & & MaWa ($d=2048$) & WaCo ($d=2048$) & WaCo ($d=512$) \\
    \midrule
    Similarity & \pair{$L_1$}{$L_2$} & \pair{0.44}{0.96} & \pair{0.37}{1.80} & \pair{0.51}{1.43} \\
    Tests & \pair{$m$}{$b$} & \pair{0.91}{-1.56} & \pair{1.72}{-7.08} & \pair{1.28}{-3.06} \\
    & $R^2$ & 0.87 & 0.81 & 0.84 \\
    & RMSD & 2.62 & 2.12 & 1.01 \\
    & NRMSD & 0.15 & 0.12 & 0.06 \\
    & CVRMSD & 0.23 & 0.18 & 0.09 \\
    \midrule
    Classification & LC & \pair{93.44}{81.58} & \pair{88.21}{80.86} & \pair{89.34}{81.52} \\
      & NC & \pair{93.21}{80.79} & \pair{88.59}{80.98} & \pair{90.28}{81.41} \\
     & EC & \pair{94.23}{50.35} & \pair{94.23}{50.35} & \pair{94.23}{50.35} \\
    \bottomrule
  \end{tabular}
\end{table}

\begin{table}[htbp]
  \caption{Structure preservation for MaWa and WaCo noise-invariant encoders on SwAV features.}
  \label{tab:structure-swav-mawa-waco-noise}
  \centering
  \renewcommand{\arraystretch}{1.25}
  \setlength{\tabcolsep}{4pt}
  \begin{tabular}{ll ccc}
    \toprule
    \multicolumn{2}{l}{Structure} & \multicolumn{3}{c}{STL10 + SwAV + Noise} \\
    \cmidrule(lr){3-5}
    Preservation & & MaWa ($d=2048$) & WaCo ($d=2048$) & WaCo ($d=512$) \\
    \midrule
    Similarity & \pair{$L_1$}{$L_2$} & \pair{0.58}{0.99} & \pair{0.32}{2.52} & \pair{0.41}{1.63} \\
    Tests & \pair{$m$}{$b$} & \pair{0.94}{-1.11} & \pair{1.76}{-7.46} & \pair{1.14}{-1.93} \\
    & $R^2$ & 0.95 & 0.76 & 0.77 \\
    & RMSD & 1.79 & 2.40 & 1.09 \\
    & NRMSD & 0.10 & 0.13 & 0.06 \\
    & CVRMSD & 0.15 & 0.21 & 0.09 \\
    \midrule
    Classification & LC & \pair{93.97}{73.18} & \pair{87.17}{68.23} & \pair{88.84}{70.38} \\
      & NC & \pair{93.59}{72.84} & \pair{86.60}{67.70} & \pair{89.39}{69.41} \\
     & EC & \pair{94.23}{40.82} & \pair{94.23}{40.82} & \pair{94.23}{40.82} \\
    \bottomrule
  \end{tabular}
\end{table}

\subsubsection{Additional Pretrained Networks}

To test the generality of our approach, we run the full suite of experiments on three additional pretrained networks: DINO with a ResNet backbone (R-DINO, $d=2048$), CLIP ($d=512$), and a supervised ResNet50 ($d=2048$). We also evaluate DINO on TinyImageNet (200 classes) and a 50-class subset of TinyImageNet selected via a greedy procedure to maximize pairwise class separation in the DINO latent space. Full results for all models, augmentation types, and losses are reported in Appendix~\ref{AdditionalExperiments}. We summarize the main findings here.

For CLIP and R-DINO on STL10, the MaWa loss continues to perform strongly across all augmentation types. CLIP, in particular, yields the best overall results. On rotations, the MaWa-invariant CLIP network achieves $94.82\%$ accuracy (LC) versus $88.16\%$ for the EC baseline, with near-perfect structure preservation ($R^2=0.99$, $\text{NRMSD} = \text{CVRMSD}=0.01$). R-DINO shows a pattern intermediate between DINO and SwAV, showing that both loss and backbone matter. The collision rates for these models (Tables~\ref{tab:collision-rdino} and \ref{tab:collision-clip} in Appendix~\ref{AdditionalExperiments}) confirm the pattern: both have low aligned collision rates, predicting the success of MaWa. (See \ref{CollisionRateAnalysis} below for more on collisions.)

The supervised ResNet50 presents a qualitatively different case. Here, the MaWa loss collapses altogether. The competing constraints introduced by the high collision rate (see Table~\ref{tab:collision-resnet50}) prevent the optimizer from finding a meaningful solution, and classification accuracy on even non-augmented data drops to approximately $50\%$. The WaCo loss, which does not anchor to the identity, is more robust to this failure mode and still produces useful (approximately) invariant encoders on ResNet50, achieving, for example, $70.06\%$ accuracy (LC case) on rotated data versus $48.63\%$ for the EC baseline.

For DINO on TinyImageNet, classification accuracy is naturally lower due to the increased number of classes ($200$ versus $10$), but the relative improvements from the adapter networks remain consistent. On the 50-class TinyImageNet subset, where inter-class separation is larger, the MaWa loss achieves $83.59\%$ rotation-invariant accuracy (LC case) versus $53.17\%$ for the EC baseline. 

\subsubsection{Collision Rate Analysis}\label{CollisionRateAnalysis}

The collision rate from Definition~\ref{collision_rate} measures the fraction of augmented samples $F(t(a, x))$ whose nearest clean neighbor in the latent space belongs to a different class. High collision rates indicate that the augmentation has displaced features across class boundaries, creating competing optimization constraints. 

Tables~\ref{tab:collision-dino}--\ref{tab:collision-resnet50} report collision rates for DINO, SwAV, and ResNet50 on STL10, and Tables ~\ref{tab:collision-rdino}--\ref{tab:collision-tinyimagenet50} in Appendix \ref{AdditionalExperiments} report collision rates for all remaining tests. We compute two versions: the raw collision rate on augmented features $F(t(a,x))$, and the aligned collision rate, which is the collision rate after aligning the augmented feature cloud to the clean feature cloud via the best rigid isometry (orthogonal transformation plus translation). The difference isolates how much of the collision problem is due to a globally coherent displacement, correctable by a rigid transformation, versus local, residual entanglement.

Several patterns emerge. First, the self-supervised models (DINO, SwAV, R-DINO, CLIP) exhibit dramatically lower aligned collision rates than the supervised ResNet50. This indicates that augmentations act on self-supervised feature spaces in a globally coherent manner. Second, the aligned collision rate roughly predicts the feasibility of MaWa minimization: For models with lower aligned collision rates, the method succeeds, while ResNet50, which retains substantial residual collisions (e.g., $13.1\%$ for rotations and $41.8\%$ for noise after alignment), is the only model on which MaWa produces collapse. The WaCo loss, which does not anchor to the identity, is more robust to this failure mode.

\begin{table}[htbp]
  \caption{Collision rates for DINO features on STL10 before (raw) and after rigid alignment.}
  \label{tab:collision-dino}
  \centering
  \renewcommand{\arraystretch}{1.25}
  \begin{tabular}{lcc}
    \toprule
    Augmentation & $\mathrm{CR}$ (\%, raw) & $\mathrm{CR}$ (\%, aligned) \\
    \midrule
    Rotation            & 9.20 & 0.00 \\
    Affine             & 0.00 & 0.00 \\
    Noise              & 22.40 & 0.60 \\
    Crop               & 0.00 & 0.00 \\
    \bottomrule
  \end{tabular}
\end{table}

\begin{table}[htbp]
  \caption{Collision rates for SwAV features on STL10 before (raw) and after rigid alignment.}
  \label{tab:collision-swav}
  \centering
  \renewcommand{\arraystretch}{1.25}
  \begin{tabular}{lcc}
    \toprule
    Augmentation & $\mathrm{CR}$ (\%, raw) & $\mathrm{CR}$ (\%, aligned) \\
    \midrule
    Rotation            & 47.00 & 0.10 \\
    Affine             & 15.90 & 0.20 \\
    Noise              & 74.70 & 4.80 \\
    Crop               & 0.10 & 0.00 \\
    \bottomrule
  \end{tabular}
\end{table}

\begin{table}[htbp]
  \caption{Collision rates for supervised ResNet50 features on STL10 before (raw) and after rigid alignment.}
  \label{tab:collision-resnet50}
  \centering
  \renewcommand{\arraystretch}{1.25}
  \begin{tabular}{lcc}
    \toprule
    Augmentation & $\mathrm{CR}$ (\%, raw) & $\mathrm{CR}$ (\%, aligned) \\
    \midrule
    Rotation           & 46.40 & 13.10 \\
    Affine             & 21.00 & 5.10 \\
    Noise              & 77.30 & 41.80 \\
    Crop               & 2.10 & 0.50 \\
    \bottomrule
  \end{tabular}
\end{table}

\subsubsection{WaCo for Dimensionality Reduction}

We discuss one last test to better illuminate the properties of the WaCo loss, which, as noted, can be defined with or without a pretrained network $F$ and can be used simultaneously for dimensionality reduction, or for dimensionality reduction alone. We train an encoder $E_{\theta}$ on MNIST using the ordinary WaCo loss (\ref{WaCoLoss}) with, in the first case, $T=\{\text{id}_X\}$ or with $T=\{\text{id}_X, t\}$ in the second, where $t$ is arbitrary rotations. Figure~\ref{fig:mnist-tsne} shows $t$-SNE visualizations of the learned latent spaces for ordinary MNIST digits plus 90-degree rotations. In the first case, the 90-degree rotated digits are encoded as separate classes, whereas in the second case, we see near perfect overlap between the original classes and their rotated counterparts. (We emphasize again that although we visualize the case of $90$-degree rotations, we trained on arbitrary rotations.) Finally, we note that the notion of invariance we have developed in this work depends on the choice of the initial distribution $\mu_X$. If $\mu_X$ in this final test were instead 90-degree rotations of MNIST digits, or, say, the ordinary MNIST distribution plus an additional class, then that would be the structure we want to preserve, since augmentation invariance in our sense only applies to unique augmentations, as defined previously. 

\begin{figure}[ht]
  \centering
    \begin{minipage}{0.45\textwidth}
      \centering
      \includegraphics[width=\textwidth]{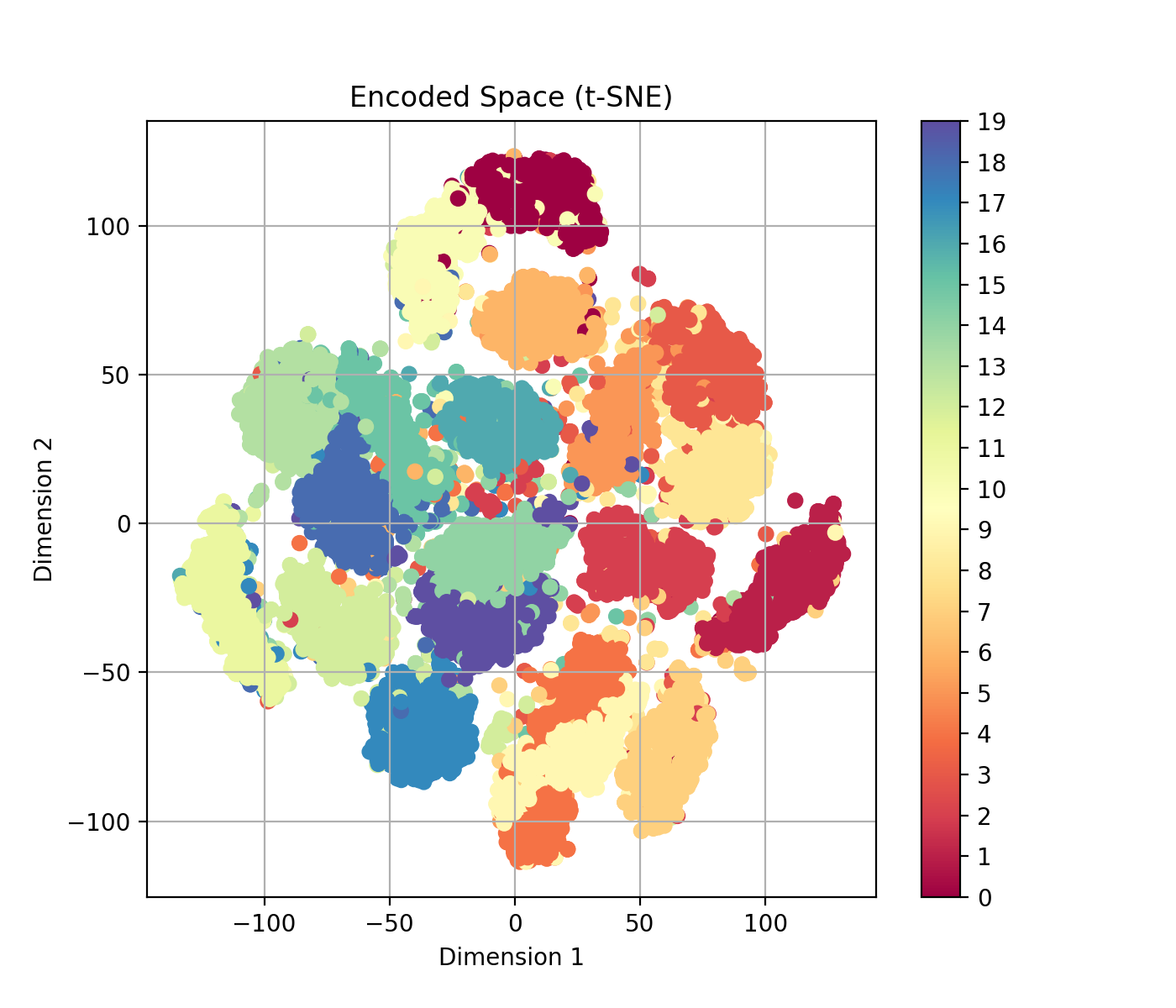}
      \caption*{Dim}
  \end{minipage}
  \hfill
  \begin{minipage}{0.45\textwidth}
      \centering
      \includegraphics[width=\textwidth]{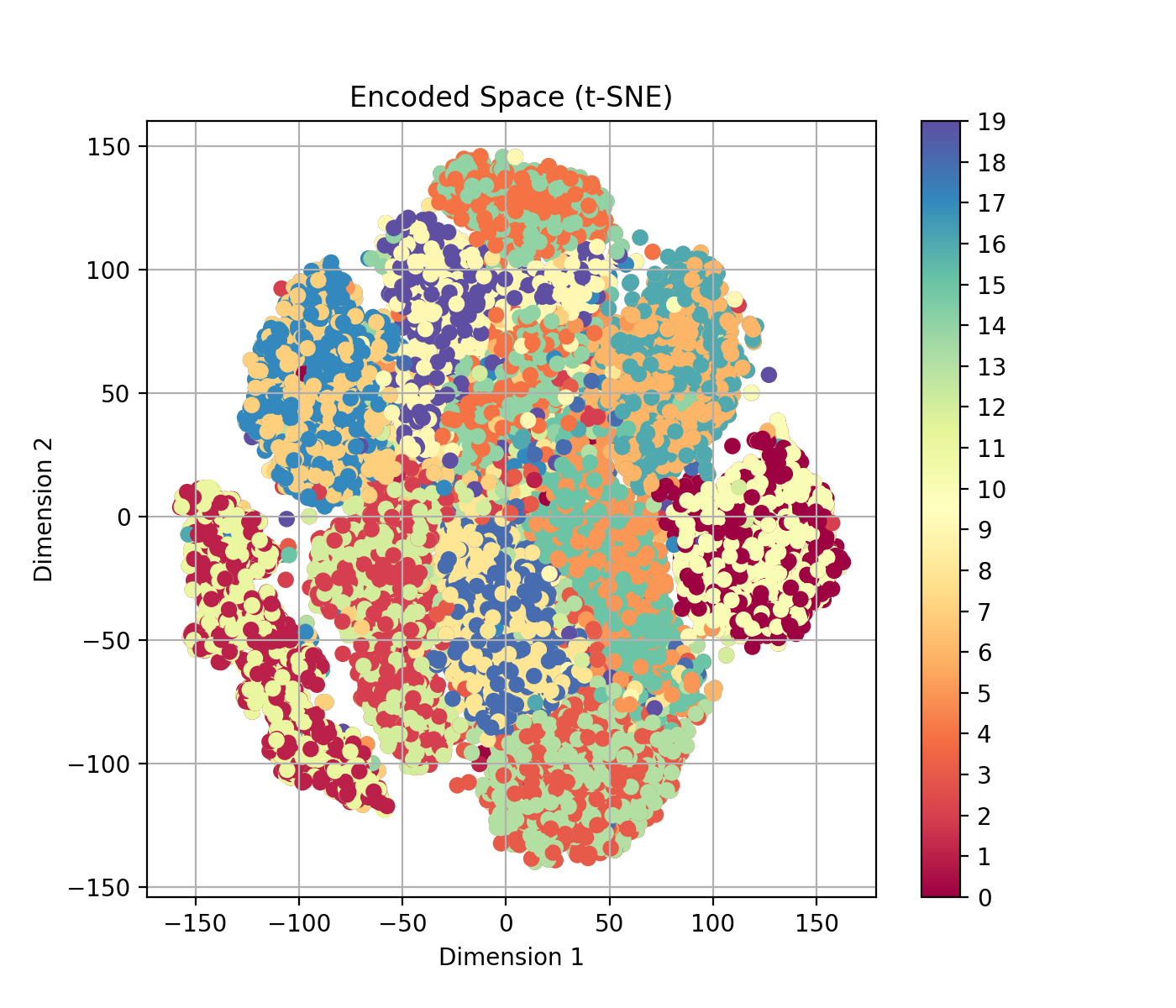}
      \caption*{Dim + Inv}
  \end{minipage}
  \caption{$t$-SNE visualizations of MNIST plus 90-degree rotated digits for $E_{\theta}$ with final dimension $d=64$ trained on the WaCo loss with $T=\{\text{id}_X\}$, which does dimensionality reduction (Dim) only, versus the case of $T=\{\text{id}_X, t\}$, which does Dim plus invariance (Inv) to $t$, where $t$ here is arbitrary rotations.}
  \label{fig:mnist-tsne}
\end{figure}

\section{Conclusion}\label{Conclusion}
We have developed precise definitions of augmented encoders and augmentation invariance with emphasis on the problem of post-training augmentation invariance, in which our goal is to add additional invariance properties to a pretrained network without corrupting its behavior on the original, non-augmented distribution. Our experimental results show that both Markov-Wasserstein minimization and Wasserstein correlation maximization are effective losses for this task across a broad range of settings, whereas SimCLR and HSIC maximization losses were shown to be consistently ineffective. Additionally, we have shown that the proposed methods are lightweight and do not depend on complicated architectures or fine hyperparameter tuning. With the right training objective, post-training augmentation invariance can be achieved, at least approximately, with only a one-hidden-layer MLP appended to the latent space of a frozen, pretrained network.  

\subsection{Limitations}
While we have shown that our methods are effective across a range of augmentation types, including affine transformations, which are themselves composite augmentations involving rotation, translation, scaling, and shear, we note that convergence slows as the augmentation parameter space grows. Improving sample efficiency for larger parameter spaces, for example through curriculum-based sampling or more expressive adapter architectures, is left for future work.

Additionally, our collision rate analysis reveals a structural limitation of the MaWa loss: When the pretrained feature space has high residual collision rates after rigid alignment, as is the case for the supervised ResNet50, the loss objective faces too many competing constraints and collapses. The WaCo loss is more robust in this regime but still produces weaker invariance results than on models with low collision rates. Developing losses or architectural modifications that can handle high-collision-rate feature spaces is left open. 

\subsection{Future Work}
There are several promising directions for future work in this area. First, we pose the question: Are there other losses that can be used for post-training augmentation invariance? We have shown experimentally that the MaWa and WaCo losses are suitable candidates, though further theoretical investigation of their properties, especially the WaCo loss, is still needed. Finally, we highlight the experiments on noise invariance, since these results suggest that these methods may be promising for adversarial robustness, more generally. By treating adversarial attacks as augmentations, an adapter trained on either the MaWa or WaCo loss could potentially produce a robust composite network.

\bibliography{references}

@book{ambrosio2008gradient,
  title={Gradient flows: in metric spaces and in the space of probability measures},
  author={Ambrosio, Luigi and Gigli, Nicola and Savar{\'e}, Giuseppe},
  year={2008},
  publisher={Springer Science \& Business Media}
}

@inproceedings{arjovsky2017wasserstein,
  title={Wasserstein generative adversarial networks},
  author={Arjovsky, Martin and Chintala, Soumith and Bottou, L{\'e}on},
  booktitle={International conference on machine learning},
  pages={214--223},
  year={2017},
  organization={PMLR}
}

@article{bachman2019learning,
  title={Learning representations by maximizing mutual information across views},
  author={Bachman, Philip and Hjelm, R Devon and Buchwalter, William},
  journal={Advances in neural information processing systems},
  volume={32},
  year={2019}
}

@article{bai2023wasserstein,
  title={Wasserstein distributional robustness of neural networks},
  author={Bai, Xingjian and He, Guangyi and Jiang, Yifan and Obloj, Jan},
  journal={Advances in Neural Information Processing Systems},
  volume={36},
  pages={26322--26347},
  year={2023}
}

@inproceedings{belghazi2018mutual,
  title={Mutual information neural estimation},
  author={Belghazi, Mohamed Ishmael and Baratin, Aristide and Rajeshwar, Sai and Ozair, Sherjil and Bengio, Yoshua and Courville, Aaron and Hjelm, Devon},
  booktitle={International conference on machine learning},
  pages={531--540},
  year={2018},
  organization={PMLR}
}

@article{bhagoji2019lower,
  title={Lower bounds on adversarial robustness from optimal transport},
  author={Bhagoji, Arjun Nitin and Cullina, Daniel and Mittal, Prateek},
  journal={Advances in Neural Information Processing Systems},
  volume={32},
  year={2019}
}

@book{bogachev2007measure,
  title={Measure theory},
  author={Bogachev, Vladimir Igorevich and Ruas, Maria Aparecida Soares},
  volume={1},
  year={2007},
  publisher={Springer}
}

@article{bonneel2015sliced,
  title={Sliced and radon wasserstein barycenters of measures},
  author={Bonneel, Nicolas and Rabin, Julien and Peyr{\'e}, Gabriel and Pfister, Hanspeter},
  journal={Journal of Mathematical Imaging and Vision},
  volume={51},
  pages={22--45},
  year={2015},
  publisher={Springer}
}

@phdthesis{bonnotte2013unidimensional,
  title={Unidimensional and evolution methods for optimal transportation},
  author={Bonnotte, Nicolas},
  year={2013},
  school={Universit{\'e} Paris Sud-Paris XI; Scuola normale superiore (Pise, Italie)}
}

@article{bousquet2017optimal,
  title={From optimal transport to generative modeling: the VEGAN cookbook},
  author={Bousquet, Olivier and Gelly, Sylvain and Tolstikhin, Ilya and Simon-Gabriel, Carl-Johann and Schoelkopf, Bernhard},
  journal={arXiv preprint arXiv:1705.07642},
  year={2017}
}

@article{caron2020unsupervised,
  title={Unsupervised learning of visual features by contrasting cluster assignments},
  author={Caron, Mathilde and Misra, Ishan and Mairal, Julien and Goyal, Priya and Bojanowski, Piotr and Joulin, Armand},
  journal={Advances in neural information processing systems},
  volume={33},
  pages={9912--9924},
  year={2020}
}

@inproceedings{caron2021emerging,
  title={Emerging properties in self-supervised vision transformers},
  author={Caron, Mathilde and Touvron, Hugo and Misra, Ishan and J{\'e}gou, Herv{\'e} and Mairal, Julien and Bojanowski, Piotr and Joulin, Armand},
  booktitle={Proceedings of the IEEE/CVF international conference on computer vision},
  pages={9650--9660},
  year={2021}
}

@inproceedings{chen2020simple,
  title={A simple framework for contrastive learning of visual representations},
  author={Chen, Ting and Kornblith, Simon and Norouzi, Mohammad and Hinton, Geoffrey},
  booktitle={International conference on machine learning},
  pages={1597--1607},
  year={2020},
  organization={PmLR}
}

@article{cho2019disintegration,
  title={Disintegration and Bayesian inversion via string diagrams},
  author={Cho, Kenta and Jacobs, Bart},
  journal={Mathematical Structures in Computer Science},
  volume={29},
  number={7},
  pages={938--971},
  year={2019},
  publisher={Cambridge University Press}
}

@inproceedings{clerc2017pointless,
  title={Pointless learning},
  author={Clerc, Florence and Danos, Vincent and Dahlqvist, Fredrik and Garnier, Ilias},
  booktitle={Foundations of Software Science and Computation Structures: 20th International Conference, FOSSACS 2017, Held as Part of the European Joint Conferences on Theory and Practice of Software, ETAPS 2017, Uppsala, Sweden, April 22-29, 2017, Proceedings 20},
  pages={355--369},
  year={2017},
  organization={Springer}
}

@article{courty2016optimal,
  title={Optimal transport for domain adaptation},
  author={Courty, Nicolas and Flamary, R{\'e}mi and Tuia, Devis and Rakotomamonjy, Alain},
  journal={IEEE transactions on pattern analysis and machine intelligence},
  volume={39},
  number={9},
  pages={1853--1865},
  year={2016},
  publisher={IEEE}
}

@article{courty2017joint,
  title={Joint distribution optimal transportation for domain adaptation},
  author={Courty, Nicolas and Flamary, R{\'e}mi and Habrard, Amaury and Rakotomamonjy, Alain},
  journal={Advances in neural information processing systems},
  volume={30},
  year={2017}
}

@article{culbertson2014categorical,
  title={A categorical foundation for Bayesian probability},
  author={Culbertson, Jared and Sturtz, Kirk},
  journal={Applied Categorical Structures},
  volume={22},
  pages={647--662},
  year={2014},
  publisher={Springer}
}

@article{cuturi2013sinkhorn,
  title={Sinkhorn distances: Lightspeed computation of optimal transport},
  author={Cuturi, Marco},
  journal={Advances in neural information processing systems},
  volume={26},
  year={2013}
}

@article{dosovitskiy2020image,
  title={An image is worth 16x16 words: Transformers for image recognition at scale},
  author={Dosovitskiy, Alexey and Beyer, Lucas and Kolesnikov, Alexander and Weissenborn, Dirk and Zhai, Xiaohua and Unterthiner, Thomas and Dehghani, Mostafa and Minderer, Matthias and Heigold, Georg and Gelly, Sylvain and others},
  journal={arXiv preprint arXiv:2010.11929},
  year={2020}
}

@article{dukler2023your,
  title={Your representations are in the network: composable and parallel adaptation for large scale models},
  author={Dukler, Yonatan and Achille, Alessandro and Yang, Hao and Vivek, Varsha and Zancato, Luca and Bowman, Benjamin and Ravichandran, Avinash and Fowlkes, Charless and Swaminathan, Ashwin and Soatto, Stefano},
  journal={Advances in Neural Information Processing Systems},
  volume={36},
  pages={28832--28860},
  year={2023}
}

@phdthesis{eikenberry2023bayesian,
  title={Bayesian Inference for Markov Kernels Valued in Wasserstein Spaces},
  author={Eikenberry, Keenan},
  year={2023},
  school={Arizona State University}
}

@article{fritz2020synthetic,
  title={A synthetic approach to Markov kernels, conditional independence and theorems on sufficient statistics},
  author={Fritz, Tobias},
  journal={Advances in Mathematics},
  volume={370},
  pages={107239},
  year={2020},
  publisher={Elsevier}
}

@inproceedings{goyal2023finetune,
  title={Finetune like you pretrain: Improved finetuning of zero-shot vision models},
  author={Goyal, Sachin and Kumar, Ananya and Garg, Sankalp and Kolter, Zico and Raghunathan, Aditi},
  booktitle={Proceedings of the IEEE/CVF Conference on Computer Vision and Pattern Recognition},
  pages={19338--19347},
  year={2023}
}

@inproceedings{gretton2005measuring,
  title={Measuring statistical dependence with Hilbert-Schmidt norms},
  author={Gretton, Arthur and Bousquet, Olivier and Smola, Alex and Sch{\"o}lkopf, Bernhard},
  booktitle={International conference on algorithmic learning theory},
  pages={63--77},
  year={2005},
  organization={Springer}
}

@article{grill2020bootstrap,
  title={Bootstrap your own latent-a new approach to self-supervised learning},
  author={Grill, Jean-Bastien and Strub, Florian and Altch{\'e}, Florent and Tallec, Corentin and Richemond, Pierre and Buchatskaya, Elena and Doersch, Carl and Avila Pires, Bernardo and Guo, Zhaohan and Gheshlaghi Azar, Mohammad and others},
  journal={Advances in neural information processing systems},
  volume={33},
  pages={21271--21284},
  year={2020}
}

@inproceedings{he2020momentum,
  title={Momentum contrast for unsupervised visual representation learning},
  author={He, Kaiming and Fan, Haoqi and Wu, Yuxin and Xie, Saining and Girshick, Ross},
  booktitle={Proceedings of the IEEE/CVF conference on computer vision and pattern recognition},
  pages={9729--9738},
  year={2020}
}

@inproceedings{he2022masked,
  title={Masked autoencoders are scalable vision learners},
  author={He, Kaiming and Chen, Xinlei and Xie, Saining and Li, Yanghao and Doll{\'a}r, Piotr and Girshick, Ross},
  booktitle={Proceedings of the IEEE/CVF conference on computer vision and pattern recognition},
  pages={16000--16009},
  year={2022}
}

@article{Hjelm2018,
  title={Learning deep representations by mutual information estimation and maximization},
  author={Hjelm, R Devon and Fedorov, Alex and Lavoie-Marchildon, Samuel and Grewal, Karan and Bachman, Phil and Trischler, Adam and Bengio, Yoshua},
  journal={arXiv preprint arXiv:1808.06670},
  year={2018}
}

@inproceedings{hu2017learning,
  title={Learning discrete representations via information maximizing self-augmented training},
  author={Hu, Weihua and Miyato, Takeru and Tokui, Seiya and Matsumoto, Eiichi and Sugiyama, Masashi},
  booktitle={International conference on machine learning},
  pages={1558--1567},
  year={2017},
  organization={PMLR}
}

@inproceedings{ji2025customizing,
  title={Customizing Domain Adapters for Domain Generalization},
  author={Ji, Yuyang and Huang, Zeyi and Wang, Haohan and Lee, Yong Jae},
  booktitle={Proceedings of the IEEE/CVF International Conference on Computer Vision},
  pages={934--944},
  year={2025}
}

@book{kallenberg2021foundations,
  author    = {Olav Kallenberg},
  title     = {Foundations of Modern Probability},
  edition   = {3rd},
  year      = {2021},
  publisher = {Springer Nature},
  address   = {Switzerland AG}
}

@inproceedings{kim2021selfreg,
  title={Selfreg: Self-supervised contrastive regularization for domain generalization},
  author={Kim, Daehee and Yoo, Youngjun and Park, Seunghyun and Kim, Jinkyu and Lee, Jaekoo},
  booktitle={Proceedings of the IEEE/CVF international conference on computer vision},
  pages={9619--9628},
  year={2021}
}

@inproceedings{kolouri2019sliced,
  title={Sliced Wasserstein Auto-Encoders.},
  author={Kolouri, Soheil and Pope, Phillip E and Martin, Charles E and Rohde, Gustavo K},
  booktitle={ICLR (Poster)},
  year={2019}
}

@article{kumar2022fine,
  title={Fine-tuning can distort pretrained features and underperform out-of-distribution},
  author={Kumar, Ananya and Raghunathan, Aditi and Jones, Robbie and Ma, Tengyu and Liang, Percy},
  journal={arXiv preprint arXiv:2202.10054},
  year={2022}
}

@article{Kunkel2024,
  author = {Kunkel, L. and Trabs, M.},
  title = {A Wasserstein perspective of Vanilla GANs},
  journal = {arXiv preprint arXiv:2403.15312},
  year = {2024}
}

@inproceedings{levine2020wasserstein,
  title={Wasserstein smoothing: Certified robustness against wasserstein adversarial attacks},
  author={Levine, Alexander and Feizi, Soheil},
  booktitle={International conference on artificial intelligence and statistics},
  pages={3938--3947},
  year={2020},
  organization={PMLR}
}

@article{Li2023,
  author = {Li, T. and Yu, J. and Meng, C.},
  title = {Scalable model-free feature screening via sliced-Wasserstein dependency},
  journal = {Journal of Computational and Graphical Statistics},
  volume = {32},
  number = {4},
  pages = {1501-1511},
  year = {2023}
}

@article{li2021self,
  title={Self-supervised learning with kernel dependence maximization},
  author={Li, Yazhe and Pogodin, Roman and Sutherland, Danica J and Gretton, Arthur},
  journal={Advances in Neural Information Processing Systems},
  volume={34},
  pages={15543--15556},
  year={2021}
}

@article{Linsker1988,
  title={Self-organization in a perceptual network},
  author={Linsker, Ralph},
  journal={Computer},
  volume={21},
  number={3},
  pages={105--117},
  year={1988},
  publisher={IEEE}
}

@inproceedings{liu2022entropy,
  title={Entropy regularized optimal transport independence criterion},
  author={Liu, Lang and Pal, Soumik and Harchaoui, Zaid},
  booktitle={International Conference on Artificial Intelligence and Statistics},
  pages={11247--11279},
  year={2022},
  organization={PMLR}
}

@inproceedings{lee2019sliced,
  title={Sliced wasserstein discrepancy for unsupervised domain adaptation},
  author={Lee, Chen-Yu and Batra, Tanmay and Baig, Mohammad Haris and Ulbricht, Daniel},
  booktitle={Proceedings of the IEEE/CVF conference on computer vision and pattern recognition},
  pages={10285--10295},
  year={2019}
}

@inproceedings{liutkus2019sliced,
  title={Sliced-Wasserstein flows: Nonparametric generative modeling via optimal transport and diffusions},
  author={Liutkus, Antoine and Simsekli, Umut and Majewski, Szymon and Durmus, Alain and St{\"o}ter, Fabian-Robert},
  booktitle={International Conference on machine learning},
  pages={4104--4113},
  year={2019},
  organization={PMLR}
}

@article{loshchilov2017decoupled,
  title={Decoupled weight decay regularization},
  author={Loshchilov, Ilya and Hutter, Frank},
  journal={arXiv preprint arXiv:1711.05101},
  year={2017}
}

@inproceedings{mcallester2020formal,
  title={Formal limitations on the measurement of mutual information},
  author={McAllester, David and Stratos, Karl},
  booktitle={International Conference on Artificial Intelligence and Statistics},
  pages={875--884},
  year={2020},
  organization={PMLR}
}

@article{mordant2022measuring,
  title={Measuring dependence between random vectors via optimal transport},
  author={Mordant, Gilles and Segers, Johan},
  journal={Journal of Multivariate Analysis},
  volume={189},
  pages={104912},
  year={2022},
  publisher={Elsevier}
}

@article{mori2020earth,
  title={The earth mover's correlation},
  author={M{\'o}ri, Tam{\'a}s F and Sz{\'e}kely, G{\'a}bor J},
  journal={arXiv preprint arXiv:2009.04313},
  year={2020}
}

@article{nadjahi2019asymptotic,
  title={Asymptotic guarantees for learning generative models with the sliced-Wasserstein distance},
  author={Nadjahi, Kimia and Durmus, Alain and Simsekli, Umut and Badeau, Roland},
  journal={Advances in Neural Information Processing Systems},
  volume={32},
  year={2019}
}

@article{nies2021transport,
  title={Transport dependency: Optimal transport based dependency measures},
  author={Nies, Thomas Giacomo and Staudt, Thomas and Munk, Axel},
  journal={arXiv preprint arXiv:2105.02073},
  year={2021}
}

@article{nguyen2022hierarchical,
  title={Hierarchical sliced wasserstein distance},
  author={Nguyen, Khai and Ren, Tongzheng and Nguyen, Huy and Rout, Litu and Nguyen, Tan and Ho, Nhat},
  journal={arXiv preprint arXiv:2209.13570},
  year={2022}
}

@article{nguyen2023optimal,
  title={Optimal transport model distributional robustness},
  author={Nguyen, Van-Anh and Le, Trung and Bui, Anh and Do, Thanh-Toan and Phung, Dinh},
  journal={Advances in Neural Information Processing Systems},
  volume={36},
  pages={24074--24087},
  year={2023}
}

@article{oord2018representation,
  title={Representation learning with contrastive predictive coding},
  author={Oord, Aaron van den and Li, Yazhe and Vinyals, Oriol},
  journal={arXiv preprint arXiv:1807.03748},
  year={2018}
}

@article{ozair2019wasserstein,
  title={Wasserstein dependency measure for representation learning},
  author={Ozair, Sherjil and Lynch, Corey and Bengio, Yoshua and Van den Oord, Aaron and Levine, Sergey and Sermanet, Pierre},
  journal={Advances in Neural Information Processing Systems},
  volume={32},
  year={2019}
}

@article{patterson2021hausdorff,
  title={Hausdorff and Wasserstein metrics on graphs and other structured data},
  author={Patterson, Evan},
  journal={Information and Inference: A Journal of the IMA},
  volume={10},
  number={4},
  pages={1209--1249},
  year={2021},
  publisher={Oxford University Press}
}

@inproceedings{poole2019variational,
  title={On variational bounds of mutual information},
  author={Poole, Ben and Ozair, Sherjil and Van Den Oord, Aaron and Alemi, Alex and Tucker, George},
  booktitle={International Conference on Machine Learning},
  pages={5171--5180},
  year={2019},
  organization={PMLR}
}

@book{rachev2006mass,
  author={Rachev, Svetlozar T and R{\"u}schendorf, Ludger},
  title={Mass Transportation Problems: Volume 1: Theory},
  year={2006},
  publisher={Springer Science \& Business Media}
}

@inproceedings{radford2021learning,
  title={Learning transferable visual models from natural language supervision},
  author={Radford, Alec and Kim, Jong Wook and Hallacy, Chris and Ramesh, Aditya and Goh, Gabriel and Agarwal, Sandhini and Sastry, Girish and Askell, Amanda and Mishkin, Pamela and Clark, Jack and others},
  booktitle={International conference on machine learning},
  pages={8748--8763},
  year={2021},
  organization={PmLR}
}

@article{rebuffi2017learning,
  title={Learning multiple visual domains with residual adapters},
  author={Rebuffi, Sylvestre-Alvise and Bilen, Hakan and Vedaldi, Andrea},
  journal={Advances in neural information processing systems},
  volume={30},
  year={2017}
}

@inproceedings{rezaabad2020learning,
  title={Learning representations by maximizing mutual information in variational autoencoders},
  author={Rezaabad, Ali Lotfi and Vishwanath, Sriram},
  booktitle={2020 IEEE International Symposium on Information Theory (ISIT)},
  pages={2729--2734},
  year={2020},
  organization={IEEE}
}

@inproceedings{riba2020kornia,
  title={Kornia: an open source differentiable computer vision library for pytorch},
  author={Riba, Edgar and Mishkin, Dmytro and Ponsa, Daniel and Rublee, Ethan and Bradski, Gary},
  booktitle={Proceedings of the IEEE/CVF Winter Conference on Applications of Computer Vision},
  pages={3674--3683},
  year={2020}
}

@inproceedings{shen2018wasserstein,
  title={Wasserstein distance guided representation learning for domain adaptation},
  author={Shen, Jian and Qu, Yanru and Zhang, Weinan and Yu, Yong},
  booktitle={Proceedings of the AAAI conference on artificial intelligence},
  volume={32},
  number={1},
  year={2018}
}

@article{tolstikhin2017wasserstein,
  title={Wasserstein auto-encoders},
  author={Tolstikhin, Ilya and Bousquet, Olivier and Gelly, Sylvain and Schoelkopf, Bernhard},
  journal={arXiv preprint arXiv:1711.01558},
  year={2017}
}

@article{Tschannen2019,
  author = {Tschannen, M. and Djolonga, J. and Rubenstein, P. K. and Gelly, S. and Lucic, M.},
  title = {On mutual information maximization for representation learning},
  journal = {arXiv preprint arXiv:1907.13625},
  year = {2019}
}

@book{villani2003topics,
  title={Topics in Optimal Transportation},
  author={Villani, C{\'e}dric},
  number={58},
  year={2003},
  publisher={American Mathematical Soc.}
}

@article{Weisel2022,
  author = {Wiesel, J. C.},
  title = {Measuring association with Wasserstein distances},
  journal = {Bernoulli},
  volume = {28},
  number = {4},
  pages = {2816-2832},
  year = {2022}
}

@inproceedings{wong2019wasserstein,
  title={Wasserstein adversarial examples via projected sinkhorn iterations},
  author={Wong, Eric and Schmidt, Frank and Kolter, Zico},
  booktitle={International conference on machine learning},
  pages={6808--6817},
  year={2019},
  organization={PMLR}
}

@inproceedings{wortsman2022robust,
  title={Robust fine-tuning of zero-shot models},
  author={Wortsman, Mitchell and Ilharco, Gabriel and Kim, Jong Wook and Li, Mike and Kornblith, Simon and Roelofs, Rebecca and Lopes, Raphael Gontijo and Hajishirzi, Hannaneh and Farhadi, Ali and Namkoong, Hongseok and others},
  booktitle={Proceedings of the IEEE/CVF conference on computer vision and pattern recognition},
  pages={7959--7971},
  year={2022}
}

@inproceedings{wu2019sliced,
  title={Sliced wasserstein generative models},
  author={Wu, Jiqing and Huang, Zhiwu and Acharya, Dinesh and Li, Wen and Thoma, Janine and Paudel, Danda Pani and Gool, Luc Van},
  booktitle={Proceedings of the IEEE/CVF Conference on Computer Vision and Pattern Recognition},
  pages={3713--3722},
  year={2019}
}

@inproceedings{wu2020stronger,
  title={Stronger and faster wasserstein adversarial attacks},
  author={Wu, Kaiwen and Wang, Allen and Yu, Yaoliang},
  booktitle={International conference on machine learning},
  pages={10377--10387},
  year={2020},
  organization={PMLR}
}

@article{xiao2019disentangled,
  title={Disentangled representation learning with Wasserstein total correlation},
  author={Xiao, Yijun and Wang, William Yang},
  journal={arXiv preprint arXiv:1912.12818},
  year={2019}
}

@article{xin2024parameter,
  title={Parameter-efficient fine-tuning for pre-trained vision models: A survey},
  author={Xin, Yi and Luo, Siqi and Zhou, Haodi and Du, Junlong and Liu, Xiaohong and Fan, Yue and Li, Qing and Du, Yuntao},
  journal={arXiv e-prints},
  pages={arXiv--2402},
  year={2024}
}

@inproceedings{xu2023side,
  title={Side adapter network for open-vocabulary semantic segmentation},
  author={Xu, Mengde and Zhang, Zheng and Wei, Fangyun and Hu, Han and Bai, Xiang},
  booktitle={Proceedings of the IEEE/CVF conference on computer vision and pattern recognition},
  pages={2945--2954},
  year={2023}
}

@inproceedings{zhao2019infovae,
  title={Infovae: Balancing learning and inference in variational autoencoders},
  author={Zhao, Shengjia and Song, Jiaming and Ermon, Stefano},
  booktitle={Proceedings of the aaai conference on artificial intelligence},
  volume={33},
  number={01},
  pages={5885--5892},
  year={2019}
}
\bibliographystyle{tmlr}

\appendix
\section{Background}\label{BackgroundAppendix}
\subsection{Wasserstein Distance} We first review the definition of the Wasserstein distance in the general setting of complete and separable metric spaces, which we call Polish metric spaces for short. This material should be standard for anyone familiar with Villani's introductory book \citet{villani2003topics}. We also recall the definition of the sliced Wasserstein distance, which we use for our computational implementation of the Wasserstein correlation maximization objective discussed in Sections \ref{WassersteinCorrelation} and \ref{WaCoMax}. 

Formally, we need the following regularity assumption to ensure that the Wasserstein distance (of order $p$) is indeed an actual metric, as opposed to a generalized metric. We state the condition for completeness, though we gloss over these kinds of nuances in practice.  

\begin{defn}
Let $(X, d_X)$ be Polish metric space, let $\Sigma_X$ denote the Borel $\sigma$-algebra on $X$ induced by $d_X$, and let $1\leq p<\infty$. A Borel probability measure $\mu$ on $(X, \Sigma_X)$ (also called a distribution) has \textbf{finite} $\mathbf{p}$\textbf{th moment} if
\begin{equation}
\int_X d_X^p(x_0, x)\mu(dx)<\infty
\end{equation} 
for some (and hence, by the triangle inequality, any) $x_0\in X$. We denote the set of all such probability measures by $\mathcal{P}_p(X)$, and we denote the collection of all Borel probability measures by $\mathcal{P}(X)$. 
\end{defn}

\begin{defn}
Let $\mu\in\mathcal{P}_p(X)$, and let $\nu\in\mathcal{P}_p(Y)$. A \textbf{coupling} of $\mu$ and $\nu$ is a probability measure $\pi\in\mathcal{P}(X\times Y)$ with first and second marginals equal to $\mu$ and $\nu$, respectively. That is, for all $A\in \Sigma_X$ and all $B\in\Sigma_Y$, 
\begin{equation*}
\pi(A\times Y) =\mu(A),\hspace{3mm}\text{and}\hspace{3mm}\pi(X\times B)=\nu(B). 
\end{equation*}
The collection of all couplings between $\mu$ and $\nu$ is denoted by $\Pi(\mu,\nu)$. We will also denote the first and second marginals of $\pi$ by $\pi^1$ and $\pi^2$. 
\end{defn}

\noindent A coupling can be seen as a relaxation of a transport map, which is a measurable map $T:X\to Y$ satisfying $T_{\sharp}\mu=\nu$ where $T_{\sharp}\mu$ is the \textbf{pushforward} of $\mu$ defined by 
\begin{equation}
(T_{\sharp}\mu)(B) = \mu(T^{-1}(B))
\end{equation}
for all $B\in \Sigma_Y$. Note also that the space of couplings is always non-empty, since, at the least, it always contains the product measure. 

\begin{defn}
Let $\mu\in\mathcal{P}_p(X)$ and $\nu\in\mathcal{P}_p(Y)$. The \textbf{product measure}, or tensor product, of $\mu$ and $\nu$ is the joint probability measure $\mu\otimes \nu$ in $\mathcal{P}(X\times Y)$ defined on test functions $\phi\in C_b(X\times Y)$ by 
\begin{equation}
\int_{X\times Y}\phi(x,y)(\mu\otimes\nu)(dx, dy)=\int_Y\left(\int_X\phi(x,y)\mu(dx)\right)\nu(dy). 
\end{equation}
Alternatively, we can set
\begin{equation}
(\mu\otimes\nu)(A\times B)=\mu(A)\nu(B)
\end{equation}
for $A\times B\in \Sigma_X\times \Sigma_Y$. Since we always assume that $X\times Y$ is standard Borel, the above equality determines $\mu\otimes\nu$ uniquely on all of $\Sigma_{X\times Y}$.
\end{defn}

\begin{defn}\label{WassersteinMetric}
Let $(X, d_X)$ be a Polish metric space, and let $1\leq p < \infty$. The \textbf{Wasserstein metric of order} $\mathbf{p}$ 
\begin{equation}
W_{p, d_X}:\mathcal{P}_p(X)\times \mathcal{P}_p(X)\to \mathbb{R}_{\geq 0}
\end{equation}
is defined by 
\begin{equation}
W_{p, d_X}(\mu,\nu) =\left(\inf_{\pi\in\Pi(\mu,\nu)}\int_{X\times X} d_X^p(x_1, x_2)\pi(dx_1, dx_2)\right)^{1/p}. 
\end{equation}
We call the pair $(\mathcal{P}_p(X), W_{p, d_X})$ the \textbf{Wasserstein space of order} $\mathbf{p}$ associated with $(X, d_X)$. 
\end{defn}

\noindent When the context is clear, we denote the Wasserstein metric by $W_p$ alone. Likewise, we will sometimes denote the so-called ground metric $d_X$ by $d$ alone. Remarkably, the Wasserstein space is itself a Polish metric space whenever $(X, d_X)$ is. 

\begin{prop}\label{WassersteinSpaceIsPolish}
Let $(X, d_X)$ be a complete and separable metric space. Then, the Wasserstein space $(\mathcal{P}_p(X), W_{p, d_X})$ is also complete and separable. 
\end{prop}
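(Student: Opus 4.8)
The plan is to prove separability and completeness separately, each by the classical argument (cf. \cite{villani2003topics}).

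\emph{Separability.} Fix a base point $x_0 \in X$ and a countable dense set $D \subseteq X$, and let $\mathcal{D}$ be the countable collection of measures $\sum_{i=1}^k q_i \delta_{x_i}$ with $k \in \mathbb{N}$, $x_i \in D$, and $q_i \in \mathbb{Q}_{>0}$ with $\sum_i q_i = 1$. I would show $\mathcal{D}$ is $W_p$-dense. Given $\mu \in \mathcal{P}_p(X)$ and $\delta > 0$, separability lets me write $X = \bigsqcup_{j \geq 1} B_j$ as a countable disjoint Borel partition together with points $y_j \in D$ such that $d_X(x, y_j) < \delta$ for every $x \in B_j$ (disjointify a covering of $X$ by $\delta$-balls centred at points of $D$). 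The measurable map $T \colon X \to D$ with $T \equiv y_j$ on $B_j$ satisfies $d_X(x, T(x)) < \delta$ pointwise, so $(\mathrm{id}, T)_\sharp \mu$ is a coupling witnessing $W_{p,d_X}(\mu, T_\sharp\mu) \le \delta$, where $T_\sharp \mu = \sum_j \mu(B_j)\delta_{y_j}$. Using the finite $p$th moment of $\mu$, I then discard all but finitely many indices, relocating the tail mass onto a single atom $y_{j_0}$; the resulting $W_p$-error is at most $\big(\int_{\text{tail}}(\delta + d_X(x, y_{j_0}))^p\,\mu(dx)\big)^{1/p}$, which is small by dominated convergence. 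A final rationalization of the finitely many weights (again relocating the discrepancy onto $y_{j_0}$, whose support is now bounded) produces an element of $\mathcal{D}$ within a $\delta$-comparable error.

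\emph{Completeness.} Let $(\mu_n)_n$ be $W_p$-Cauchy. I would extract a subsequence $(\mu_{n_k})_k$ with $W_{p,d_X}(\mu_{n_k}, \mu_{n_{k+1}}) \le 2^{-k}$ and pick $\pi_k \in \Pi(\mu_{n_k}, \mu_{n_{k+1}})$ with $\big(\int d_X^p\,d\pi_k\big)^{1/p} \le 2^{1-k}$. Since $X$ is Polish, the gluing lemma yields, on one probability space $(\Omega, \mathcal{F}, \mathbb{P})$, random variables $\xi_k \colon \Omega \to X$ with $\mathrm{law}(\xi_k) = \mu_{n_k}$ and $\big(\mathbb{E}\, d_X(\xi_k, \xi_{k+1})^p\big)^{1/p} \le 2^{1-k}$. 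Then $\sum_k \big(\mathbb{E}\, d_X(\xi_k, \xi_{k+1})^p\big)^{1/p} < \infty$, so $(\xi_k)$ is Cauchy in the complete space of $p$-integrable $X$-valued random variables (here completeness of $X$ enters), hence $\xi_k \to \xi$ both $\mathbb{P}$-a.s. and in $L^p$ for some measurable $\xi$. Setting $\mu := \mathrm{law}(\xi)$, finiteness of $\mathbb{E}\, d_X(x_0, \xi)^p$ gives $\mu \in \mathcal{P}_p(X)$, and the coupling $(\xi_k, \xi)_\sharp\mathbb{P}$ gives $W_{p,d_X}(\mu_{n_k}, \mu) \le \big(\mathbb{E}\, d_X(\xi_k,\xi)^p\big)^{1/p} \to 0$. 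Since a Cauchy sequence with a convergent subsequence converges to the same limit, $\mu_n \to \mu$ in $W_{p,d_X}$.

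\emph{Main obstacle.} The substantive step is the completeness half, specifically the invocation of the gluing lemma and of completeness for $X$-valued $L^p$ random variables: this is exactly where the Polish hypothesis on $(X, d_X)$ is used (regular conditional probabilities to glue the $\pi_k$, and measurability of the a.s. limit $\xi$). The separability half is essentially bookkeeping, the only care needed being that the truncation and weight-rationalization errors are quantified using the finite $p$th moment.
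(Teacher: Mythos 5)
Your argument is correct, but note that the paper does not actually prove this proposition: it only cites Proposition 7.1.5 of Ambrosio, Gigli, and Savar\'e, so any genuine argument here is necessarily "a different route." Your separability half is the standard discretization (partition $X$ into small Borel pieces around a countable dense set, push forward, truncate the tail using the finite $p$th moment, rationalize the finitely many weights) and is sound as written. For completeness you take the almost-sure-representation route: glue near-optimal couplings along a rapidly Cauchy subsequence into a single sequence of $X$-valued random variables, deduce a.s. and $L^p$ convergence, and read off the limit law. The cited reference instead proves completeness by showing a $W_p$-Cauchy sequence is tight and $p$-uniformly integrable, extracting a weak limit via Prokhorov, and upgrading to $W_p$-convergence using lower semicontinuity of $W_p$ under weak convergence; that route yields the characterization of relatively compact subsets of $\mathcal{P}_p(X)$ as a byproduct, whereas yours is more elementary and avoids Prokhorov entirely. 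One point you should make explicit: assembling \emph{countably many} couplings $\pi_k$ into a single probability space requires a projective-limit construction on $X^{\mathbb{N}}$ (Ionescu--Tulcea or Kolmogorov extension applied to the disintegrations of the $\pi_k$), not merely the two-marginal gluing lemma; this is exactly where regular conditional probabilities---and hence the Polish hypothesis on $(X,d_X)$---enter, as you correctly flag in your closing remark.
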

\begin{proof}
See Proposition 7.1.5 in \citet{ambrosio2008gradient}. 
\end{proof}

\noindent Informally, the Wasserstein construction is said to lift any ground metric $d_X$ to a metric $W_{p, d_X}$ on the space of (sufficiently regular) Borel probability measures over $X$. Further, the Wasserstein metric encodes the ground metric, in the sense that $W_{p, d_X}(\delta_x, \delta_y)=d_X(x, y)$, where $\delta_x$ is the usual Dirac mass defined by 
\begin{equation}
\delta_x(E) = \begin{cases}
1 & \text{if } x\in E\\
0 & \text{if } x\notin E
\end{cases}
\end{equation}
for $E\in\Sigma_X$; see \citet{villani2003topics} for more details. In passing, we observe that Definition \ref{WassersteinMetric} is still valid with a lower semicontinuous cost function $c$ on $X$ in place of $d_X$. Custom cost functions are indeed part of the appeal of optimal transport-based distances and are worth further study, though in the present work we restrict our attention to metrics. 

To define Wasserstein correlation in Subsection \ref{WassersteinCorrelation}, we need to equip the product space $X \times Y$ with a metric so that joint distributions can be viewed as elements of a Wasserstein space. We choose to work with the product metric for this.  

\begin{defn}
Let $(X, d_X)$ and $(Y, d_Y)$ be Polish metric spaces, and let $1\leq p<\infty$. Then, the \textbf{product metric of order} $\mathbf{p}$ 
\begin{equation}
d_{p,X\times Y}:(X\times Y)\times (X\times Y)\to\mathbb{R}_{\geq 0}
\end{equation}
is defined by
\begin{equation}
d_{p,X\times Y}\big((x_1, y_1), (x_2, y_2)\big)=\left(d_X^p(x_1, x_2) + d_Y^p(y_1, y_2)\right)^{1/p}. 
\end{equation}
We'll denote this metric by $d_{XY}$, and we assume that $p$ is always chosen to match the order of the Wasserstein metric. 
\end{defn}

\noindent The Wasserstein space $(\mathcal{P}_p(X\times Y), W_{p,d_{XY}})$ is the usual Wasserstein space where the ground metric is now $d_{XY}$. Further, it is elementary to check that any coupling $\pi$ of $\mu\in\mathcal{P}_p(X)$ and $\nu\in\mathcal{P}_p(Y)$ has finite $p$th moment with respect to $d_{XY}$. That is, $\Pi(\mu, \nu)\subseteq \mathcal{P}_p(X\times Y)$. 

\subsubsection{Sliced Wasserstein Distance}\label{SlicedWassersteinDistance}
Extensive work has gone into finding efficient algorithms for computing Wasserstein distances. Entropic regularization \citep{cuturi2013sinkhorn} and slicing techniques \citep{bonneel2015sliced} are two of the most common approaches, and in our experiments, we use the latter. 

\begin{defn}
Let $\omega$ denote uniform distribution on the unit sphere $\mathbb{S}^{d-1}\subseteq \mathbb{R}^d$. Then, the \textbf{sliced Wasserstein distance of order} $\mathbf{p}$ 
\begin{equation}
SW_p:\mathcal{P}_p(\mathbb{R}^d)\times\mathcal{P}_p(\mathbb{R}^d)\to\mathbb{R}_{\geq 0}
\end{equation}
is defined by
\begin{equation}
SW_p(\mu,\nu)=\left(\int_{\theta:\mathbb{S}^{d-1}}W^p_p\big((P_{\theta})_{\sharp}\mu, (P_{\theta})_{\sharp}\nu\big)\omega(d\theta)\right)^{1/p}, 
\end{equation}
where $P_{\theta}$ is the linear form given by $P_{\theta}(x) = \theta^T x$. 
\end{defn}

\noindent See \citet{bonnotte2013unidimensional} for a proof that the sliced Wasserstein distance is an actual metric. To compute sliced distances, we use the closed-form solution to the Wasserstein distance in the case of $1$-dimensional distributions on $\mathbb{R}$. For $\mu\in\mathcal{P}_p(\mathbb{R})$, define the cumulative distribution function (CDF) $F_{\mu}:\mathbb{R}\to [0,1]$ by $F_{\mu}(x) = \mu\big((-\infty, x]\big)$, and define the quantile function $F_{\mu}^{-1}:[0,1] \to \mathbb{R}$ by 
\begin{equation}
F^{-1}_{\mu}(t) = \inf\{x: F_{\mu}(x) \geq t\}.
\end{equation}
Then, as shown in \citet{rachev2006mass}, 
\begin{equation}
W_p(\mu,\nu) = \left(\int_0^1 \big\vert F_{\mu}^{-1}(t) - F_{\nu}^{-1}(t)\big\vert^p dt\right)^{1/p}. 
\end{equation}
Letting $F^{-1}_{\theta,\mu}$ denote the quantile function of $(P_{\theta})_{\sharp}\mu$, we have 
\begin{equation}
SW_p(\mu,\nu) = \left(\int_{\mathbb{S}^{d-1}}\left(\int_0^1 \big\vert F_{\theta,\mu}^{-1}(t) - F^{-1}_{\theta,\nu}(t)\big\vert^pdt\right)\omega(d\theta)\right)^{1/p}. 
\end{equation}
In practice, given empirical distributions, we compute the one-dimensional Wasserstein distance by sorting the points of those distributions and then taking their average Euclidean distance. Our exact computational implementation follows the work of \citet{nguyen2022hierarchical}. 

\subsection{Markov-Wasserstein Kernels}\label{Markov-WassersteinKernelsAppendix}
\noindent Abstractly, we model encoders as Markov-Wasserstein kernels, which we introduce next. Ordinary Markov kernels are defined as follows.  

\begin{defn}\label{MarkovKernel}
Let $(X, \Sigma_X)$ and $(Y, \Sigma_Y)$ be measurable spaces. A \textbf{Markov kernel} from $(X, \Sigma_X)$ to $(Y, \Sigma_Y)$ is a map $F:\Sigma_Y\times X\to[0, 1]$ satisfying
\begin{enumerate}
    \item $F(-\vert x):\Sigma_Y\to [0,1]$ is a probability measure for any fixed $x\in X$. 
    \item The map $x\mapsto F(E\vert x)$ is $\Sigma_X$-measurable for any fixed $E\in\Sigma_Y$.
\end{enumerate}
Note that the conditional notation $F(-\vert x)$ above is synonymous with $F(-,x)$. 
\end{defn}

\noindent Equivalently, a Markov kernel can be viewed as a measurable map $F:X\to\mathcal{P}(Y)$, where $\mathcal{P}(Y)$ is equipped with the initial $\sigma$-algebra with respect to all evaluation maps $\text{ev}_E:\mathcal{P}(Y)\to[0,1]$, $\text{ev}_{E}(\mu)=\mu(E)$, for $E\in\Sigma_Y$; see Lemma 3.1 in \citet{kallenberg2021foundations}. 

Both of these definitions are purely measure-theoretic, whereas, for our purposes, we would like to work in a metric-enhanced setting. To do this, we impose in Definition \ref{MarkovKernel} the additional constraint that each distribution $F(-\vert x)$ should have finite $p$th moment. Then, we additionally impose the following regularity condition to equip the resulting collection of Markov kernels with a metric. 
\begin{defn}
Let $(X, d_X, \mu_X)$ be a metric measure space (i.e., a Polish metric space equipped with a probability measure $\mu_X$ on the Borel $\sigma$-algebra induced by $d_X$), and let $(Y, d_Y)$ be a Polish metric space. A Markov kernel $F:X\to\mathcal{P}_p(Y)$ has \textbf{finite} $\mathbf{p}$\textbf{th moment} if 
\begin{equation}
\int_{X}\left(\int_Y d^p_Y(y_0, y)F(dy\vert x)\right)\mu_X(dx) < \infty
\end{equation}
for some (and hence any) $y_0\in Y$. We let $\mathcal{K}_{\mu_X}^p(X, Y)$ denote the collection of all such Markov kernels.
\end{defn}

\noindent In \citet{patterson2021hausdorff}, Patterson shows how to equip $\mathcal{K}^p_{\mu_X}(X, Y)$ with a metric that closely resembles the ordinary Wasserstein metric using a generalized notion of coupling between two Markov kernels, and he additionally shows (in Proposition 5.5) that the metric can be computed as a generalized $L^p$ metric.  

\begin{defn}\label{GeneralizedLpMetric}
Let $(X, d_X, \mu_X)$ be a metric measure space, and let $(Y, d_Y)$ be a Polish metric space. The $\mathbf{L^p}$ \textbf{space} $L^p_{\mu_X}(X, Y)$ is the set of $\mu_X$-a.e. equal equivalence classes of measurable functions $f:X\to Y$ satisfying 
\begin{equation}
\int_Xd_Y^p\big(y_0, f(x)\big)\mu_X(dx)<\infty
\end{equation}
for some (and hence any) $y_0\in Y$. We say that $f$ is $\mathbf{L^p}$ \textbf{integrable} if this condition is met. Further, $L^p_{\mu_X}(X, Y)$ is a metric space when equipped with the $\mathbf{L^p}$ \textbf{metric} 
\begin{equation} 
d_{L^p}:L^p_{\mu_X}(X, Y)\times L^p_{\mu_X}(X, Y) \to\mathbb{R}_{\geq 0}
\end{equation}
defined by 
\begin{equation}
d_{L^p}(f, g)= \left(\int_X d_Y^p\big(f(x), g(x)\big) \mu_X(dx)\right)^{1/p}
\end{equation}
for $1\leq p<\infty$. 
\end{defn}

\noindent For the case of $\mathcal{K}^p_{\mu_X}(X, Y)$, in particular, $d_Y$ in the above is taken to be the Wasserstein metric $W_{p,d_Y}$. This discussion so far applies to Markov kernels as defined in Definition \ref{MarkovKernel} with the extra regularity conditions noted before. Similar to the purely measure-theoretic case, though, we can equivalently view a Markov kernel $F$ taking values in a Wasserstein space $\mathcal{P}_p(Y)$ as a measurable map $F:X\to\mathcal{P}_p(Y)$, where $\mathcal{P}_p(Y)$ is equipped with the Borel $\sigma$-algebra induced by the Wasserstein metric; see \citet{eikenberry2023bayesian} for details. This will be the perspective we adopt in the present work. 

\begin{rem}\label{MaCoRem}
That is, we work with the space $L^p_{\mu_X}(X, \mathcal{P}_p(Y))$ of Wasserstein-valued Markov kernels $F:X\to\mathcal{P}_p(Y)$, which we call the space of \textbf{Markov-Wasserstein kernels}. Further, we will denote the generalized $L^p$ metric from Definition \ref{GeneralizedLpMetric} by $\text{MW}_p$, and we call it the \textbf{Markov-Wasserstein metric}. We define the Markov-Wasserstein minimization loss in Section \ref{MaWaMin}. Finally, we note that the two approaches to defining Markov-Wasserstein kernels presented here lead to isometrically isomorphic metric spaces; see Theorem 2.25 in \citet{eikenberry2023bayesian}. 
\end{rem}

\noindent Markov-Wasserstein kernels (and also general Markov kernels) are closely connected to joint distributions via disintegrations. The remaining definitions and theorems in this subsection are usually only defined for the measure-theoretic case, but, in fact, everything carries over seamlessly to the metric-enhanced setting.     

\begin{defn}\label{Disintegration}
Let $\gamma\in \mathcal{P}_p(X\times Y)$ with marginals $\gamma^1=\mu$ and $\gamma^2=\nu$. Then, a Markov-Wasserstein kernel $F\in L^p_{\mu}(X, \mathcal{P}_p(Y))$ is a called a \textbf{disintegration} of $\gamma$ if for all $\phi\in C_b(X\times Y)$, 
\begin{equation}
\int_{X\times Y}\phi(x, y)\gamma(dx, dy)=\int_{X}\left(\int_Y\phi(x, y) F_x(dy)\right)\mu(dx). 
\end{equation}
\end{defn}

\noindent In \citet{villani2003topics}, Villani writes $\int_X \big(\delta_x\otimes F_x\big)\mu(dx)$ for the joint distribution defined by the iterated integral above. As shown in \citet{eikenberry2023bayesian}, it is also possible to define this measure as a literal Lebesgue-type integral for Markov-Wasserstein kernels. We will not review the details, since we primarily use this machinery to give a precise mathematical definition for the Wasserstein correlation maximization objective. 

The well-known Disintegration Theorem states that disintegrations of (sufficiently nice) joint distributions always exist. 

\begin{thm}[$\mathbf{L^p}$ \textbf{Disintegration Theorem}]\label{DisintegrationTheorem}
Let $\gamma\in(\mathcal{P}_p(X\times Y), W_{p, d_{XY}})$ with $\gamma^1=\mu$ and $\gamma^2=\nu$. Then, there exist almost surely unique Markov-Wasserstein kernels $F\in L^p_{\mu}(X, \mathcal{P}_p(Y))$ and $G\in L^p_{\nu}(Y, \mathcal{P}_p(X))$ with 
\begin{equation}\label{BayesDisintegration}
\int_X\big(\delta_x\otimes F_x\big)\mu(dx)=\gamma=\int_Y\big(G_y\otimes \delta_y\big)\nu(dy). 
\end{equation} 
\end{thm}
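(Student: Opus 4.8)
The statement is essentially the classical (purely measure-theoretic) Disintegration Theorem together with two upgrades needed to land in the metric-enhanced setting: first, that the disintegrating kernels actually take values in $\mathcal{P}_p$ and are $L^p$-integrable with respect to the $W_p$-ground metric (so that they lie in $L^p_\mu(X,\mathcal{P}_p(Y))$ and $L^p_\nu(Y,\mathcal{P}_p(X))$ in the sense of Definition \ref{GeneralizedLpMetric}), and second, that the measurability produced by the classical theorem (with respect to the evaluation $\sigma$-algebra on $\mathcal{P}(Y)$) can be promoted to Borel measurability with respect to the Wasserstein metric. The plan is to prove the claim for $F$; the statement for $G$ then follows by the symmetric argument applied after the coordinate swap $\sigma:X\times Y\to Y\times X$.

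First I would invoke the classical Disintegration Theorem: since $X$ and $Y$ are Polish, $X\times Y$ is Polish, hence standard Borel, so there is a $\mu$-a.s.\ unique Markov kernel $F:X\to\mathcal{P}(Y)$ (measurable for the evaluation $\sigma$-algebra) with $\gamma=\int_X(\delta_x\otimes F_x)\mu(dx)$; see, e.g., Kallenberg \cite{kallenberg2021foundations} or Ambrosio et al.\ \cite{ambrosio2008gradient}. Next I would use the hypothesis $\gamma\in\mathcal{P}_p(X\times Y)$: since $d_{XY}^p\big((x_0,y_0),(x,y)\big)=d_X^p(x_0,x)+d_Y^p(y_0,y)$, finiteness of the $p$th moment of $\gamma$ gives in particular $\int_{X\times Y}d_Y^p(y_0,y)\,\gamma(dx,dy)<\infty$. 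Disintegrating this integral and using that $\delta_{y_0}\otimes\nu'$ is the only coupling of $\delta_{y_0}$ and any $\nu'\in\mathcal{P}_p(Y)$, so that $\int_Y d_Y^p(y_0,y)\,\nu'(dy)=W_{p,d_Y}^p(\delta_{y_0},\nu')$, one obtains $\int_X W_{p,d_Y}^p(\delta_{y_0},F_x)\,\mu(dx)<\infty$. In particular $F_x\in\mathcal{P}_p(Y)$ for $\mu$-a.e.\ $x$ (redefine $F$ on the $\mu$-null exceptional set to equal $\delta_{y_0}$, which changes neither the joint distribution nor the $L^p$ class), and $F$ satisfies the finite-$p$th-moment/integrability condition of Definition \ref{GeneralizedLpMetric} with ground metric $W_{p,d_Y}$.

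It then remains to upgrade the measurability. Since $Y$ is Polish, $(\mathcal{P}_p(Y),W_{p,d_Y})$ is Polish by Proposition \ref{WassersteinSpaceIsPolish}, and the Borel $\sigma$-algebra it induces agrees with the trace on $\mathcal{P}_p(Y)$ of the evaluation $\sigma$-algebra on $\mathcal{P}(Y)$; equivalently, one may appeal directly to the isometric isomorphism between the two formulations of Markov-Wasserstein kernels (Theorem 2.25 in \cite{eikenberry2023bayesian}), under which the evaluation-measurable kernel produced above corresponds to a Borel-measurable map $X\to\mathcal{P}_p(Y)$. Hence $F\in L^p_\mu(X,\mathcal{P}_p(Y))$. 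Uniqueness is inherited from the classical theorem: $\mu$-a.s.\ uniqueness is precisely uniqueness in $L^p_\mu(X,\mathcal{P}_p(Y))$, since that space is by definition a space of $\mu$-a.e.\ equivalence classes. Finally, applying the same reasoning to $\sigma_\sharp\gamma\in\mathcal{P}_p(Y\times X)$, whose first marginal is $\nu$, yields the $\nu$-a.s.\ unique $G\in L^p_\nu(Y,\mathcal{P}_p(X))$ with $\gamma=\int_Y(G_y\otimes\delta_y)\nu(dy)$.

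The only point that goes beyond the textbook statement is the $\sigma$-algebra identification for $\mathcal{P}_p$ (as opposed to $\mathcal{P}(Y)$ with the topology of weak convergence), which is where Proposition \ref{WassersteinSpaceIsPolish} and the cited equivalence do the work; the $L^p$-integrability, by contrast, is the one-line moment computation above. I expect the main thing to be careful about is exactly this: ensuring that the evaluation-measurable kernel from the classical construction is Borel measurable into the Wasserstein space, rather than having to redo the entire disintegration argument from scratch in the metric setting.
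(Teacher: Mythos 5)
Your argument is correct and follows exactly the route the paper takes: the paper's proof is just a citation of the classical Disintegration Theorem (Bogachev, Theorem 10.4.5) together with the $L^p$/Wasserstein upgrade in \cite{eikenberry2023bayesian} (Theorem 2.25 for the measurability identification, Theorem 3.16 for the $L^p$ extension), and you have simply written out the details that those references supply. The moment computation via $W_{p,d_Y}^p(\delta_{y_0},F_x)=\int_Y d_Y^p(y_0,y)\,F_x(dy)$ and the Borel-versus-evaluation $\sigma$-algebra point are exactly the two nontrivial steps the cited results cover, so there is nothing to correct.
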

\begin{proof} 
See Theorem 10.4.5 of \citet{bogachev2007measure} for the classic Disintegration Theorem. Also, see Theorem 3.16 in \citet{eikenberry2023bayesian} for the extension to the $L^p$ case and for an interpretation of this result as an equality of actual integrals. 
\end{proof}

\noindent We can use disintegrations to give a precise definition of Bayesian inverse maps. Markov-kernel formulations of Bayesian inference are well-developed in probabilistic programming semantics and in category-theoretic approaches to probability; see, for example, \citet{cho2019disintegration,clerc2017pointless,culbertson2014categorical,fritz2020synthetic}. We again state things for the metric-enhanced case. First, we need to define how to push a measure forward through a Markov, or Markov-Wasserstein, kernel, rather than a deterministic map.

\begin{defn}\label{GeneralizedPushforward}
Let $F:X\to\mathcal{P}_p(Y)$ be a Markov-Wasserstein kernel, and let $\mu\in\mathcal{P}_p(X)$. The \textbf{generalized pushforward} of $\mu$ under $F$, or, alternatively, the \textbf{Kleisli composition}\footnote{The Kleisli terminology comes from the fact that this operation can be viewed as an instance of composition in a Kleisli category. That is, it can be seen as an ordinary pushforward operation into an iterated probability space (i.e., a space of distributions of distributions), followed by an averaging operation. Details on Kleisli categories can be found in \citet{fritz2020synthetic}, among other sources, but we will only need the more straightforward definition given here.} of $F$ and $\mu$, is the probability measure $F\odot \mu$ in $\mathcal{P}_p(Y)$ defined by  
\begin{equation}
(F\odot\mu)(B) = \int_X F(B\vert x)\mu(dx)
\end{equation}
for $B\in\Sigma_Y$. Note that if $\gamma=\int_X\big(\delta_x\otimes F_x\big)\mu(dx)$, then $\gamma^2=F\odot \mu$, and we call this the \textbf{marginal likelihood} in Bayesian contexts.  
\end{defn}

\begin{defn}\label{BayesianInverses}
Let $F\in L^p_{\mu}(X,\mathcal{P}_p(Y))$, and set $\nu\coloneqq F\odot \mu$. Then, the \textbf{Bayesian inverse} $F^{\dagger}_{\mu}$ of $F$ with respect to $\mu$ is the $\nu$-almost surely unique Markov-Wasserstein kernel in $L^p_{\nu}(Y,\mathcal{P}_p(X))$ satisfying 
\begin{equation}\label{BayesLawAbstract}
\int_X \big(\delta_x\otimes F(x)\big)\mu(dx)=\int_Y \big(F^{\dagger}_{\mu}(y)\otimes \delta_y\big) (F\odot\mu)(dy). 
\end{equation}
Note that $F^{\dagger}_{\mu}$ is guaranteed to exist by Theorem \ref{DisintegrationTheorem}. 
\end{defn}

\noindent Finally, we submit the following as a highly general definition of an autoencoder that captures the statistical essence of many models without any secondary constraints, such as smoothness, determinism, or stricter inversion requirements. 

\begin{defn}
Let $\mu_X\in\mathcal{P}_p(X)$ be a (data) distribution. Then, a \textbf{probabilistic autoencoder}, or simply an \textbf{autoencoder}, for $\mu_X$ is a pair $(E, D)$ of Markov-Wasserstein kernels $E:X\to\mathcal{P}_p(Z)$ and $D:Z \to \mathcal{P}_p(X)$ such that $D$ is a Bayesian inverse of $E$ with respect to $\mu_X$. That is, 
\begin{equation}
\int_X \big(\delta_x\otimes E(x)\big)\mu_X(dx)=\int_Z \big(D(z)\otimes \delta_z\big) (E\odot\mu_X)(dz). 
\end{equation} 
(We switch to $Z$ here instead of $Y$ to match the convention for latent spaces.)
\end{defn}

\noindent In practice, we work with deterministic (auto)encoders (specifically simple MLPs for our tests), which can still be seen as Markov-Wasserstein kernels (or simply Markov kernels) via Dirac embeddings. That is, given a measurable function $f:X\to Z$, we obtain a Markov kernel $F:X \to \mathcal{P}(Z)$ by setting $F(x) = \delta_{f(x)}$. Further, we can place additional integrability or smoothness constraints on $f$, which, in turn, induce constraints on the corresponding kernel $F$. In the main text, we show that even when starting with a deterministic encoder, data augmentation naturally leads to more general, probabilistic encoders as the main object of study. 

\section{Experiment Details}\label{ExperimentDetails}
\noindent We use simple MLPs with ReLU activations for all tests. For the adapter networks $E_{\theta}$, the dimensions are of the form $(\text{input}, \text{hidden}, \text{output}) = (d, 4096, d)$, where $d$ is the dimension of the pretrained latent space, namely $d=384$ for DINO (transformer backbone), $d=512$ for CLIP, or $d=2048$ for SwAV, R-DINO (DINO with ResNet backbone), and ResNet50. For dimensionality reduction with WaCo, the networks have dimensions $(d, 4096, d/4)$: concretely, $(384, 4096, 96)$ for DINO, $(512, 4096, 128)$ for CLIP, and $(2048, 4096, 512)$ for SwAV, R-DINO, and ResNet50. Our final MNIST test uses a one-hidden-layer network with dimensions $(784, 4096, 64)$.

For classifiers, we use three configurations: a linear classifier of dimensions $(d, k)$, a nonlinear classifier with one hidden layer of dimensions $(d, 4096, k)$, and an end-to-end classifier of dimensions $(d, 4096, 4096, k)$, where $k$ is the number of classes. When classifying from a WaCo-reduced latent space of dimension $d/4$, the linear and nonlinear classifiers start from $d/4$ instead of $d$.

In all of our tests, at least for the MaWa and WaCo losses, we found comparable results across a wide range of choices for the batch size, number of epochs, learning rate, optimizer, scheduler, and number of augmentation samples. Default settings are listed in Table~\ref{tab:hyperparameters}. The one exception is that for SimCLR, we set the batch size to $1024$ to match the total number of samples used in the other three losses, namely, $3\times 256$ augmented samples plus an additional $256$ non-augmented samples for a total of $1024$.  

All models were trained on either a single NVIDIA RTX 3070 GPU, a NVIDIA RTX 3090 GPU, or a NVIDIA A100 GPU. Compute times are fairly modest (on the order of tens of minutes), but when training for invariance to complicated, composite augmentations, training can extend to multiple hours when using only a single GPU. Applying multiple augmentations and extracting pretrained features are two of the main bottlenecks.

\begin{table}[htbp]
  \caption{Default hyperparameters}
  \label{tab:hyperparameters}
  \centering
  \renewcommand{\arraystretch}{1.25}
  \begin{tabular}{ll}
    \toprule
    \textbf{Parameter} & \textbf{Value} \\
    \midrule
    Batch size & $256$ \\
    Epochs & $100$ \\
    Learning rate & $1 \times 10^{-3}$ \\
    Optimizer & AdamW with weight decay $1 \times 10^{-4}$ \\
    Scheduler & CosineAnnealing with minimum rate $4 \times 10^{-4}$ \\
    Augmentation Samples $s$ & 3\\
    \bottomrule
  \end{tabular}
\end{table}

For our invariance tests, all classifiers are trained on a standard cross-entropy loss, and the optimizer and scheduler settings are the same as in Table \ref{tab:hyperparameters} but without weight decay for the optimizer (meaning that we are effectively using an ordinary Adam optimizer rather than AdamW \citep{loshchilov2017decoupled} for classification tests). 

We use the Kornia package \citep{riba2020kornia} for augmentations. We vary the augmentation parameters for each element of the batch ($\text{same\_on\_batch}=\text{False})$, but, in fact, results are comparable when the same augmentation is applied across the batch, which leads to better computational efficiency, when needed. Default augmentation parameters are listed in Table \ref{tab:augmentationparameters}. We take, for example, $50$ to $70$ percent crops with aspect ratio $(0.75, 1.33)$ and then resize to the original dimensions of the data.

\begin{table}[htbp]
  \caption{Default augmentation parameters}
  \label{tab:augmentationparameters}
  \centering
  \renewcommand{\arraystretch}{1.25}
  \begin{tabular}{l@{\hspace{0.5cm}}l@{\hspace{0.5cm}}l}
    \toprule
    \textbf{Augmentation} & \multicolumn{2}{l}{\textbf{Parameters}} \\
    \midrule
    RandomRotation & $\text{degrees } = (-180, 180)$ & \\
    \addlinespace[2pt]
    RandomAffine & $\text{degrees }=(-30, 30)$, & $\text{translate } = (0.2, 0.2)$, \\
                 & $\text{scale } = (0.8, 1.2)$, & $\text{shear } = (-15, 15)$ \\
    \addlinespace[2pt]
    RandomGaussianNoise & $\text{mean } = 0$, & $\text{std }=1$ \\
    \addlinespace[2pt]
    RandomResizedCrop & $\text{scale } = (0.5, 0.7)$, & $\text{resize to } = (0.75, 1.33)$\\
    \bottomrule
  \end{tabular}

\end{table}

\section{Additional Experiments}\label{AdditionalExperiments}

\noindent We report here the full suite of structure-preservation and classification results for all pretrained networks and augmentation types not included in the main text. Specifically, we provide:
\begin{itemize}[nosep]
\item Structure-preservation tables for DINO on affine and crop augmentations and SimCLR/HSIC comparisons (Tables~\ref{tab:structure-dino-mawa-waco-affine}--\ref{tab:structure-dino-sclr-hsic-crop}).
\item Remaining results for SwAV, including affine/crop tables for MaWa and WaCo losses as well as full SimCLR/HSIC comparisons (Tables~\ref{tab:structure-swav-mawa-waco-affine}--\ref{tab:structure-swav-sclr-hsic-crop}).
\item Results for R-DINO (DINO with ResNet backbone, $d=2048$) on all augmentation types (Tables~\ref{tab:structure-rdino-rotation}--\ref{tab:structure-rdino-crop}).
\item Results for CLIP ($d=512$) on all augmentation types (Tables~\ref{tab:structure-clip-rotation}--\ref{tab:structure-clip-crop}).
\item Results for supervised ResNet50 ($d=2048$) on all augmentation types (Tables~\ref{tab:structure-resnet50-rotation}--\ref{tab:structure-resnet50-crop}).
\item Results on TinyImageNet (200 classes) and a 50-class TinyImageNet subset (Tables~\ref{tab:structure-tinyimagenet-rotation}--\ref{tab:structure-tinyimagenet50-crop}) for DINO. The subset was selected via a greedy procedure to maximize pairwise class separation in the DINO latent space.
\item Collision rate tables for all models not reported in the main text (Tables~\ref{tab:collision-rdino}--\ref{tab:collision-tinyimagenet50}).
\end{itemize}
The overall patterns are consistent with those reported in the main text: The MaWa loss achieves the strongest combination of invariance and structure preservation across all self-supervised models, the WaCo loss provides a competitive alternative with the additional capability of dimensionality reduction, and the SimCLR and HSIC losses consistently corrupt the pretrained latent space. The supervised ResNet50 is the one setting where MaWa fails, as predicted by its high residual collision rates (see Table~\ref{tab:collision-resnet50} in the main text).


\begin{table}[htbp]
  \caption{MaWa and WaCo affine-invariant encoders on DINO features.}
  \label{tab:structure-dino-mawa-waco-affine}
  \centering
  \renewcommand{\arraystretch}{1.25}
  \setlength{\tabcolsep}{4pt}
  \begin{tabular}{ll ccc}
    \toprule
    \multicolumn{2}{l}{Structure} & \multicolumn{3}{c}{STL10 + DINO + Affine} \\
    \cmidrule(lr){3-5}
    Preservation & & MaWa ($d=384$) & WaCo ($d=384$) & WaCo ($d=96$) \\
    \midrule
    Similarity & \pair{$L_1$}{$L_2$} & \pair{0.82}{1.07} & \pair{0.46}{1.66} & \pair{0.46}{1.36} \\
    Tests & \pair{$m$}{$b$} & \pair{1.00}{-0.61} & \pair{1.16}{-20.56} & \pair{1.19}{-27.25} \\
    & $R^2$ & 0.98 & 0.63 & 0.74 \\
    & RMSD & 1.37 & 7.90 & 8.96 \\
    & NRMSD & 0.01 & 0.08 & 0.09 \\
    & CVRMSD & 0.01 & 0.07 & 0.08 \\
    \midrule
    Classification & LC & \pair{98.19}{97.20} & \pair{98.19}{96.33} & \pair{97.51}{95.97} \\
      & NC & \pair{98.65}{97.43} & \pair{98.42}{96.83} & \pair{98.21}{96.85} \\
     & EC & \pair{98.58}{94.65} & \pair{98.58}{94.65} & \pair{98.58}{94.65} \\
    \bottomrule
  \end{tabular}
\end{table}

\begin{table}[htbp]
  \caption{MaWa and WaCo crop-invariant encoders on DINO features.}
  \label{tab:structure-dino-mawa-waco-crop}
  \centering
  \renewcommand{\arraystretch}{1.25}
  \setlength{\tabcolsep}{4pt}
  \begin{tabular}{ll ccc}
    \toprule
    \multicolumn{2}{l}{Structure} & \multicolumn{3}{c}{STL10 + DINO + Crop} \\
    \cmidrule(lr){3-5}
    Preservation & & MaWa ($d=384$) & WaCo ($d=384$) & WaCo ($d=96$) \\
    \midrule
    Similarity & \pair{$L_1$}{$L_2$} & \pair{0.80}{1.04} & \pair{0.49}{1.85} & \pair{0.52}{1.55} \\
    Tests & \pair{$m$}{$b$} & \pair{1.00}{-2.10} & \pair{1.20}{-20.86} & \pair{1.18}{-25.41} \\
    & $R^2$ & 0.95 & 0.62 & 0.76 \\
    & RMSD & 2.57 & 8.12 & 7.66 \\
    & NRMSD & 0.02 & 0.08 & 0.07 \\
    & CVRMSD & 0.02 & 0.07 & 0.07 \\
    \midrule
    Classification & LC & \pair{98.20}{98.11} & \pair{97.97}{97.66} & \pair{97.65}{97.14} \\
      & NC & \pair{98.66}{98.36} & \pair{98.32}{98.00} & \pair{98.50}{98.12} \\
     & EC & \pair{98.58}{98.10} & \pair{98.58}{98.10} & \pair{98.58}{98.10} \\
    \bottomrule
  \end{tabular}
\end{table}


\begin{table}[htbp]
  \caption{SimCLR and HSIC affine-invariant encoders on DINO features.}
  \label{tab:structure-dino-sclr-hsic-affine}
  \centering
  \renewcommand{\arraystretch}{1.25}
  \setlength{\tabcolsep}{4pt}
  \begin{tabular}{ll cc}
    \toprule
    \multicolumn{2}{l}{Structure} & \multicolumn{2}{c}{STL10 + DINO + Affine} \\
    \cmidrule(lr){3-4}
    Preservation & & SCLR ($d=384$) & HSIC ($d=384$) \\
    \midrule
    Similarity & \pair{$L_1$}{$L_2$} & \pair{0.12}{13.95} & \pair{0.20}{8.28} \\
    Tests & \pair{$m$}{$b$} & \pair{4.32}{-189.90} & \pair{2.27}{-83.79} \\
    & $R^2$ & 0.06 & 0.03 \\
    & RMSD & 231.68 & 128.49 \\
    & NRMSD & 2.24 & 1.24 \\
    & CVRMSD & 2.08 & 1.15 \\
    \midrule
    Classification & LC & \pair{97.40}{95.90} & \pair{96.73}{90.64} \\
      & NC & \pair{97.41}{95.71} & \pair{97.24}{92.10} \\
     & EC & \pair{98.58}{94.65} & \pair{98.58}{94.65} \\
    \bottomrule
  \end{tabular}
\end{table}

\begin{table}[htbp]
  \caption{SimCLR and HSIC crop-invariant encoders on DINO features.}
  \label{tab:structure-dino-sclr-hsic-crop}
  \centering
  \renewcommand{\arraystretch}{1.25}
  \setlength{\tabcolsep}{4pt}
  \begin{tabular}{ll cc}
    \toprule
    \multicolumn{2}{l}{Structure} & \multicolumn{2}{c}{STL10 + DINO + Crop} \\
    \cmidrule(lr){3-4}
    Preservation & & SCLR ($d=384$) & HSIC ($d=384$) \\
    \midrule
    Similarity & \pair{$L_1$}{$L_2$} & \pair{0.03}{6.72} & \pair{0.25}{11.87} \\
    Tests & \pair{$m$}{$b$} & \pair{2.71}{-146.27} & \pair{3.70}{-106.78} \\
    & $R^2$ & 0.06 & 0.03 \\
    & RMSD & 104.04 & 255.11 \\
    & NRMSD & 1.01 & 2.47 \\
    & CVRMSD & 0.93 & 2.29 \\
    \midrule
    Classification & LC & \pair{97.85}{97.60} & \pair{95.06}{93.93} \\
      & NC & \pair{97.92}{97.64} & \pair{96.56}{95.76} \\
     & EC & \pair{98.58}{98.10} & \pair{98.58}{98.10} \\
    \bottomrule
  \end{tabular}
\end{table}


\begin{table}[htbp]
  \caption{MaWa and WaCo affine-invariant encoders on SwAV features.}
  \label{tab:structure-swav-mawa-waco-affine}
  \centering
  \renewcommand{\arraystretch}{1.25}
  \setlength{\tabcolsep}{4pt}
  \begin{tabular}{ll ccc}
    \toprule
    \multicolumn{2}{l}{Structure} & \multicolumn{3}{c}{STL10 + SwAV + Affine} \\
    \cmidrule(lr){3-5}
    Preservation & & MaWa ($d=2048$) & WaCo ($d=2048$) & WaCo ($d=512$) \\
    \midrule
    Similarity & \pair{$L_1$}{$L_2$} & \pair{0.46}{0.94} & \pair{0.39}{1.85} & \pair{0.42}{1.47} \\
    Tests & \pair{$m$}{$b$} & \pair{0.95}{-1.71} & \pair{1.69}{-6.70} & \pair{1.26}{-2.90} \\
    & $R^2$ & 0.90 & 0.82 & 0.82 \\
    & RMSD & 2.32 & 2.10 & 1.02 \\
    & NRMSD & 0.13 & 0.12 & 0.06 \\
    & CVRMSD & 0.20 & 0.18 & 0.09 \\
    \midrule
    Classification & LC & \pair{93.70}{88.42} & \pair{90.09}{86.00} & \pair{91.39}{87.70} \\
      & NC & \pair{93.46}{87.74} & \pair{90.26}{86.05} & \pair{91.38}{87.64} \\
     & EC & \pair{94.23}{76.06} & \pair{94.23}{76.06} & \pair{94.23}{76.06} \\
    \bottomrule
  \end{tabular}
\end{table}

\begin{table}[htbp]
  \caption{MaWa and WaCo crop-invariant encoders on SwAV features.}
  \label{tab:structure-swav-mawa-waco-crop}
  \centering
  \renewcommand{\arraystretch}{1.25}
  \setlength{\tabcolsep}{4pt}
  \begin{tabular}{ll ccc}
    \toprule
    \multicolumn{2}{l}{Structure} & \multicolumn{3}{c}{STL10 + SwAV + Crop} \\
    \cmidrule(lr){3-5}
    Preservation & & MaWa ($d=2048$) & WaCo ($d=2048$) & WaCo ($d=512$) \\
    \midrule
    Similarity & \pair{$L_1$}{$L_2$} & \pair{0.41}{0.93} & \pair{0.32}{1.78} & \pair{0.36}{1.47} \\
    Tests & \pair{$m$}{$b$} & \pair{0.91}{-1.65} & \pair{1.72}{-7.00} & \pair{1.38}{-4.25} \\
    & $R^2$ & 0.87 & 0.83 & 0.86 \\
    & RMSD & 2.77 & 2.10 & 1.07 \\
    & NRMSD & 0.15 & 0.12 & 0.06 \\
    & CVRMSD & 0.24 & 0.18 & 0.09 \\
    \midrule
    Classification & LC & \pair{93.73}{94.32} & \pair{90.74}{91.39} & \pair{92.07}{92.52} \\
      & NC & \pair{93.58}{93.98} & \pair{91.14}{91.63} & \pair{92.12}{92.34} \\
     & EC & \pair{94.23}{94.33} & \pair{94.23}{94.33} & \pair{94.23}{94.33} \\
    \bottomrule
  \end{tabular}
\end{table}


\begin{table}[htbp]
  \caption{SimCLR and HSIC rotation-invariant encoders on SwAV features.}
  \label{tab:structure-swav-sclr-hsic-rotation}
  \centering
  \renewcommand{\arraystretch}{1.25}
  \setlength{\tabcolsep}{4pt}
  \begin{tabular}{ll cc}
    \toprule
    \multicolumn{2}{l}{Structure} & \multicolumn{2}{c}{STL10 + SwAV + Rotation} \\
    \cmidrule(lr){3-4}
    Preservation & & SCLR ($d=2048$) & HSIC ($d=2048$) \\
    \midrule
    Similarity & \pair{$L_1$}{$L_2$} & \pair{0.25}{15.98} & \pair{0.07}{0.83} \\
    Tests & \pair{$m$}{$b$} & \pair{3.80}{-0.56} & \pair{0.09}{2.38} \\
    & $R^2$ & 0.08 & 0.04 \\
    & RMSD & 37.92 & 8.29 \\
    & NRMSD & 2.11 & 0.46 \\
    & CVRMSD & 3.28 & 0.72 \\
    \midrule
    Classification & LC & \pair{79.50}{76.31} & \pair{71.90}{69.49} \\
      & NC & \pair{79.69}{76.78} & \pair{81.67}{77.83} \\
     & EC & \pair{94.23}{50.35} & \pair{94.23}{50.35} \\
    \bottomrule
  \end{tabular}
\end{table}

\begin{table}[htbp]
  \caption{SimCLR and HSIC noise-invariant encoders on SwAV features.}
  \label{tab:structure-swav-sclr-hsic-noise}
  \centering
  \renewcommand{\arraystretch}{1.25}
  \setlength{\tabcolsep}{4pt}
  \begin{tabular}{ll cc}
    \toprule
    \multicolumn{2}{l}{Structure} & \multicolumn{2}{c}{STL10 + SwAV + Noise} \\
    \cmidrule(lr){3-4}
    Preservation & & SCLR ($d=2048$) & HSIC ($d=2048$) \\
    \midrule
    Similarity & \pair{$L_1$}{$L_2$} & \pair{0.11}{9.56} & \pair{0.06}{0.86} \\
    Tests & \pair{$m$}{$b$} & \pair{1.28}{10.79} & \pair{0.04}{3.12} \\
    & $R^2$ & 0.02 & 0.00 \\
    & RMSD & 20.18 & 8.19 \\
    & NRMSD & 1.12 & 0.46 \\
    & CVRMSD & 1.75 & 0.71 \\
    \midrule
    Classification & LC & \pair{73.94}{46.88} & \pair{65.83}{59.97} \\
      & NC & \pair{76.39}{47.06} & \pair{74.38}{65.05} \\
     & EC & \pair{94.23}{40.82} & \pair{94.23}{40.82} \\
    \bottomrule
  \end{tabular}
\end{table}


\begin{table}[htbp]
  \caption{SimCLR and HSIC affine-invariant encoders on SwAV features.}
  \label{tab:structure-swav-sclr-hsic-affine}
  \centering
  \renewcommand{\arraystretch}{1.25}
  \setlength{\tabcolsep}{4pt}
  \begin{tabular}{ll cc}
    \toprule
    \multicolumn{2}{l}{Structure} & \multicolumn{2}{c}{STL10 + SwAV + Affine} \\
    \cmidrule(lr){3-4}
    Preservation & & SCLR ($d=2048$) & HSIC ($d=2048$) \\
    \midrule
    Similarity & \pair{$L_1$}{$L_2$} & \pair{0.39}{15.23} & \pair{0.07}{0.70} \\
    Tests & \pair{$m$}{$b$} & \pair{5.08}{-8.27} & \pair{0.10}{2.06} \\
    & $R^2$ & 0.15 & 0.07 \\
    & RMSD & 44.08 & 8.52 \\
    & NRMSD & 2.45 & 0.47 \\
    & CVRMSD & 3.82 & 0.74 \\
    \midrule
    Classification & LC & \pair{87.56}{85.62} & \pair{81.24}{78.98} \\
      & NC & \pair{87.78}{86.06} & \pair{87.96}{85.62} \\
     & EC & \pair{94.23}{76.06} & \pair{94.23}{76.06} \\
    \bottomrule
  \end{tabular}
\end{table}

\begin{table}[htbp]
  \caption{SimCLR and HSIC crop-invariant encoders on SwAV features.}
  \label{tab:structure-swav-sclr-hsic-crop}
  \centering
  \renewcommand{\arraystretch}{1.25}
  \setlength{\tabcolsep}{4pt}
  \begin{tabular}{ll cc}
    \toprule
    \multicolumn{2}{l}{Structure} & \multicolumn{2}{c}{STL10 + SwAV + Crop} \\
    \cmidrule(lr){3-4}
    Preservation & & SCLR ($d=2048$) & HSIC ($d=2048$) \\
    \midrule
    Similarity & \pair{$L_1$}{$L_2$} & \pair{0.12}{14.15} & \pair{0.07}{0.77} \\
    Tests & \pair{$m$}{$b$} & \pair{5.80}{-26.67} & \pair{0.10}{2.01} \\
    & $R^2$ & 0.21 & 0.09 \\
    & RMSD & 34.84 & 8.55 \\
    & NRMSD & 1.94 & 0.48 \\
    & CVRMSD & 3.02 & 0.74 \\
    \midrule
    Classification & LC & \pair{92.92}{93.34} & \pair{87.95}{87.94} \\
      & NC & \pair{92.33}{92.84} & \pair{92.67}{92.69} \\
     & EC & \pair{94.23}{94.33} & \pair{94.23}{94.33} \\
    \bottomrule
  \end{tabular}
\end{table}


\begin{table}[htbp]
  \caption{MaWa and WaCo rotation-invariant encoders on R-DINO features.}
  \label{tab:structure-rdino-rotation}
  \centering
  \renewcommand{\arraystretch}{1.25}
  \setlength{\tabcolsep}{4pt}
  \begin{tabular}{ll ccc}
    \toprule
    \multicolumn{2}{l}{Structure} & \multicolumn{3}{c}{STL10 + R-DINO + Rotation} \\
    \cmidrule(lr){3-5}
    Preservation & & MaWa ($d=2048$) & WaCo ($d=2048$) & WaCo ($d=512$) \\
    \midrule
    Similarity & \pair{$L_1$}{$L_2$} & \pair{0.47}{0.96} & \pair{0.35}{2.42} & \pair{0.45}{1.73} \\
    Tests & \pair{$m$}{$b$} & \pair{0.92}{-1.32} & \pair{1.84}{-8.31} & \pair{1.47}{-4.76} \\
    & $R^2$ & 0.95 & 0.88 & 0.89 \\
    & RMSD & 2.29 & 2.88 & 1.72 \\
    & NRMSD & 0.11 & 0.14 & 0.09 \\
    & CVRMSD & 0.20 & 0.25 & 0.15 \\
    \midrule
    Classification & LC & \pair{94.25}{82.71} & \pair{85.89}{77.72} & \pair{88.75}{81.38} \\
      & NC & \pair{93.77}{82.04} & \pair{86.59}{78.40} & \pair{89.50}{81.66} \\
     & EC & \pair{94.76}{53.11} & \pair{94.76}{53.11} & \pair{94.76}{53.11} \\
    \bottomrule
  \end{tabular}
\end{table}

\begin{table}[htbp]
  \caption{MaWa and WaCo affine-invariant encoders on R-DINO features.}
  \label{tab:structure-rdino-affine}
  \centering
  \renewcommand{\arraystretch}{1.25}
  \setlength{\tabcolsep}{4pt}
  \begin{tabular}{ll ccc}
    \toprule
    \multicolumn{2}{l}{Structure} & \multicolumn{3}{c}{STL10 + R-DINO + Affine} \\
    \cmidrule(lr){3-5}
    Preservation & & MaWa ($d=2048$) & WaCo ($d=2048$) & WaCo ($d=512$) \\
    \midrule
    Similarity & \pair{$L_1$}{$L_2$} & \pair{0.50}{0.96} & \pair{0.26}{2.58} & \pair{0.38}{2.01} \\
    Tests & \pair{$m$}{$b$} & \pair{0.93}{-1.29} & \pair{1.81}{-8.06} & \pair{1.63}{-6.05} \\
    & $R^2$ & 0.95 & 0.83 & 0.86 \\
    & RMSD & 2.14 & 2.97 & 2.47 \\
    & NRMSD & 0.11 & 0.15 & 0.12 \\
    & CVRMSD & 0.18 & 0.26 & 0.21 \\
    \midrule
    Classification & LC & \pair{94.41}{89.00} & \pair{88.61}{84.39} & \pair{91.50}{87.17} \\
      & NC & \pair{93.97}{88.27} & \pair{89.18}{84.69} & \pair{91.55}{86.82} \\
     & EC & \pair{94.76}{79.27} & \pair{94.76}{79.27} & \pair{94.76}{79.27} \\
    \bottomrule
  \end{tabular}
\end{table}

\begin{table}[htbp]
  \caption{MaWa and WaCo noise-invariant encoders on R-DINO features.}
  \label{tab:structure-rdino-noise}
  \centering
  \renewcommand{\arraystretch}{1.25}
  \setlength{\tabcolsep}{4pt}
  \begin{tabular}{ll ccc}
    \toprule
    \multicolumn{2}{l}{Structure} & \multicolumn{3}{c}{STL10 + R-DINO + Noise} \\
    \cmidrule(lr){3-5}
    Preservation & & MaWa ($d=2048$) & WaCo ($d=2048$) & WaCo ($d=512$) \\
    \midrule
    Similarity & \pair{$L_1$}{$L_2$} & \pair{0.58}{0.96} & \pair{0.28}{2.57} & \pair{0.43}{1.64} \\
    Tests & \pair{$m$}{$b$} & \pair{0.93}{-1.02} & \pair{2.04}{-10.03} & \pair{1.21}{-2.42} \\
    & $R^2$ & 0.96 & 0.84 & 0.86 \\
    & RMSD & 1.90 & 3.74 & 1.23 \\
    & NRMSD & 0.09 & 0.19 & 0.06 \\
    & CVRMSD & 0.16 & 0.32 & 0.11 \\
    \midrule
    Classification & LC & \pair{94.51}{75.12} & \pair{86.17}{68.03} & \pair{88.61}{71.00} \\
      & NC & \pair{94.12}{75.07} & \pair{86.78}{67.94} & \pair{88.78}{70.69} \\
     & EC & \pair{94.76}{35.97} & \pair{94.76}{35.97} & \pair{94.76}{35.97} \\
    \bottomrule
  \end{tabular}
\end{table}

\begin{table}[htbp]
  \caption{MaWa and WaCo crop-invariant encoders on R-DINO features.}
  \label{tab:structure-rdino-crop}
  \centering
  \renewcommand{\arraystretch}{1.25}
  \setlength{\tabcolsep}{4pt}
  \begin{tabular}{ll ccc}
    \toprule
    \multicolumn{2}{l}{Structure} & \multicolumn{3}{c}{STL10 + R-DINO + Crop} \\
    \cmidrule(lr){3-5}
    Preservation & & MaWa ($d=2048$) & WaCo ($d=2048$) & WaCo ($d=512$) \\
    \midrule
    Similarity & \pair{$L_1$}{$L_2$} & \pair{0.36}{0.92} & \pair{0.28}{2.24} & \pair{0.36}{1.96} \\
    Tests & \pair{$m$}{$b$} & \pair{0.83}{-1.34} & \pair{1.92}{-9.40} & \pair{1.67}{-6.88} \\
    & $R^2$ & 0.89 & 0.87 & 0.88 \\
    & RMSD & 3.40 & 3.01 & 2.27 \\
    & NRMSD & 0.17 & 0.15 & 0.11 \\
    & CVRMSD & 0.29 & 0.26 & 0.19 \\
    \midrule
    Classification & LC & \pair{94.00}{93.99} & \pair{89.91}{89.78} & \pair{91.19}{91.25} \\
      & NC & \pair{94.01}{93.77} & \pair{90.12}{89.89} & \pair{91.24}{91.06} \\
     & EC & \pair{94.76}{94.37} & \pair{94.76}{94.37} & \pair{94.76}{94.37} \\
    \bottomrule
  \end{tabular}
\end{table}


\begin{table}[htbp]
  \caption{MaWa and WaCo rotation-invariant encoders on CLIP features.}
  \label{tab:structure-clip-rotation}
  \centering
  \renewcommand{\arraystretch}{1.25}
  \setlength{\tabcolsep}{4pt}
  \begin{tabular}{ll ccc}
    \toprule
    \multicolumn{2}{l}{Structure} & \multicolumn{3}{c}{STL10 + CLIP + Rotation} \\
    \cmidrule(lr){3-5}
    Preservation & & MaWa ($d=512$) & WaCo ($d=512$) & WaCo ($d=128$) \\
    \midrule
    Similarity & \pair{$L_1$}{$L_2$} & \pair{0.71}{1.18} & \pair{0.54}{1.43} & \pair{0.52}{1.35} \\
    Tests & \pair{$m$}{$b$} & \pair{0.99}{-0.01} & \pair{1.12}{-0.90} & \pair{1.10}{-0.85} \\
    & $R^2$ & 0.99 & 0.80 & 0.81 \\
    & RMSD & 0.13 & 0.63 & 0.57 \\
    & NRMSD & 0.01 & 0.06 & 0.06 \\
    & CVRMSD & 0.01 & 0.07 & 0.06 \\
    \midrule
    Classification & LC & \pair{99.08}{94.82} & \pair{98.96}{92.60} & \pair{99.00}{92.93} \\
      & NC & \pair{99.06}{94.71} & \pair{98.91}{91.50} & \pair{99.06}{92.62} \\
     & EC & \pair{99.09}{88.16} & \pair{99.09}{88.16} & \pair{99.09}{88.16} \\
    \bottomrule
  \end{tabular}
\end{table}

\begin{table}[htbp]
  \caption{MaWa and WaCo affine-invariant encoders on CLIP features.}
  \label{tab:structure-clip-affine}
  \centering
  \renewcommand{\arraystretch}{1.25}
  \setlength{\tabcolsep}{4pt}
  \begin{tabular}{ll ccc}
    \toprule
    \multicolumn{2}{l}{Structure} & \multicolumn{3}{c}{STL10 + CLIP + Affine} \\
    \cmidrule(lr){3-5}
    Preservation & & MaWa ($d=512$) & WaCo ($d=512$) & WaCo ($d=128$) \\
    \midrule
    Similarity & \pair{$L_1$}{$L_2$} & \pair{0.76}{1.17} & \pair{0.53}{1.34} & \pair{0.52}{1.27} \\
    Tests & \pair{$m$}{$b$} & \pair{1.00}{-0.04} & \pair{1.17}{-1.20} & \pair{1.10}{-0.79} \\
    & $R^2$ & 0.99 & 0.86 & 0.85 \\
    & RMSD & 0.11 & 0.63 & 0.50 \\
    & NRMSD & 0.01 & 0.06 & 0.05 \\
    & CVRMSD & 0.01 & 0.07 & 0.06 \\
    \midrule
    Classification & LC & \pair{99.09}{97.59} & \pair{99.10}{97.14} & \pair{99.02}{97.16} \\
      & NC & \pair{99.02}{97.57} & \pair{98.99}{96.94} & \pair{99.12}{97.08} \\
     & EC & \pair{99.09}{97.07} & \pair{99.09}{97.07} & \pair{99.09}{97.07} \\
    \bottomrule
  \end{tabular}
\end{table}

\begin{table}[htbp]
  \caption{MaWa and WaCo noise-invariant encoders on CLIP features.}
  \label{tab:structure-clip-noise}
  \centering
  \renewcommand{\arraystretch}{1.25}
  \setlength{\tabcolsep}{4pt}
  \begin{tabular}{ll ccc}
    \toprule
    \multicolumn{2}{l}{Structure} & \multicolumn{3}{c}{STL10 + CLIP + Noise} \\
    \cmidrule(lr){3-5}
    Preservation & & MaWa ($d=512$) & WaCo ($d=512$) & WaCo ($d=128$) \\
    \midrule
    Similarity & \pair{$L_1$}{$L_2$} & \pair{0.92}{1.07} & \pair{0.38}{3.94} & \pair{0.57}{1.60} \\
    Tests & \pair{$m$}{$b$} & \pair{1.00}{0.00} & \pair{2.75}{-10.87} & \pair{1.27}{-2.12} \\
    & $R^2$ & 1.00 & 0.31 & 0.80 \\
    & RMSD & 0.05 & 6.59 & 0.77 \\
    & NRMSD & 0.00 & 0.65 & 0.08 \\
    & CVRMSD & 0.01 & 0.74 & 0.09 \\
    \midrule
    Classification & LC & \pair{99.09}{78.35} & \pair{98.54}{72.11} & \pair{98.81}{74.61} \\
      & NC & \pair{99.02}{78.29} & \pair{98.49}{71.47} & \pair{98.89}{73.40} \\
     & EC & \pair{99.09}{54.36} & \pair{99.09}{54.36} & \pair{99.09}{54.36} \\
    \bottomrule
  \end{tabular}
\end{table}

\begin{table}[htbp]
  \caption{MaWa and WaCo crop-invariant encoders on CLIP features.}
  \label{tab:structure-clip-crop}
  \centering
  \renewcommand{\arraystretch}{1.25}
  \setlength{\tabcolsep}{4pt}
  \begin{tabular}{ll ccc}
    \toprule
    \multicolumn{2}{l}{Structure} & \multicolumn{3}{c}{STL10 + CLIP + Crop} \\
    \cmidrule(lr){3-5}
    Preservation & & MaWa ($d=512$) & WaCo ($d=512$) & WaCo ($d=128$) \\
    \midrule
    Similarity & \pair{$L_1$}{$L_2$} & \pair{0.73}{1.07} & \pair{0.53}{1.35} & \pair{0.55}{1.18} \\
    Tests & \pair{$m$}{$b$} & \pair{1.00}{-0.04} & \pair{1.17}{-1.36} & \pair{1.06}{-0.71} \\
    & $R^2$ & 0.99 & 0.88 & 0.88 \\
    & RMSD & 0.13 & 0.52 & 0.45 \\
    & NRMSD & 0.01 & 0.05 & 0.04 \\
    & CVRMSD & 0.02 & 0.06 & 0.05 \\
    \midrule
    Classification & LC & \pair{99.14}{97.80} & \pair{99.11}{97.25} & \pair{99.05}{97.28} \\
      & NC & \pair{99.06}{97.74} & \pair{99.08}{97.09} & \pair{99.02}{97.28} \\
     & EC & \pair{99.09}{97.21} & \pair{99.09}{97.21} & \pair{99.09}{97.21} \\
    \bottomrule
  \end{tabular}
\end{table}


\begin{table}[htbp]
  \caption{MaWa and WaCo rotation-invariant encoders on supervised ResNet50 features.}
  \label{tab:structure-resnet50-rotation}
  \centering
  \renewcommand{\arraystretch}{1.25}
  \setlength{\tabcolsep}{4pt}
  \begin{tabular}{ll ccc}
    \toprule
    \multicolumn{2}{l}{Structure} & \multicolumn{3}{c}{STL10 + ResNet50 + Rotation} \\
    \cmidrule(lr){3-5}
    Preservation & & MaWa ($d=2048$) & WaCo ($d=2048$) & WaCo ($d=512$) \\
    \midrule
    Similarity & \pair{$L_1$}{$L_2$} & \pair{0.00}{0.77} & \pair{0.43}{1.80} & \pair{0.52}{1.47} \\
    Tests & \pair{$m$}{$b$} & \pair{0.50}{-12.51} & \pair{1.32}{-13.68} & \pair{0.98}{-0.68} \\
    & $R^2$ & 0.24 & 0.80 & 0.82 \\
    & RMSD & 48.24 & 11.59 & 5.68 \\
    & NRMSD & 0.47 & 0.11 & 0.06 \\
    & CVRMSD & 0.71 & 0.17 & 0.08 \\
    \midrule
    Classification & LC & \pair{50.90}{26.21} & \pair{89.14}{70.06} & \pair{90.18}{71.63} \\
      & NC & \pair{50.55}{25.38} & \pair{89.99}{72.93} & \pair{91.47}{74.11} \\
     & EC & \pair{92.76}{48.63} & \pair{92.76}{48.63} & \pair{92.76}{48.63} \\
    \bottomrule
  \end{tabular}
\end{table}

\begin{table}[htbp]
  \caption{MaWa and WaCo affine-invariant encoders on supervised ResNet50 features.}
  \label{tab:structure-resnet50-affine}
  \centering
  \renewcommand{\arraystretch}{1.25}
  \setlength{\tabcolsep}{4pt}
  \begin{tabular}{ll ccc}
    \toprule
    \multicolumn{2}{l}{Structure} & \multicolumn{3}{c}{STL10 + ResNet50 + Affine} \\
    \cmidrule(lr){3-5}
    Preservation & & MaWa ($d=2048$) & WaCo ($d=2048$) & WaCo ($d=512$) \\
    \midrule
    Similarity & \pair{$L_1$}{$L_2$} & \pair{0.00}{0.77} & \pair{0.49}{1.76} & \pair{0.55}{1.44} \\
    Tests & \pair{$m$}{$b$} & \pair{0.50}{-12.63} & \pair{1.45}{-23.27} & \pair{1.10}{-6.82} \\
    & $R^2$ & 0.24 & 0.84 & 0.89 \\
    & RMSD & 48.24 & 11.66 & 4.58 \\
    & NRMSD & 0.47 & 0.11 & 0.04 \\
    & CVRMSD & 0.71 & 0.17 & 0.07 \\
    \midrule
    Classification & LC & \pair{50.82}{37.15} & \pair{89.99}{79.72} & \pair{91.62}{81.70} \\
      & NC & \pair{50.92}{37.67} & \pair{91.39}{82.31} & \pair{92.31}{83.62} \\
     & EC & \pair{92.76}{72.80} & \pair{92.76}{72.80} & \pair{92.76}{72.80} \\
    \bottomrule
  \end{tabular}
\end{table}

\begin{table}[htbp]
  \caption{MaWa and WaCo noise-invariant encoders on supervised ResNet50 features.}
  \label{tab:structure-resnet50-noise}
  \centering
  \renewcommand{\arraystretch}{1.25}
  \setlength{\tabcolsep}{4pt}
  \begin{tabular}{ll ccc}
    \toprule
    \multicolumn{2}{l}{Structure} & \multicolumn{3}{c}{STL10 + ResNet50 + Noise} \\
    \cmidrule(lr){3-5}
    Preservation & & MaWa ($d=2048$) & WaCo ($d=2048$) & WaCo ($d=512$) \\
    \midrule
    Similarity & \pair{$L_1$}{$L_2$} & \pair{0.00}{0.76} & \pair{0.29}{2.14} & \pair{0.49}{1.53} \\
    Tests & \pair{$m$}{$b$} & \pair{0.50}{-12.81} & \pair{1.74}{-38.74} & \pair{1.01}{-3.68} \\
    & $R^2$ & 0.24 & 0.81 & 0.85 \\
    & RMSD & 48.20 & 17.14 & 5.69 \\
    & NRMSD & 0.47 & 0.17 & 0.06 \\
    & CVRMSD & 0.71 & 0.25 & 0.08 \\
    \midrule
    Classification & LC & \pair{51.12}{22.92} & \pair{88.61}{56.74} & \pair{90.50}{59.01} \\
      & NC & \pair{50.70}{22.16} & \pair{89.94}{60.20} & \pair{91.56}{61.49} \\
     & EC & \pair{92.76}{40.54} & \pair{92.76}{40.54} & \pair{92.76}{40.54} \\
    \bottomrule
  \end{tabular}
\end{table}

\begin{table}[htbp]
  \caption{MaWa and WaCo crop-invariant encoders on supervised ResNet50 features.}
  \label{tab:structure-resnet50-crop}
  \centering
  \renewcommand{\arraystretch}{1.25}
  \setlength{\tabcolsep}{4pt}
  \begin{tabular}{ll ccc}
    \toprule
    \multicolumn{2}{l}{Structure} & \multicolumn{3}{c}{STL10 + ResNet50 + Crop} \\
    \cmidrule(lr){3-5}
    Preservation & & MaWa ($d=2048$) & WaCo ($d=2048$) & WaCo ($d=512$) \\
    \midrule
    Similarity & \pair{$L_1$}{$L_2$} & \pair{0.00}{0.77} & \pair{0.50}{1.67} & \pair{0.61}{1.32} \\
    Tests & \pair{$m$}{$b$} & \pair{0.50}{-12.80} & \pair{1.34}{-17.20} & \pair{1.09}{-5.36} \\
    & $R^2$ & 0.24 & 0.87 & 0.92 \\
    & RMSD & 48.22 & 9.29 & 3.92 \\
    & NRMSD & 0.47 & 0.09 & 0.04 \\
    & CVRMSD & 0.71 & 0.14 & 0.06 \\
    \midrule
    Classification & LC & \pair{50.90}{47.93} & \pair{90.77}{89.29} & \pair{91.41}{90.11} \\
      & NC & \pair{50.85}{47.69} & \pair{91.71}{90.42} & \pair{92.47}{91.03} \\
     & EC & \pair{92.76}{90.73} & \pair{92.76}{90.73} & \pair{92.76}{90.73} \\
    \bottomrule
  \end{tabular}
\end{table}


\begin{table}[htbp]
  \caption{MaWa and WaCo rotation-invariant encoders on TinyImageNet DINO features.}
  \label{tab:structure-tinyimagenet-rotation}
  \centering
  \renewcommand{\arraystretch}{1.25}
  \setlength{\tabcolsep}{4pt}
  \begin{tabular}{ll cc}
    \toprule
    \multicolumn{2}{l}{Structure} & \multicolumn{2}{c}{TinyImageNet + DINO + Rotation} \\
    \cmidrule(lr){3-4}
    Preservation & & MaWa ($d=384$) & WaCo ($d=384$) \\
    \midrule
    Similarity & \pair{$L_1$}{$L_2$} & \pair{0.64}{1.18} & \pair{0.46}{1.94} \\
    Tests & \pair{$m$}{$b$} & \pair{0.99}{-0.19} & \pair{0.87}{0.92} \\
    & $R^2$ & 0.84 & 0.29 \\
    & RMSD & 2.59 & 15.07 \\
    & NRMSD & 0.03 & 0.17 \\
    & CVRMSD & 0.02 & 0.14 \\
    \midrule
    Classification & LC & \pair{68.71}{52.31} & \pair{63.37}{43.13} \\
      & NC & \pair{70.19}{51.24} & \pair{61.26}{42.20} \\
     & EC & \pair{69.93}{27.23} & \pair{69.93}{27.23} \\
    \bottomrule
  \end{tabular}
\end{table}

\begin{table}[htbp]
  \caption{MaWa and WaCo affine-invariant encoders on TinyImageNet DINO features.}
  \label{tab:structure-tinyimagenet-affine}
  \centering
  \renewcommand{\arraystretch}{1.25}
  \setlength{\tabcolsep}{4pt}
  \begin{tabular}{ll cc}
    \toprule
    \multicolumn{2}{l}{Structure} & \multicolumn{2}{c}{TinyImageNet + DINO + Affine} \\
    \cmidrule(lr){3-4}
    Preservation & & MaWa ($d=384$) & WaCo ($d=384$) \\
    \midrule
    Similarity & \pair{$L_1$}{$L_2$} & \pair{0.75}{1.15} & \pair{0.43}{3.22} \\
    Tests & \pair{$m$}{$b$} & \pair{0.98}{1.83} & \pair{0.92}{-3.45} \\
    & $R^2$ & 0.90 & 0.34 \\
    & RMSD & 1.95 & 13.51 \\
    & NRMSD & 0.02 & 0.15 \\
    & CVRMSD & 0.02 & 0.13 \\
    \midrule
    Classification & LC & \pair{68.54}{61.83} & \pair{66.89}{55.10} \\
      & NC & \pair{70.75}{60.51} & \pair{65.62}{54.55} \\
     & EC & \pair{69.93}{47.70} & \pair{69.93}{47.70} \\
    \bottomrule
  \end{tabular}
\end{table}

\begin{table}[htbp]
  \caption{MaWa and WaCo noise-invariant encoders on TinyImageNet DINO features.}
  \label{tab:structure-tinyimagenet-noise}
  \centering
  \renewcommand{\arraystretch}{1.25}
  \setlength{\tabcolsep}{4pt}
  \begin{tabular}{ll cc}
    \toprule
    \multicolumn{2}{l}{Structure} & \multicolumn{2}{c}{TinyImageNet + DINO + Noise} \\
    \cmidrule(lr){3-4}
    Preservation & & MaWa ($d=384$) & WaCo ($d=384$) \\
    \midrule
    Similarity & \pair{$L_1$}{$L_2$} & \pair{0.86}{1.19} & \pair{0.52}{1.70} \\
    Tests & \pair{$m$}{$b$} & \pair{0.98}{1.77} & \pair{0.87}{2.74} \\
    & $R^2$ & 0.95 & 0.30 \\
    & RMSD & 1.27 & 13.04 \\
    & NRMSD & 0.01 & 0.14 \\
    & CVRMSD & 0.01 & 0.12 \\
    \midrule
    Classification & LC & \pair{68.52}{38.38} & \pair{63.62}{27.93} \\
      & NC & \pair{70.44}{37.30} & \pair{62.15}{28.67} \\
     & EC & \pair{69.93}{16.59} & \pair{69.93}{16.59} \\
    \bottomrule
  \end{tabular}
\end{table}

\begin{table}[htbp]
  \caption{MaWa and WaCo crop-invariant encoders on TinyImageNet DINO features.}
  \label{tab:structure-tinyimagenet-crop}
  \centering
  \renewcommand{\arraystretch}{1.25}
  \setlength{\tabcolsep}{4pt}
  \begin{tabular}{ll cc}
    \toprule
    \multicolumn{2}{l}{Structure} & \multicolumn{2}{c}{TinyImageNet + DINO + Crop} \\
    \cmidrule(lr){3-4}
    Preservation & & MaWa ($d=384$) & WaCo ($d=384$) \\
    \midrule
    Similarity & \pair{$L_1$}{$L_2$} & \pair{0.82}{1.03} & \pair{0.45}{1.66} \\
    Tests & \pair{$m$}{$b$} & \pair{0.96}{4.01} & \pair{1.02}{-14.40} \\
    & $R^2$ & 0.94 & 0.42 \\
    & RMSD & 1.43 & 13.62 \\
    & NRMSD & 0.02 & 0.15 \\
    & CVRMSD & 0.01 & 0.13 \\
    \midrule
    Classification & LC & \pair{68.42}{73.43} & \pair{67.20}{70.37} \\
      & NC & \pair{70.49}{72.36} & \pair{66.93}{69.30} \\
     & EC & \pair{69.93}{70.26} & \pair{69.93}{70.26} \\
    \bottomrule
  \end{tabular}
\end{table}


\begin{table}[htbp]
  \caption{MaWa and WaCo rotation-invariant encoders on TinyImageNet subset (50 classes) DINO features.}
  \label{tab:structure-tinyimagenet50-rotation}
  \centering
  \renewcommand{\arraystretch}{1.25}
  \setlength{\tabcolsep}{4pt}
  \begin{tabular}{ll cc}
    \toprule
    \multicolumn{2}{l}{Structure} & \multicolumn{2}{c}{TinyImageNetSubset + DINO + Rotation} \\
    \cmidrule(lr){3-4}
    Preservation & & MaWa ($d=384$) & WaCo ($d=384$) \\
    \midrule
    Similarity & \pair{$L_1$}{$L_2$} & \pair{0.71}{1.13} & \pair{0.43}{1.40} \\
    Tests & \pair{$m$}{$b$} & \pair{0.99}{0.05} & \pair{1.27}{-32.94} \\
    & $R^2$ & 0.84 & 0.41 \\
    & RMSD & 2.97 & 10.25 \\
    & NRMSD & 0.03 & 0.12 \\
    & CVRMSD & 0.03 & 0.10 \\
    \midrule
    Classification & LC & \pair{91.52}{83.59} & \pair{89.12}{77.39} \\
      & NC & \pair{93.24}{84.70} & \pair{90.60}{80.50} \\
     & EC & \pair{92.60}{53.17} & \pair{92.60}{53.17} \\
    \bottomrule
  \end{tabular}
\end{table}

\begin{table}[htbp]
  \caption{MaWa and WaCo affine-invariant encoders on TinyImageNet subset (50 classes) DINO features.}
  \label{tab:structure-tinyimagenet50-affine}
  \centering
  \renewcommand{\arraystretch}{1.25}
  \setlength{\tabcolsep}{4pt}
  \begin{tabular}{ll cc}
    \toprule
    \multicolumn{2}{l}{Structure} & \multicolumn{2}{c}{TinyImageNetSubset + DINO + Affine} \\
    \cmidrule(lr){3-4}
    Preservation & & MaWa ($d=384$) & WaCo ($d=384$) \\
    \midrule
    Similarity & \pair{$L_1$}{$L_2$} & \pair{0.80}{1.10} & \pair{0.51}{1.41} \\
    Tests & \pair{$m$}{$b$} & \pair{0.97}{2.32} & \pair{1.19}{-24.64} \\
    & $R^2$ & 0.90 & 0.43 \\
    & RMSD & 2.40 & 9.12 \\
    & NRMSD & 0.03 & 0.10 \\
    & CVRMSD & 0.02 & 0.09 \\
    \midrule
    Classification & LC & \pair{91.88}{88.18} & \pair{89.96}{84.46} \\
      & NC & \pair{92.64}{88.22} & \pair{91.20}{85.96} \\
     & EC & \pair{92.60}{76.97} & \pair{92.60}{76.97} \\
    \bottomrule
  \end{tabular}
\end{table}

\begin{table}[htbp]
  \caption{MaWa and WaCo noise-invariant encoders on TinyImageNet subset (50 classes) DINO features.}
  \label{tab:structure-tinyimagenet50-noise}
  \centering
  \renewcommand{\arraystretch}{1.25}
  \setlength{\tabcolsep}{4pt}
  \begin{tabular}{ll cc}
    \toprule
    \multicolumn{2}{l}{Structure} & \multicolumn{2}{c}{TinyImageNetSubset + DINO + Noise} \\
    \cmidrule(lr){3-4}
    Preservation & & MaWa ($d=384$) & WaCo ($d=384$) \\
    \midrule
    Similarity & \pair{$L_1$}{$L_2$} & \pair{0.90}{1.20} & \pair{0.46}{1.48} \\
    Tests & \pair{$m$}{$b$} & \pair{0.99}{1.24} & \pair{1.28}{-32.81} \\
    & $R^2$ & 0.95 & 0.40 \\
    & RMSD & 1.38 & 10.15 \\
    & NRMSD & 0.02 & 0.12 \\
    & CVRMSD & 0.01 & 0.09 \\
    \midrule
    Classification & LC & \pair{91.44}{72.98} & \pair{89.04}{64.38} \\
      & NC & \pair{93.16}{73.62} & \pair{90.64}{68.26} \\
     & EC & \pair{92.60}{44.25} & \pair{92.60}{44.25} \\
    \bottomrule
  \end{tabular}
\end{table}

\begin{table}[htbp]
  \caption{MaWa and WaCo crop-invariant encoders on TinyImageNet subset (50 classes) DINO features.}
  \label{tab:structure-tinyimagenet50-crop}
  \centering
  \renewcommand{\arraystretch}{1.25}
  \setlength{\tabcolsep}{4pt}
  \begin{tabular}{ll cc}
    \toprule
    \multicolumn{2}{l}{Structure} & \multicolumn{2}{c}{TinyImageNetSubset + DINO + Crop} \\
    \cmidrule(lr){3-4}
    Preservation & & MaWa ($d=384$) & WaCo ($d=384$) \\
    \midrule
    Similarity & \pair{$L_1$}{$L_2$} & \pair{0.84}{1.03} & \pair{0.52}{1.69} \\
    Tests & \pair{$m$}{$b$} & \pair{0.96}{3.63} & \pair{1.31}{-36.27} \\
    & $R^2$ & 0.93 & 0.41 \\
    & RMSD & 2.01 & 10.13 \\
    & NRMSD & 0.02 & 0.12 \\
    & CVRMSD & 0.02 & 0.09 \\
    \midrule
    Classification & LC & \pair{91.56}{93.44} & \pair{89.64}{91.47} \\
      & NC & \pair{93.20}{94.47} & \pair{91.36}{92.36} \\
     & EC & \pair{92.60}{92.72} & \pair{92.60}{92.72} \\
    \bottomrule
  \end{tabular}
\end{table}


\begin{table}[htbp]
  \caption{Collision rates for R-DINO features on STL10 before (raw) and after rigid alignment.}
  \label{tab:collision-rdino}
  \centering
  \renewcommand{\arraystretch}{1.25}
  \begin{tabular}{lcc}
    \toprule
    Augmentation & $\mathrm{CR}$ (\%, raw) & $\mathrm{CR}$ (\%, aligned) \\
    \midrule
    Rotation           & 52.60 & 1.90 \\
    Affine             & 22.00 & 1.90 \\
    Noise              & 69.90 & 10.60 \\
    Crop               & 0.00 & 0.00 \\
    \bottomrule
  \end{tabular}
\end{table}

\begin{table}[htbp]
  \caption{Collision rates for CLIP features on STL10 before (raw) and after rigid alignment.}
  \label{tab:collision-clip}
  \centering
  \renewcommand{\arraystretch}{1.25}
  \begin{tabular}{lcc}
    \toprule
    Augmentation & $\mathrm{CR}$ (\%, raw) & $\mathrm{CR}$ (\%, aligned) \\
    \midrule
    Rotation            & 10.50 & 1.50 \\
    Affine             & 1.00 & 0.20 \\
    Noise              & 54.60 & 23.40 \\
    Crop               & 0.00 & 0.00 \\
    \bottomrule
  \end{tabular}
\end{table}

\begin{table}[htbp]
  \caption{Collision rates for DINO features on TinyImageNet before (raw) and after rigid alignment.}
  \label{tab:collision-tinyimagenet}
  \centering
  \renewcommand{\arraystretch}{1.25}
  \begin{tabular}{lcc}
    \toprule
    Augmentation & $\mathrm{CR}$ (\%, raw) & $\mathrm{CR}$ (\%, aligned) \\
    \midrule
    Rotation           & 37.92 & 7.90 \\
    Affine             & 6.53 & 1.55 \\
    Noise              & 61.91 & 20.39 \\
    Crop               & 0.05 & 0.04 \\
    \bottomrule
  \end{tabular}
\end{table}

\begin{table}[htbp]
  \caption{Collision rates for DINO features on TinyImageNet subset (50 classes) before (raw) and after rigid alignment.}
  \label{tab:collision-tinyimagenet50}
  \centering
  \renewcommand{\arraystretch}{1.25}
  \begin{tabular}{lcc}
    \toprule
    Augmentation & $\mathrm{CR}$ (\%, raw) & $\mathrm{CR}$ (\%, aligned) \\
    \midrule
    Rotation           & 27.20 & 0.84 \\
    Affine             & 3.72 & 0.44 \\
    Noise              & 40.00 & 6.08 \\
    Crop               & 0.04 & 0.00 \\
    \bottomrule
  \end{tabular}
\end{table}

\end{document}